\documentclass[12pt]{article}


\usepackage{graphics}
\usepackage{epsfig}

\usepackage{amssymb}
\usepackage{amsthm}


\usepackage{color, palatino}
\usepackage[top=3cm, bottom=3cm, left=2cm, right=2cm]{geometry}
\usepackage{epsfig}
\usepackage{graphics}
\usepackage{enumerate}
\usepackage{amstext}
\usepackage{amsmath}
\usepackage{pdfsync}


\usepackage{bbm}

\newtheorem{thm}{Theorem}
\newtheorem{lem}{Lemma} 
\newtheorem{prop}{Proposition}

\newtheorem{defn}{Definition}
\newtheorem{assm}{Assumption}

\newtheorem{rem}{Remark}

\newcommand{\new}{\newcommand}

\providecommand{\nor}[1]{\left\lVert {#1} \right\rVert}
\providecommand{\abs}[1]{\left\lvert{#1}\right\rvert}
\providecommand{\set}[1]{\left\{#1\right\}}
\providecommand{\scal}[2]{\left\langle{#1},{#2}\right\rangle}
\newcommand{\col}[1]{\left(\begin{array}{c} #1 \end{array}\right)}


\usepackage{comment}

\new{\R}{\mathbb R}
\new{\C}{\mathbb C}
\new{\N}{\mathbb N}
\new{\Z}{\mathbb Z}
\new{\hh}{\mathcal H}
\new{\vv}{\mathcal V}
\new{\la}{\lambda}
\new{\eps}{\epsilon}
\new{\pphi}{\varphi}
\new{\ran}[1]{\operatorname{ran}\, #1}
\new{\tr}[1]{\operatorname{tr}\left[ #1 \right]}
\new{\supp}[1]{\operatorname{supp}#1}
\new{\inte}[1]{\operatorname{int}#1}
\new{\frecc}{\longrightarrow}
\new{\PP}[1]{{\mathbb P}\left(#1\right)}
\new{\EE}[1]{{\mathbb E}[#1]}
\new{\A}[1]{\mathcal{A}_{#1}}
\new{\id}{\mathbbm{1}}
\new{\dk}{d_K}
\new{\G}{g_\la}
\new{\Gn}{g_{\la_n}}
\new{\RR}{r_\la}
\new{\RRn}{r_{\la_n}}
\new{\ff}{\mathcal{F}}
\new{\ind}{1{\hskip -2.5 pt}\hbox{I}}
\newcommand{\yps}{y}


\begin{document}

\title{Learning Sets  with Separating Kernels}

\author{Ernesto De Vito\thanks{DIMA,  Universit\`a di Genova,  Genova, Italy. E-mail: {\em devito@dima.unige.it}} \and Lorenzo Rosasco\thanks{DIBRIS, Universit\'a di Genova \& Istituto Italiano di Tecnologia, Italy, \&~Massachusetts Institute of Technology, U.S.A.. E-mail: {\em lrosasco@mit.edu}} \and Alessandro Toigo\thanks{Dipartimento di Matematica, Politecnico di Milano, Milano, Italy \& I.N.F.N., Sezione di Milano, Milano, Italy. E-mail: {\em alessandro.toigo@polimi.it}}}

\date{\empty}

\maketitle

\begin{abstract}
  We consider the problem of learning a set from random samples.  We
  show how relevant geometric and topological properties of a set can
  be studied analytically using concepts from the theory of
  reproducing kernel Hilbert spaces.  A new kind of reproducing
  kernel, that we call separating kernel, plays a crucial role in
  our study and is analyzed in detail.  We prove a new analytic
  characterization of the support of a distribution, {that naturally leads to a family of
   regularized learning algorithms which are 
   provably universally consistent and 
     stable with respect to random sampling.}
  Numerical experiments show that the proposed approach is competitive, and
  often better, than other state of the art techniques.\\
\end{abstract}

\tableofcontents

\section{Introduction}

In this paper we study the problem of learning from data the set
  where the data probability distribution is concentrated.  Our study
is more broadly motivated by questions in unsupervised learning, such
as the problem of inferring geometric properties of probability
distributions from random samples.
 
In recent years,   there has been great progress in the  
theory and  algorithms for supervised learning, i.e.~function approximation 
problems from random noisy data \cite{bouboulug04b,cucsma02,degylu96,PogSma03,vapnik98}.  
On the other hand,  while there are a number of  methods and studies in unsupervised learning, 
e.g.  algorithms for  clustering, dimensionality reduction,  dictionary learning
 (see Chapter~14 of~\cite{hatifr01}),  many interesting problems  remain largely unexplored.

 Our analysis  starts with the observation that many studies in unsupervised learning hinge 
on at least one of the following  two assumptions. The first is that the data are distributed 
according  to a probability distribution which is absolutely continuous with respect to a reference 
measure, such  as~the Lebesgue measure. In this case it is possible to define a density and 
the corresponding  density level sets. Studies in this scenario include \cite{bibema09,dewi80,kots93,sthusc05} to name a few. 
Such an assumption prevents considering the case where the data are 
represented in  a high dimensional Euclidean space  but  are
concentrated on a Lebesgue negligible subset, as a lower dimensional submanifold. This motivates the second assumption -- sometimes called 
 {\em manifold assumption} -- postulating that the data lie on  a low dimensional
Riemannian manifold embedded in an Euclidean space.  This latter idea 
has triggered a large number  of different algorithmic and theoretical studies (see  for example \cite{beni03,beni08,yale1,yale2,isomap,RSLLE}).
Though the manifold assumption   has proved useful in some  applications, there are many practical scenarios where it might not be satisfied.
This observation  has motivated considering more general situations such 
as {\em manifold plus noise} models \cite{limaro11,NiSmWe08}, and models where the data are 
described by combinations of more than one manifold \cite{lete10,vimasa05}.

Here we consider a different point of view and work in a  setting where the data 
are described by an abstract probability space and a {\em similarity function} induced by  a reproducing kernel  \cite{smazho08}. 
In this  framework, we consider the basic problem of  estimating the set where 
the data distribution is  concentrated (see  Section \ref{PW}  for a detailed discussion of related works).
A special  class of  reproducing kernels, that we call separating kernels, plays a special 
role in our study. First, it allows to define a suitable metric on the probability space and  makes the support of the distribution well defined; second, it leads to a new analytical characterization 
of the support in terms of the null space of the integral operator {associated to} the reproducing kernel. 

This last result is the key towards a new computational approach to
learn the support from data, since the integral operator can be
approximated with high probability from random samples
\cite{RoBeDe10,smazho08}.  {Estimation of  the null space of the integral
  operator can be unstable, and
  regularization techniques can be used to obtain stable
  estimators.} In this paper we study a class of regularization
  techniques proposed to solve ill-posed problems \cite{enhane} and
  already studied in the context of supervised learning
  \cite{bapero07,logerfo08}.  Regularization is achieved by {\em
    filtering} out the small eigenvalues of the sample empirical
  matrix defined by the kernel.  Different algorithms are defined by
  different filter functions and have different computational
  properties.
Consistency and stability properties for a large class of spectral
filters {and of the corresponding  algorithms are established}  in a
unified framework. Numerical experiments show that the proposed
algorithms  are competitive, and often better, than  other state of
the art techniques.  

The paper is divided into two parts.  The first part includes  Section \ref{sec:basic}, where we establish several 
mathematical results relating reproducing  kernel Hilbert spaces {of functions on a set $X$ and the 
geometry of the set $X$ itself.} In particular, in this section we introduce the concept of  separating 
kernel,  which we further explore in Section \ref{RKHSCR}. 
These results are of interest in their own right, and are at the heart of our approach.
In the second {part of the} paper we discuss the problem of learning the  support from data.
{More precisely, in} Section \ref{sec:learn} we {illustrate some} algorithms for learning the  
support of a distribution  from random {samples. In Section \ref{consistency} we establish universal consistency for the proposed methods and discuss   stability to random sampling.}
We conclude in Section  \ref{sec:disc} and \ref{sec:emp} with some further discussions and some numerical experiments, respectively.
A conference  version of this paper appeared in \cite{deroto10}.
We now start by describing in some more detail our results and discussing some related works.

\subsection{Summary of main results}\label{sec:summary}

In this section we briefly describe the main ideas and results in the paper.

The setting we consider is described by a probability space $(X,\rho)$  and a measurable reproducing kernel $K$ on the set $X$ \cite{aron50}. The data are independent and identically distributed (i.i.d.) samples $x_1, \dots, x_n$, each one drawn from $X$ with probability $\rho$. The reproducing kernel $K$ reflects some prior information on the problem and,  as we discuss in the following,  will also define the geometry of $X$. The goal is to use the sample points $x_1, \dots, x_n$ to estimate the region where the probability measure $\rho$ is concentrated.

To fix some ideas,   the space $X$ can  be thought of as a high-dimensional Euclidean space and  the distribution $\rho$ as  being  concentrated on a region $X_\rho$, which is a   smaller {-- and   potentially lower dimensional --
subset of $X$ (e.g.~a linear} subspace or a manifold).  In this example, the goal is to build from data an estimator $X_n$ which is, with high probability, close to $X_\rho$  with respect to a suitable metric.

{We first note that a precise definition of $X_\rho$  requires some care. If $\rho$ is assumed to have  
a {continuous} density with respect to some fixed reference measure (for example, the Lebesgue measure in the Euclidean space){, then}  the region $X_\rho$ can be easily defined to be the closure of the set of points where the density function is non-zero. {Nevertheless, this assumption} would prevent considering
the situation where the data are concentrated on a ``small'', possibly lower dimensional, subset of $X$.  {Note that, if} the set $X$ {were} endowed with a topological structure and $\rho$ {were} defined on the corresponding Borel $\sigma$-algebra, it {would be} natural to define $X_\rho$ as the support of the measure $\rho$, i.e.~the smallest {\em closed} subset of $X$ having measure one.  However, since the set $X$ is only assumed to be a measurable space, no a priori given topology is available. {Here we also remark} that the definition of $X_\rho$ is not the only point where some further structure on $X$ would be useful.  Indeed, when defining a learning error, a notion of distance between the set $X_\rho$ and its estimator $X_n$ is also needed and hence some metric structure on $X$ is required.}

{{The idea is to use} the properties of  the reproducing kernel $K$  to induce a metric structure {-- and consequently a topology --} on $X$.
Indeed, under some mild technical assumptions on $K$, the function
\[
\dk(x,\yps)= \sqrt{K(x,x)+K(\yps,\yps)-{2}K(x,\yps)} \qquad \forall ~{x,y\in X}
\]
defines a metric on $X$, thus making  $X$ a topological space. {Then, it is natural} to define $X_\rho$ to be the support of $\rho$ with respect to such metric topology. {Moreover, the 
Hausdorff distance $d_H$ induced  by the metric $\dk$  provides  {a notion} of distance between closed sets.}

{The problem we consider  can now be restated as  follows:} we want to learn from data an {estimator} $X_n$ of $X_\rho$, such that {$\lim_{n\to\infty} d_H(X_n,X_\rho)=0$ almost surely.}
While $X_\rho$ is now well defined, it is not clear how to build an estimator from data. 
A main result in the paper, given in Theorem \ref{primo}, 
provides a new analytic characterization of $X_\rho$, which immediately suggests
a new computational solution for the corresponding learning problem. 
To derive and state this result, we introduce a new notion of  reproducing  kernels, called separating kernels,  that, roughly speaking,
 {captures} the sense in which  the reproducing kernel  and the probability distribution need to be related.
We say that  a reproducing kernel Hilbert space {$\hh$} (or equivalently its kernel) {\em separates}   a subset $C\subset X$,  if, for any
$x\not\in C$, there exists $f\in\hh$ such that
$$
f(x) \neq 0 \quad\text{ and }\quad f(\yps)= 0 \quad \forall \yps\in C .
$$
If $K$ separates all possible closed subsets in $X$, we say that it is {\em completely separating}.
Figure~\ref{separability} illustrates the notion of separating kernel in the simple example of the linear kernel in a Euclidean space.

\begin{figure}[t!]
\begin{center}
\includegraphics[width=5in]{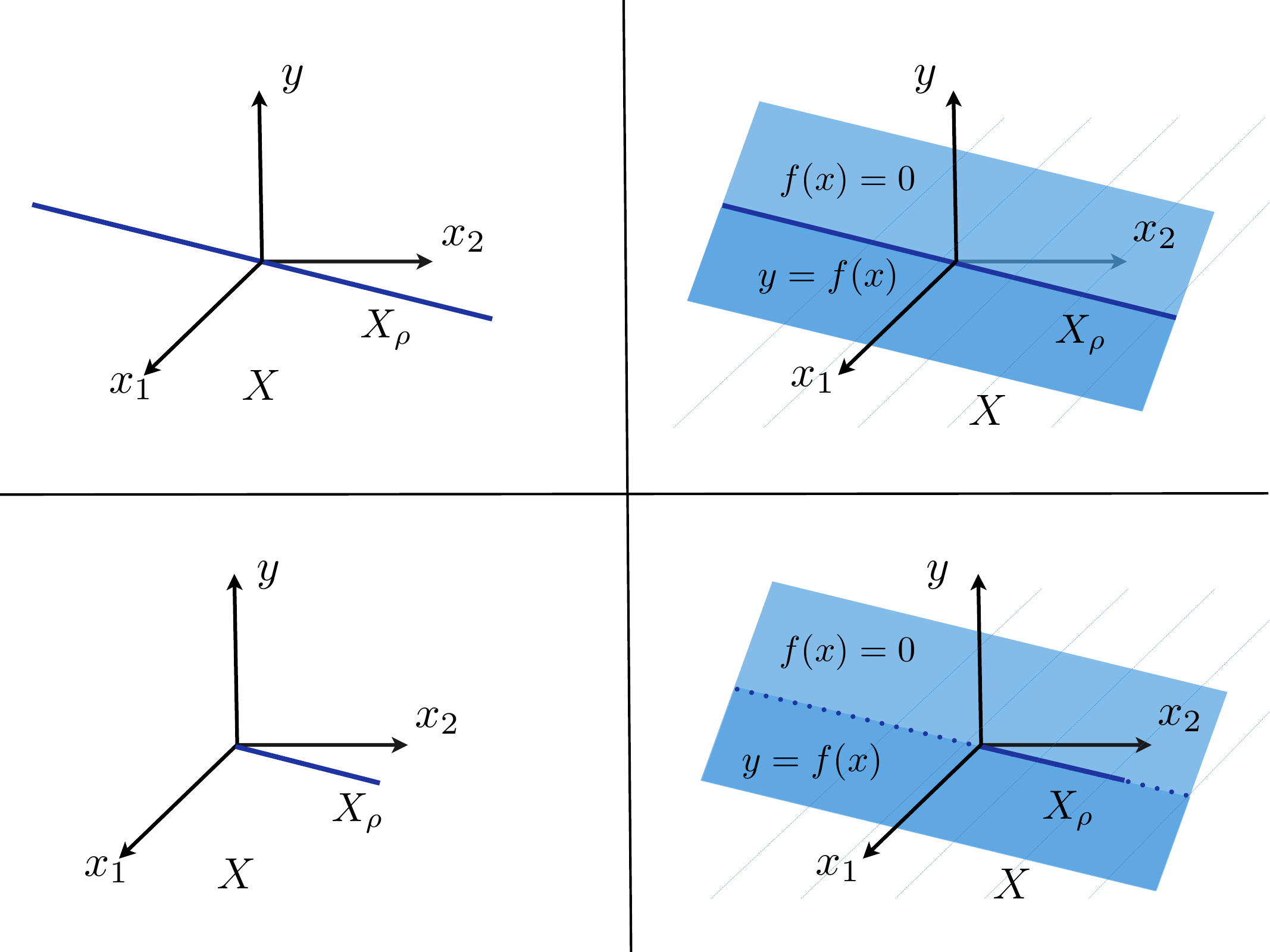}

\end{center}

\caption{The separating property is illustrated in a simple situation where $X={\R^2}$. In the top pictures, the support $X_\rho$ is a line passing through the origin and is separated by the linear kernel $K(x,\yps)=x^T \yps$: for all $x\notin X_\rho$, there exists a function $f\in\hh$ (a linear function on $X$) which is zero on $X_\rho$ and such that $f(x) \neq 0$. The pictures on the right are a plot of the plane $y={f(x_1,x_2)}$. In the bottom pictures, the support is a segment passing through the origin. The linear kernel is too simple to separate this set: all planes are going to be zero also outside of the support (the dotted line in the picture).}\label{separability}
\end{figure}

Now, Theorem \ref{primo} states that, if either $K$ is completely separating, or at least separates $X_\rho$, then $X_\rho$ is the level set of a suitable distribution dependent continuous function $F_\rho$. More precisely, let  ${\hh}$ be the reproducing kernel Hilbert space associated to $K$ \cite{aron50}, $T:\hh\to\hh$  the integral operator with kernel $K$, and denote by $T^\dag$ its pseudo-inverse. If we consider the function $F_\rho$ on $X$, defined by
\begin{equation*}\label{mainintro2}
F_\rho(x)=\scal{T^\dagger T K_x}{K_x} \qquad \forall x\in X,
\end{equation*}
and  $K$ separates $X_\rho$, then we prove that
\begin{equation*}
X_\rho=\set{x\in X\mid F_\rho(x)=1},
\end{equation*}
(where for simplicity we are assuming  $K(x,x)=1$ for all $x\in X$).

The above result is crucial since the integral operator $T$ can be approximated with high probability from 
data (see \cite{RoBeDe10} and references therein). However, since the definition of $F_\rho$ involves the pseudo-inverse of $T$, the support estimation problem can be unstable  \cite{tikars77} and regularization techniques are needed to ensure stability. 
With this in mind, we propose and study a family  of spectral regularization techniques which are classical in inverse problems \cite{enhane} and have been considered in supervised learning in  \cite{bapero07,logerfo08}. We define an estimator  by  
\[X_n=\{x\in X~|~ F_n(x)\geq 1-\tau_n\},\]
where $F_n(x)= (1/n) {\mathbf K}_{{\mathbf x}}^{\top}\,
g_{\la_n}({\mathbf K_n}/n){\mathbf K}_{{\mathbf x}}$,  with ${\mathbf
  ({\mathbf K}_n)}_{i,j}=K(x_i,x_j)$, ${\mathbf K}_{{\mathbf x}}$ is
the column vector whose $i$-th entry is $K(x_i,x)$, and ${\mathbf
  K}_{{\mathbf x}}^{\top}$ is its {transpose}. Here $g_{\la_n}({\mathbf K_n}/n)$ is a matrix defined via spectral calculus
by a spectral filter function $g_{\la_n}$ that  suppresses the contribution of the  eigenvalues smaller than $\la_n$. {Examples of spectral filters include Tikhonov regularization and truncated singular values decomposition \cite{logerfo08}, to name  a few.}

{This class of methods can be studied within a unified framework, and  the  
error analysis in the paper establishes strong universal consistency  if $X_\rho$ is separated by $K$. More precisely, under the latter assumption, we  show in Theorem~\ref{thm:hauss} that,  
\[ \lim_{n\to \infty} d_H (X_n, X_\rho)= 0 \qquad \text{almost surely},\]
provided that $X$ is compact and  the sequences $(\tau_n)_{n\geq 1},(\la_n)_{n\geq 1}$ are chosen so that, 
$$
{\tau_n= 1-\min_{1\leq i \leq n}F_n(x_i),} \quad\quad \lim_{n\to\infty}  \la_n=0,\quad\quad \sup_{n\geq 1} (L_{\la_n} \log n)/\sqrt{n}<+\infty,
$$ where $L_{\la_n}$ is the Lipschitz constant of the function $r_{\la_n}(\sigma)=\sigma g_{\la_n}(\sigma)$.  The above result is universal in the sense that consistency can be shown 
without assuming regularity condition on $\rho$ or $X_\rho$.

The proof of the above result crucially depends on estimating the deviation 
between $F_n$ and  $F_\rho$. Indeed, for the above choice of the sequence 
$\la_n)_{n\geq 1}$ we show that  
\[
\lim_{n\to \infty} \sup_{x\in X}\abs{F_\rho(x)-F_n(x)}=0 \qquad \text{almost surely}.
\]
Under suitable distribution dependent assumptions,  the above result can be further developed to obtain finite sample bounds   quantifying stability to random sampling. 
Indeed, if the couple $(\rho, K)$ is such that {$\sup_{x\in X}\nor{T^{-s/2} T^\dagger T K_x}<+\infty$}, with $0< s\leq 1$, and the eigenvalues of the 
(compact and positive) operator $T$ satisfy  $\sigma_j\sim
j^{-1/b}$ for some $0< b\leq 1$,  
then  we prove in Theorem \ref{rates} that, for $n\geq 1$ and $\delta > 0$, we have
$$ \sup_{x\in X}  \abs{F_n(x)-F_\rho(x)} \leq C_{s,b,\delta} \left(\frac{1}{n}\right)^{\frac{s}{2s+b+1}}
$$
with probability at least $1-2e^{-\delta}$, for  $\la_n =n^{-1/(2s+b+1)}$ and a suitable constant $C_{s,b,\delta}$ which does not depend on $n$.
}

Finally,  we remark that our construction relies on the assumption that  the kernel $K$ separates the support $X_\rho$. 
The question then arises whether there exist kernels that can separate a large number of, and perhaps {\em all}, closed subsets, namely kernels that are \emph{completely separating}. {Indeed, 
a positive answer can be given and, for} translation invariant kernels on $\R^d$, Theorem~\ref{ale} actually gives a  sufficient condition for a kernel to be completely separating in terms of its Fourier transform. As a consequence,  the Abel kernel {$K(x,\yps)=e^{-\nor{x-\yps}/\sigma}$} on the Euclidean space $X=\R^d$ is completely separating. Interestingly, the  Gaussian kernel {$K(x,\yps)=e^{-\nor{x-\yps}^2 /\sigma^2}$}, which is very popular  in machine learning,  is not.

\subsection{State of the art}\label{PW}

The problem of building an estimator $X_n$ of  a subset $X_\rho\subset X$ which is consistent with respect to some kind of
metric among sets has been considered in seemingly diverse fields for different application purposes, 
from anomaly detection -- see   \cite{chbaku09} for a review --  to surface estimation \cite{scgisp05}.
We give a  summary of the main approaches, with basic  references for further details. \\
{\bf Support and Level Set Estimation}.
Support estimation (also called set estimation) is a part of the theory of non-parametric  statistics. We refer to \cite{cufr10,curo03} for a detailed review on this topic.
Usually, the space $X$ is $\R^d$ with the Euclidean metric $d$, and
$X_\rho$ is the corresponding support of $\rho$.  If $X_\rho$ is convex, a
natural estimator is the convex hull of the data
$X_n=\operatorname{conv}\set{x_1,\ldots,x_n}$, for which convergence rates
can be derived with respect to the Hausdorff
distance \cite{duwa96,rei03}.
If $X_\rho$ is not convex,  Devroye and Wise \cite{dewi80} propose the estimator
\[ X_n=\bigcup_{i=1}^n B(x_i,\eps_n),\]
where $B(x,\eps)$ is the ball of center $x$ and radius $\eps$, and $\eps_n$  slowly goes to zero when $n$ tends to
infinity. Consistency and minimax  converges rates are studied in
\cite{dewi80,kots93} with respect to the distance
\[ d_{\mu}(C_1,C_2)=\mu(C_1\triangle C_2),\]
where $C_1\triangle C_2=(C_1\setminus C_2)\cup (C_2\setminus C_1)$
and $\mu$ is a suitable known measure.\\
If $\rho$ has a density $f$ with respect to some known measure $\mu$, a traditional approach is based on   a non-parametric estimator $f_n$ of $f$, a so called {\em plug-in} estimator.
A kernel based class of plug-in estimators is proposed  in \cite{cufr97}, namely
\[ {X_n=\set{x \in X\mid f_n(x) \geq c_n} \quad \text{ with } \quad f_n(x)=\frac{1}{n h_n^d}\sum_{i=1}^n K\left(\frac{x-x_i}{h_n}\right),}\]
where $h_n$ is a regularization parameter and $c_n$ is a suitable
threshold.  Convergence rates with respect to $d_\mu$ are provided in \cite{cufr97}. \\
A related problem   is  level set estimation, where the goal is to detect the high density
regions $\set{x\in X\mid f(x)\geq c}$.  Consistency and optimal convergence rates for different  plug-in
estimators
$$
X_n = \set{x\in X\mid f_n(x)\geq c}
$$
have been studied  with respect to both 
$d_H$ and $d_\mu$, see for example \cite{bibema09,scno06,tsy97}  for a
slightly different approach.\\
{\bf One class learning algorithm}. In machine learning, set estimation has been viewed  as a
classification problem where we have at our disposal only positive examples. 
An interesting discussion on the relation  between density level set estimation, binary classification  and
anomaly detection is given in~\cite{sthusc05}.   In this context,  some algorithms inspired by Support Vector Machine
(SVM) have been studied  in \cite{shplshsmwi01,sthusc05,veve06}. A kernel method based on kernel principal component analysis is
presented in  \cite{hof07} and is essentially a special case of our framework.\\
{\bf Manifold Learning}. As we mentioned before, a setting which is  of
special interest is the one in which $X$ is $\R^d$ and $X_\rho$ is a 
low dimensional Riemannian submanifold. In this case, the error of an estimator is  
studied in terms  of the error functional
$$
d_\rho(X_\rho, X_n)=\int_{X_\rho} d(x, X_n)d\rho(x),
$$
where $d$ is the Euclidean metric.
Some results in this framework are given in  {\cite{alchma11,mapo10,nami10}}.\\
{\bf Computational Geometry}. 
A classic situation, considered for example in  image reconstruction problems, is when 
the set $X_\rho$ is a hyper-surface of  $\R^d$ and the data $x_1,\ldots,x_n$ are either 
 chosen deterministically or sampled uniformly. The goal in this case is to find  a smooth 
 function $f$ that gives the Cartesian equation of the hyper-surface, see for example {\cite{PoissonSurface,Eigencrust,mls}}.

\section{Kernels, Integral Operators and  Geometry in Spaces of Probabilities}\label{sec:basic}


In this section we establish  the results that  provide the foundations of our approach.
The basic framework in this paper is described by a triple $(X,\rho,K)$, where
\begin{itemize}
\item[-] $X$ is a set  (endowed with a $\sigma$-algebra $\A{X}$);
\item[-] $\rho$ is a probability measure defined on  $X$;
\item[-] $K$ is a {(real)} reproducing kernel on $X$, i.e.~a {real} function on $X\times X$ of positive type.
\end{itemize}

We interpret $X$ as the data space and  $\rho$ as the probability distribution generating the data. 
Roughly speaking, the kernel $K$ provides a natural {\em similarity
measure} on $X$ and  {it defines} its geometry.

We denote by $\hh$ the reproducing kernel Hilbert {space} associated
with the reproducing kernel $K$ (we refer to \cite{aron50,stch08} for
an exhaustive review on the theory of reproducing kernel Hilbert
spaces). The scalar product and norm in $\hh$ are denoted by
$\scal{\cdot}{\cdot}$ and $\nor{\cdot}$, respectively. We recall that
the elements of $\hh$ are {real} functions on $X$, and the
reproducing property $f(x) = \scal{f}{K_x}$ holds true for all $x\in
X$ and $f\in\hh$, where $K_x\in\hh$ is defined by $K_x(\yps)=K(\yps,
x)$. 


In order to prove our results, we need some technical conditions on $K$.
  \begin{assm}\label{A}
The kernel $K$ has  the following properties:
  \begin{enumerate}[a)]
  \item\label{A3} for all $x,\yps\in X$ with $x\neq \yps$ we have $K_x\neq K_{\yps}$;
  \item\label{A1} the associated reproducing kernel Hilbert space $\hh$ is separable;
  \item\label{A2}  the {real} function $K$ is measurable with respect to the product $\sigma$-algebra
$\A{X}\otimes\A{X}$;
  \item\label{A4} for all $x\in X$, $K(x,x)=1$.
  \end{enumerate}
\end{assm}

Assumptions \ref{A}.\ref{A3}), \ref{A}.\ref{A1}) and \ref{A}.\ref{A2}) are minimal requirements. In particular, Assumptions \ref{A}.\ref{A3}) and \ref{A}.\ref{A1}) are needed in order to define a separable metric structure on $X$, while Assumption \ref{A}.\ref{A2}) ensures that such metric topology is compatible with the $\sigma$-algebra $\A{X}$ (see~Proposition~\ref{metrica} below). In Proposition~\ref{Prop.supp.}, the combination of~\ref{A}.\ref{A3}), \ref{A}.\ref{A1}) and \ref{A}.\ref{A2}) will allow us to define the support $X_\rho$ of the probability measure $\rho$, as anticipated in Section \ref{sec:summary}. Assumption~\ref{A}.\ref{A4}), instead, is a normalization requirement, and could be replaced by a suitable boundedness condition 
(in fact, even weaker integrability conditions could also be considered). We choose the normalization $K(x,x) = 1$ $\forall x\in X$ since it makes 
equations more readable, and it is not restrictive in view of {Proposition~\ref{l:normalization} in \ref{app:lemma}.}  

We now show how {the above assumptions} {allow} us to
define a metric on $X$ and to characterize the corresponding support
of $\rho$ in terms of the integral operator with  kernel  $K$.

\subsection{Metric induced by a kernel}

Our first result makes $X$ a separable metric space isometrically
embedded in $\hh$. This point of view is developed in
\cite{smazho08}.  The relation between metric spaces  isometrically
embedded in Hilbert spaces and kernels of positive type was studied by
Schoenberg around 1940. A recent discussion on this topic can be found in Chapter~2~\S~3 of \cite{BeChRe84}.
\begin{prop}\label{metrica}
Under~Assumption~\ref{A}.\ref{A3}), the map $\dk:X\times X\to [0,+\infty[$ defined by 
\begin{equation}
\dk(x,\yps)=\nor{K_x-K_{\yps}}=\sqrt{K(x,x)+K(\yps,\yps)-{2}K(x,\yps)}\label{metric}
\end{equation}  
is a metric on $X$. Furthermore
\begin{enumerate}[i)]
\item\label{P.1.i} the map  {$x\mapsto K_x$ is an isometry from
    $X$ into $\hh$};
\item\label{P.1.ii} {the kernel} $K$ is a continuous function on $X\times X$, and each $f\in\hh$ is a continuous function.
\end{enumerate}
If {also} Assumption~\ref{A}.\ref{A1}) is satisfied, then
\begin{enumerate}[i)]
\setcounter{enumi}{2}
\item\label{P.1.mio} the metric space $(X,\dk) $ is separable.
\end{enumerate}
Finally, if also Assumption~\ref{A}.\ref{A2}) holds true, then
\begin{enumerate}[i)]
\setcounter{enumi}{3}
\item\label{P.1.iii} the closed subsets of $X$ are measurable (with respect to
  $\A{X}$); 
\item\label{P.1.iv}  if $Y$ is a topological space endowed with its Borel
  $\sigma$-algebra and $f: X\to Y$ is continuous, then $f$ is
  measurable; in particular, the functions in  $\hh$ are measurable.
\end{enumerate}
\end{prop}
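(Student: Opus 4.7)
The plan is to work through the parts in order, using the isometric embedding $\Phi(x)=K_x$ as the unifying tool. Nearly every assertion reduces to a property of this map combined with the reproducing identity $f(x)=\scal{f}{K_x}$.

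First, that $\dk$ is a metric: symmetry and the triangle inequality are inherited directly from $\dk(x,y)=\nor{K_x-K_y}$, which is verified by expanding the Hilbert norm via the reproducing property. The only nontrivial axiom is that $\dk(x,y)=0$ implies $x=y$, and this is exactly where Assumption~\ref{A}.\ref{A3}) enters: $K_x=K_y$ forces $x=y$. Statement (i) is then just the restatement $\nor{\Phi(x)-\Phi(y)}=\dk(x,y)$.

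For continuity (ii), I apply Cauchy--Schwarz in $\hh$: for any $f\in\hh$,
\[
|f(x)-f(y)|=|\scal{f}{K_x-K_y}|\leq\nor{f}\,\dk(x,y),
\]
so every $f\in\hh$ is Lipschitz with constant $\nor{f}$. Joint continuity of $K$ on $X\times X$ follows by writing $K(x,y)-K(x',y')=\scal{K_y-K_{y'}}{K_x}+\scal{K_{y'}}{K_x-K_{x'}}$ and bounding both pieces by Cauchy--Schwarz. For (iii), the map $\Phi$ is an isometry from $(X,\dk)$ onto $\Phi(X)\subset\hh$; Assumption~\ref{A}.\ref{A1}) gives that $\hh$, and hence its subspace $\Phi(X)$, is separable, and any countable dense subset of $\Phi(X)$ pulls back through the bijection $\Phi:X\to\Phi(X)$ to a countable dense subset of $X$.

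The step with genuine content is (iv), where I must verify that the metric topology is compatible with $\A{X}$. By (iii) the topology has a countable base of open balls, so it suffices to show each ball $B(x,r)=\{y:\dk(x,y)<r\}$ belongs to $\A{X}$. This reduces to measurability of the section $y\mapsto\dk(x,y)^2=K(x,x)+K(y,y)-2\operatorname{Re}K(x,y)$. Here Assumption~\ref{A}.\ref{A2}) does the work: joint $\A{X}\otimes\A{X}$-measurability of $K$ yields measurability of the fixed section $y\mapsto K(x,y)$, and also of $y\mapsto K(y,y)$ via the (always measurable) diagonal map $y\mapsto(y,y)$. Complements then give measurability of closed sets. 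For (v), if $f:X\to Y$ is continuous and $U\subset Y$ is open, then $f^{-1}(U)$ is open in $(X,\dk)$, hence in $\A{X}$ by (iv); since the Borel $\sigma$-algebra of $Y$ is generated by its open sets, $f$ is measurable. Specializing to $f\in\hh$, which is continuous by (ii), completes the proof. The only point requiring care is (iv), which is where the joint measurability of $K$ (rather than merely fiberwise measurability) is essential; the remaining parts are routine bookkeeping around the isometric embedding $\Phi$.
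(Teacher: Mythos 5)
Your proposal is correct and follows essentially the same line as the paper: represent $\dk$ as $\nor{\Phi(x)-\Phi(y)}$ to get the metric and isometry at once, use Cauchy--Schwarz for the continuity claims, transport separability from $\hh$ through $\Phi$, and then use separability plus sectional measurability of $\dk$ (from joint measurability of $K$) to make open balls, hence all open and closed sets, measurable. The only cosmetic difference is that the paper argues item v) through closed preimages while you argue through open preimages, and you spell out the diagonal-map step for $y\mapsto K(y,y)$ explicitly; neither changes the substance.
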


\begin{proof} Many of these properties are known in the literature, see for example~\cite{cadeto06,stch08} and references therein. For the reader's convenience, we give a self-contained short proof.\\
Assumption~\ref{A}.\ref{A3}) states that { the map $x\mapsto K_x$} is injective. Since
{$\dk(x,\yps)=\nor{K_x-K_y}$} by definition, $\dk$ is the
  metric on $X$ making {$x\mapsto K_x$ an isometry}, as claimed in item \ref{P.1.i}).
{About~\ref{P.1.ii}), the kernel $K$ is continuous since $K(x,\yps) = \scal{K_\yps}{K_x}$ and {the map $x\mapsto K_x$ is continuous by item \ref{P.1.i}); furthermore,} the elements of $\hh$ are continuous by the reproducing property $f(x) = \scal{f}{K_x}$.}\\
If also Assumption~\ref{A}.\ref{A1}) holds true, then the set
${\set{K_x\mid x\in X}}$ is separable {in $\hh$, and so is $X$ as the map $x\mapsto K_x$ is isometric from $X$ onto $\set{K_x\mid x\in X}$.} Item \ref{P.1.mio}) then follows.\\
Suppose now that also Assumption~\ref{A}.\ref{A2}) holds true. Then the map $\dk$ is a
measurable map,  so that the open balls of $X$ are measurable.
Since $X$ is separable, any open set is a countable union of open balls, hence it is measurable. It follows that the closed subsets are
measurable, too, hence item \ref{P.1.iii}).\\
Let $Y$ and $f$ be as in item \ref{P.1.iv}). If $A\subset Y$ is closed, then
$f^{-1} (A)$ is closed in $X$, hence measurable by item \ref{P.1.iii}). It
follows that $f^{-1} (A)$ is measurable for all Borel sets $A\subset
Y$, i.e.~$f$ is measurable. Since the elements of $\hh$ are
continuous by \ref{P.1.ii}), they are measurable, and item \ref{P.1.iv}) is proved.
\end{proof}

In the rest of the paper we will always consider $X$ as a topological
metric space with metric $\dk$. Note that $\dk$ is  the metric induced
on $X$ by the norm of $\hh$ through the embedding ${x\mapsto K_x}$. The next result shows that under our assumptions we can define the set $X_\rho$ as the smallest closed subset of $X$ having measure one.

\begin{prop}\label{Prop.supp.}
Under Assumptions~\ref{A}.\ref{A3}), \ref{A}.\ref{A1}) and \ref{A}.\ref{A2}), there exists a unique closed subset  $X_\rho\subset X$ with $\rho(X_\rho)=1$ satisfying the following property: if $C$ is a closed subset  of $X$ and  $\rho(C)=1$, then $C\supset X_\rho$.
\end{prop}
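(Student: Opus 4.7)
The plan is to construct $X_\rho$ as the complement of the largest open set of measure zero, following the classical recipe for the support of a Borel measure, and to use separability of $(X,\dk)$ -- hence second-countability -- to ensure that this largest open set really has measure zero.

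More precisely, first I would set
\[
U=\bigcup\set{V\subset X \mid V\text{ open},\ \rho(V)=0},
\]
noting that each open $V$ is in $\A{X}$ by item~\ref{P.1.iii}) of Proposition~\ref{metrica} (which uses Assumptions \ref{A}.\ref{A3}), \ref{A}.\ref{A1}), \ref{A}.\ref{A2})), so the condition $\rho(V)=0$ is meaningful. The candidate for the support is $X_\rho:=X\setminus U$, which is closed by construction. The main point is to show $\rho(U)=0$, and this is where separability matters. By item~\ref{P.1.mio}) of Proposition~\ref{metrica} the metric space $(X,\dk)$ is separable, hence second-countable; fix a countable base $\{B_n\}_{n\in\N}$ and let $I=\set{n\in\N\mid \rho(B_n)=0}$.

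I claim $U=\bigcup_{n\in I} B_n$. The inclusion $\supseteq$ is immediate since each $B_n$ with $n\in I$ is an open set of measure zero. For $\subseteq$, pick $x\in U$; then $x$ lies in some open $V$ with $\rho(V)=0$, and since $\{B_n\}$ is a base there exists $n$ with $x\in B_n\subset V$, forcing $\rho(B_n)=0$, i.e.\ $n\in I$. Countable sub-additivity then gives
\[
\rho(U)\le\sum_{n\in I}\rho(B_n)=0,
\]
so $\rho(X_\rho)=1$.

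For the minimality property, suppose $C\subset X$ is closed with $\rho(C)=1$. Then $X\setminus C$ is an open set with $\rho(X\setminus C)=0$, hence $X\setminus C\subset U$, which is equivalent to $C\supset X\setminus U=X_\rho$. Uniqueness is a direct consequence: if $X_\rho'$ is another closed subset of measure one with the same minimality property, applying minimality of $X_\rho$ to $C=X_\rho'$ and minimality of $X_\rho'$ to $C=X_\rho$ yields $X_\rho\subset X_\rho'$ and $X_\rho'\subset X_\rho$. The only real obstacle here is the step $\rho(U)=0$, which would fail without separability (an uncountable union of null open sets need not be null); all three parts of Assumption~\ref{A} are used, through Proposition~\ref{metrica}, to make both the metric topology and the second-countability available and compatible with $\A{X}$.
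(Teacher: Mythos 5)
Your proof is correct and is essentially the paper's argument in dual form: the paper defines $X_\rho$ directly as the intersection of all closed full-measure sets and uses separability to reduce it to a countable intersection, while you take the complementary picture (union of null open sets) and reduce via a countable base. Both hinge on exactly the same ingredients -- Proposition~\ref{metrica} for measurability of open/closed sets and second-countability from separability -- so there is no substantive difference in the route taken.
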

\begin{proof}
Define the measurable set $X_\rho$ as
\begin{equation*}
  \label{support}
  X_\rho=\bigcap_{
    \begin{smallmatrix}
      C\text{ \rm closed }\\\rho(C)=1
    \end{smallmatrix}} C .
\end{equation*}
Clearly,  $X_\rho$ is closed {and  measurable by Proposition \ref{metrica}}. Since $X$ is separable, there exists a
  sequence of closed subsets $(C_j)_{j\geq 1}$ such that
  every closed subset $C=\cap C_{j_k}$, for some suitable subsequence.
Hence, $ X_\rho=\bigcap\limits_{ j\mid \rho(C_j)=1} C_j$ and, as a consequence, 
$\rho(X_\rho)=1$. 
\end{proof}
We add one remark. The set $X_\rho$ is called {\em the support} of the measure 
$\rho$ and clearly depends both on the probability distribution and on the topology  induced by the kernel
$K$  through  the metric $\dk$ on $X$.

\subsection{Separating Kernels}
The following definition of separating kernel plays a central role in our approach.
\begin{defn}\label{separated} We say that {the reproducing kernel Hilbert space $\hh$} separates a subset $C\subset X$,  if, for all $x\not\in C$, there exists $f\in\hh$ such that
\begin{equation}
f(x) \neq 0 \quad\text{ and }\quad f(\yps)= 0 \quad \forall \yps\in C . 
\label{separa}
\end{equation}
In this case we also say that the corresponding reproducing kernel separates $C$.
\end{defn}
We add some comments. First, in~\eqref{separa} the function $f$ depends on
$x$ and $C$. Second, the reproducing property and \eqref{separa} imply that
$K_x\neq 0$ and $K_x\neq K_{\yps}$ for all $x\not\in C$ and $\yps\in
C$ (compare with Assumption~\ref{A}.\ref{A3})).
Finally, we stress that a different notion of {\em separating} property is
given in \cite{stch08}.
\begin{rem}\label{sep_lin}
Given an arbitrary reproducing kernel Hilbert space $\hh$, there exist sets
that are not separated by $\hh$. For example, if $X= \R^d$ and
$\hh$ is the reproducing kernel Hilbert  space with linear kernel
$K(x,\yps)= x^T \yps$, the only sets separated by $\hh$ are the linear
manifolds, that is, the set of points defined by homogeneous linear equations (see Figure \ref{separability}). A natural question is then 
whether  there exist kernels capable of separating large classes of subsets and in particular  all
the closed subsets. Section \ref{RKHSCR}  anwers positively to this question, introducing the notion of completely separating kernels.
\end{rem}

Next, we provide an equivalent characterization of the separating property, which will be the key 
to a computational approach to support estimation. 
For any set $C$, let $P_C:\hh\to\hh$ be the orthogonal projection onto
the closed subspace 
$$
\hh_C=\overline{\text{span}\set{ K_x\mid x\in C}},
$$
i.e.~the closure of the linear space generated by the family
  $\set{K_x\mid x\in C}$. Note that  $P_C^2=P_C$, $P_C^{\top}=P_C$ and
\begin{equation*}\label{proj_def} 
\ker{P_C}=\set{K_x\mid x\in
  C}^{\perp}=\{ f\in\hh \mid f(x) = 0 \ \forall x\in C \}. 
  \end{equation*}  
  Moreover, define the function
\begin{equation}\label{FC_def}
F_C : X\to\R, \qquad F_C (x) = \scal{P_CK_x}{K_x}.
\end{equation}
{\begin{rem}\label{rem:Stein1}
The Hilbert space $\hh_C$ is a closed subspace of the reproducing kernel
Hilbert space $\hh$, and  it is itself a reproducing kernel Hilbert
space of functions on $X$ with reproducing kernel  $K_C(x,\yps)=\scal{P_CK_\yps}{P_CK_x}=\scal{P_CK_\yps}{K_x}$. {Note that $K_C(x,\yps) = K(x,\yps)$ for all $x,y\in C$ by definition of $P_C$.} Clearly, the function $F_C$ corresponds to  the value of $K_C$ on the diagonal.
\end{rem}
}
Then,  we have the following theorem.
\begin{thm}\label{prop_proj} 
For any subset $C\subset X$, the following facts are equivalent:
\begin{enumerate}[i)]
\item  $\hh$ separates the set $C$ ;
\item for all $x\not\in C$, $K_x \notin \ran{P_C}$;
\item $\displaystyle{ C=\set{x\in X \mid F_C(x)=K(x,x)}}$;
\item $\displaystyle{ {\set{K_x\mid x\in C} = \set{K_x\mid x\in X}}\cap \ran{P_C}}$.
\end{enumerate}
Under Assumption~$\ref{A}.\ref{A3})$, 
if $C$ is separated by $\hh$, then $C$ is closed with respect to the metric~$\dk$.
\end{thm}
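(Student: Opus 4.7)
The plan is to establish the four equivalences via a short chain i) $\Leftrightarrow$ ii) $\Leftrightarrow$ iii) $\Leftrightarrow$ iv), and then deduce the closedness assertion from the continuous characterization in iii). Before starting I would record three elementary facts that drive everything: (a) $\hh_C=\ran{P_C}$ is closed and, by the reproducing property, $\ker P_C=\hh_C^\perp=\set{f\in\hh \mid f(\yps)=0\ \forall \yps\in C}$; (b) since $P_C$ is a self-adjoint idempotent, $F_C(x)=\scal{P_CK_x}{K_x}=\nor{P_CK_x}^2$; and (c) $K_\yps\in\hh_C$ for every $\yps\in C$, so the inclusion $C\subset\set{x\in X\mid K_x\in\ran{P_C}}$ holds automatically.

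For i) $\Leftrightarrow$ ii), I would use the reproducing identity $f(x)=\scal{f}{K_x}$ together with (a): separation at $x\notin C$ demands some $f\in\ker P_C$ with $\scal{f}{K_x}\neq 0$, and by self-adjointness of $P_C$ together with closedness of $\hh_C$ this is equivalent to $K_x\notin(\ker P_C)^\perp=\ran{P_C}$. The ``only if'' direction is concrete: when $K_x\notin\ran{P_C}$, the explicit witness $f=K_x-P_CK_x$ lies in $\ker P_C$ and satisfies $\scal{f}{K_x}=\nor{K_x-P_CK_x}^2>0$. For ii) $\Leftrightarrow$ iii), Pythagoras applied to (b) yields $F_C(x)\leq\nor{K_x}^2=K(x,x)$ with equality iff $K_x=P_CK_x$, i.e.\ iff $K_x\in\ran{P_C}$; combining with (c) turns iii) into the statement that no $x\notin C$ has $K_x\in\ran{P_C}$, which is exactly ii). For iii) $\Leftrightarrow$ iv), I would apply the injection $\Phi$ and observe $\Phi(X)\cap\ran{P_C}=\Phi(\set{x\in X\mid K_x\in\ran{P_C}})$, using Assumption \ref{A}.\ref{A3}) to transfer equality of subsets of $X$ into equality of their images in $\hh$.

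For the final closedness claim my strategy is to write $C$ as the zero set of a continuous real function on $(X,\dk)$. By Proposition \ref{metrica}, $\Phi$ is an isometry, $P_C$ is a bounded linear operator on $\hh$, and the inner product is continuous, so both $x\mapsto F_C(x)=\nor{P_C\Phi(x)}^2$ and $x\mapsto K(x,x)=\nor{\Phi(x)}^2$ are continuous on $X$; by iii) we then have $C=(F_C-K(\cdot,\cdot))^{-1}(\set{0})$, a closed subset of $X$. I do not anticipate any serious obstacle here: the only step requiring a little care is iii) $\Leftrightarrow$ iv), where one genuinely needs the injectivity of $\Phi$ (Assumption \ref{A}.\ref{A3})) to pass from an equality of subsets of $X$ to the corresponding equality of subsets of $\Phi(X)\subset\hh$.
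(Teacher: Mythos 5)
Your proof is correct and follows essentially the same strategy as the paper's, differing only in organization: you prove a linear chain i)\,$\Leftrightarrow$\,ii)\,$\Leftrightarrow$\,iii)\,$\Leftrightarrow$\,iv), whereas the paper runs the cycle i)\,$\Rightarrow$\,ii)\,$\Rightarrow$\,iii)\,$\Rightarrow$\,i) and attaches iv) to ii) separately, and your use of the Pythagorean identity $K(x,x)=\nor{P_CK_x}^2+\nor{(I-P_C)K_x}^2$ makes the step ii)\,$\Leftrightarrow$\,iii) especially transparent. One small conceptual remark: you are right that the implication iv)\,$\Rightarrow$\,iii) genuinely requires injectivity of $\Phi$ (if $K_x=K_y$ with $x\notin C$ and $y\in C$, then $K_x\in\Phi(X)\cap\ran{P_C}=\Phi(C)$ yields no contradiction without Assumption~\ref{A}.\ref{A3})), a point the paper glosses over when it calls iv) a ``restatement'' of ii). However, in your final sentence you state the direction of that dependence backwards: passing from an equality $C=\set{x\mid K_x\in\ran{P_C}}$ of subsets of $X$ to the equality $\Phi(C)=\Phi(X)\cap\ran{P_C}$ of their images is automatic and needs no injectivity; it is the reverse passage, from equality of images back to equality of preimages, that uses Assumption~\ref{A}.\ref{A3}). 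The substance of your argument is unaffected, but the prose should be flipped. Everything else, including the closedness argument via continuity of $x\mapsto F_C(x)-K(x,x)$, matches the paper.
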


\begin{proof} We first prove that
i) $\Rightarrow$ ii). Given $x\notin C$,  by assumption there is $f\in \hh$ such that $\scal{f}{K_x}=f(x)\neq 0$, i.e.~$K_x\not\in \set{f}^\perp$, and $\scal{f}{K_{\yps}} =f(\yps)=0$ for all $\yps\in C$, i.e.~$f\in \ker{P_C} = \ran{P_C}^\perp$.
It follows that $\ran{P_C} \subset \set{f}^\perp$, and then $K_x\notin \ran{P_C}$.\\
We prove ii) $\Rightarrow$ iii). {If $x\in C$, then $K_x\in\ran{P_C}$ by definition of $P_C$, so that
$F_C(x)=K(x,x)$. Hence $C\subset \set{x\in X \mid F_C(x) = K(x,x)}$. If $x\not\in C$, then by assumption $P_C K_x \neq K_x$, i.e.~$(I-P_C)K_x\neq
0$. By the equality
$$
\nor{(I-P_C)K_x}^2 = \scal{K_x}{K_x} - \scal{P_C K_x}{K_x} - \scal{K_x}{P_C K_x} + \scal{P_C K_x}{P_C K_x} = K(x,x)-F_C(x) ,
$$
this implies $F_C(x) \neq K(x,x)$. Hence $C\supset \set{x\in X \mid F_C(x) = K(x,x)}$.\\}
We prove iii) $\Rightarrow$ i). If $x\not\in C$, define
$f=(I-P_C)K_x\in\ker{P_C}$, so that $f(\yps)=0$ for all $\yps\in
C$. Furthermore, $f(x)=K(x,x)-F_C(x)\neq 0$. Thus, $f$ separates the set $C$.\\
Finally, iv) is
a restatement of~ii) taking into account that $K_x\in\ran{P_C}$ for
all $x\in C$ by construction.\\
Under  Assumption~$\ref{A}.\ref{A3})$, {the map $x\mapsto F_C(x)-K(x,x) = \scal{P_C K_x}{K_x} - 
K(x,x)$ is continuous by Proposition~\ref{metrica}.} By item iii), $C$
is the $0$-level set of this function, hence $C$ is closed.
\end{proof}

{Proposition \ref{l:normalization} in \ref{app:lemma} shows} that the reproducing kernel $K$ can be normalized 
under the mild assumption that $K(x,x)\neq 0$ for all $x\in X$, so that Assumption~\ref{A}.\ref{A4}) can be satisfied up to a rescaling of $K$.

\subsubsection{A Special Case: Metric Spaces}\label{sec:special_case}

It may be the case that the set $X$ has its own metric $d_X$, and the
$\sigma$-algebra $\A{X}$ is  the Borel $\sigma$-algebra
associated with the topology induced by $d_X$.  The following
proposition shows 
that {the metrics $\dk$ and $d_X$ induce the same topology on $X$}, provided that $\hh$ separates all the $d_X$-closed subsets
and the corresponding kernel is continuous. 
\begin{prop}\label{c:connection}
Let $X$ be a separable metric space with respect to a metric
$d_X$, and $\A{X}$ the corresponding Borel $\sigma$-algebra. Let $\hh$ be a reproducing kernel Hilbert space on $X$ with kernel $K$.
Assume that  the kernel $K$ is a continuous function with respect to $d_X$ and that
the space $\hh$ separates every subset of $X$ which is closed with respect to $d_X$.
Then
\begin{enumerate}[i)]
\item Assumptions~$\ref{A}.\ref{A3})$, $\ref{A}.\ref{A1})$ and $\ref{A}.\ref{A2})$ hold true, and $K(x,x)>0$ for all
  $x\in X$;
\item a set is closed with respect to $\dk$ if and only if it is
  closed with respect to $d_X$. 
\end{enumerate}
\end{prop}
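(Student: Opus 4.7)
The plan is to verify the items of Assumption~\ref{A} one by one using the separating hypothesis together with the continuity of $K$, and then to compare the two topologies on $X$ by showing each is finer than the other.

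For part i), Assumption~\ref{A}.\ref{A3}) is almost tautological: given $x\neq \yps$, the singleton $\{\yps\}$ is $d_X$-closed, so by hypothesis some $f\in\hh$ vanishes at $\yps$ while $f(x)\neq 0$; the reproducing property then forces $K_x\neq K_\yps$. To see $K(x,x)>0$, I would apply the separating property to $C=\emptyset$ at the point $x$: some $f\in\hh$ has $f(x)\neq 0$, hence $K_x\neq 0$ and $K(x,x)=\nor{K_x}^2>0$. For Assumption~\ref{A}.\ref{A1}), the identity $\nor{K_x-K_\yps}^2 = K(x,x)+K(\yps,\yps)-2\operatorname{Re} K(x,\yps)$ together with the $d_X$-continuity of $K$ on $X\times X$ shows that the feature map $\Phi:(X,d_X)\to\hh$ is continuous. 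Since $(X,d_X)$ is separable, $\Phi(X)$ is a separable subset of $\hh$, and because its linear span is dense in $\hh$ by construction of the RKHS, $\hh$ itself is separable. For Assumption~\ref{A}.\ref{A2}), continuity of $K$ yields Borel measurability of $K$ with respect to the product topology on $X\times X$, and since $X$ is a separable metric space this product Borel $\sigma$-algebra coincides with $\A{X}\otimes \A{X}$.

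For part ii), one direction is immediate from the continuity just established: the identity map $(X,d_X)\to(X,\dk)$ is continuous, so the $\dk$-topology is coarser than the $d_X$-topology, and every $\dk$-closed set is $d_X$-closed. For the reverse direction I would invoke Theorem~\ref{prop_proj} directly: a $d_X$-closed subset $C$ is separated by $\hh$ by hypothesis, and that theorem (which applies because Assumption~\ref{A}.\ref{A3}) has just been verified) then asserts that $C$ is $\dk$-closed.

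The main obstacle is the measurability in Assumption~\ref{A}.\ref{A2}): what comes for free from continuity is Borel measurability with respect to the product topology, and one must invoke the classical measure-theoretic fact that for a separable metric space $X$ the Borel $\sigma$-algebra of $X\times X$ coincides with the product $\sigma$-algebra $\A{X}\otimes\A{X}$. All remaining steps are straightforward applications of the hypotheses and of the preceding results in the paper, chiefly Proposition~\ref{metrica} and Theorem~\ref{prop_proj}.
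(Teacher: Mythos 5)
Your proof is correct and follows essentially the same route as the paper's: injectivity and $K(x,x)>0$ from the separating hypothesis applied to closed sets (the paper uses singletons via the remark after Definition~\ref{separated}; your use of $C=\emptyset$ for positivity is a harmless variant), and the two directions of part ii) exactly as the paper does — $d_X$-continuity of $K$ gives one inclusion, and Theorem~\ref{prop_proj} gives the other. The only difference is that for separability of $\hh$ and for measurability of $K$ the paper simply cites Proposition~5.1 and Corollary~5.2 of \cite{cadeto06}, whereas you spell out the underlying continuity and product-$\sigma$-algebra arguments; both are fine.
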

\begin{proof}
The kernel is measurable and the space $\hh$ is separable  by Proposition
$5.1$ and Corollary $5.2$ in \cite{cadeto06}.   Since  the points are closed sets for $d_X$ and the $d_X$-closed sets are separated by $\hh$, then $K_x\neq 0$ (i.e.~$K(x,x) > 0$) for all $x\in X$ and $K_x\neq K_{\yps}$ if $x\neq\yps$ by the discussion following Definition \ref{separated}.  \\
We show that $d_X$ and $\dk$ are equivalent metrics. Take a
  sequence  $(x_j)_{j\geq 1}$ such that for some $x\in X$ it holds that $\lim_{j\to\infty} d_X(x_j,x)=0$. 
Since $K$ is
continuous with respect to $d_X$, we have $\lim_{j\to\infty} \dk(x_j,x)=0$. Hence, the  $\dk$-closed
sets are  $d_X$-closed, too.  Conversely, if the set $C$ is {$d_X$-closed},
since $\hh$ separates $C$, Theorem~\ref{prop_proj} implies that $C=\set{x\in 
  X\mid K(x,x)-F_C(x)=0}$, which is a $\dk$-closed set by $\dk$-continuity of the map $x\mapsto K(x,x)-F_C(x)$.
\end{proof}
Item ii) of the above proposition states that the metrics $\dk$ and $d_X$ are equivalent and
implies that the set $X_\rho$ defined in Proposition~\ref{Prop.supp.} coincides with  the support of $\rho$ 
with respect to the topology induced by $d_X$.

\subsection{{The} Integral Operator Defined by the Kernel}\label{sec:int}

We denote by $\mathcal S_1$ the Banach space of the trace class operators on $\hh$, with trace class norm
$$
\nor{A}_{\mathcal S_1} = \tr{(A^{\top} A)^{\frac{1}{2}}} = \sum_{i\in I} \scal{(A^{\top} A)^{\frac{1}{2}} e_i}{e_i} ,
$$
where $\set{e_i}_{i\in I}$ is any orthonormal basis of $\hh$. Furthermore, we let $\mathcal S_2$ be the separable Hilbert space of the Hilbert-Schmidt operators on $\hh$, with Hilbert-Schmidt
norm
\[
\nor{A}^2_{\mathcal S_2}=\tr{A^{\top}A} = \sum_{i\in I} \nor{A e_i}^2 .
\]
Finally, if $A$ is any bounded operator on $\hh$, we denote by $\nor{A}_\infty$ its uniform operator norm. It is standard that $\nor{A}_\infty\leq \nor{A}_{\mathcal S_2} \leq \nor{A}_{\mathcal S_1}$. Moreover, for all functions $f_1,f_2\in \hh$, the rank-one operator $f_1\otimes f_2$ on $\hh$ defined by
$$
(f_1\otimes f_2)(f) = \scal{f}{f_2}\, f_1 \qquad \forall f\in\hh 
$$
is trace class, and $\nor{f_1\otimes f_2}_{\mathcal S_1} = \nor{f_1\otimes f_2}_{\mathcal S_2} = \nor{f_1}\nor{f_2}$.

We recall a few facts on integral operators with kernel $K$ (see  \cite{cadeto06} for  proofs  and further discussions).
Under Assumption~\ref{A}, the $\mathcal S_1$-valued map $x\mapsto  K_x\otimes
K_x$ is Bochner-integrable with respect to $\rho$, and its integral
\begin{equation}\label{eq:2}
T {=} \int_{X} K_x\otimes K_x d\rho (x)
\end{equation}
defines a positive trace class operator $T$ with $\nor{T}_{\mathcal S_1} = \tr{T} = 1$ (a short proof is given in Proposition \ref{Tprop} of the Appendix).
Using the reproducing property of $\hh$, it is straightforward to see that 
$T$ is simply  the integral operator with kernel $K$ acting on $\hh$, i.e.
\[
(Tf)(x)=\int_X K(x,\yps) f(\yps) d\rho(\yps) \qquad \forall f\in\hh .
\]

The following is a key result in  our approach. 
\begin{thm}\label{T}
Under Assumption~\ref{A}, the null space of $T$ is
\begin{equation}\label{nucleo}
\ker{T}=\set{K_x\mid x\in X_\rho}^\perp =\ker{P_{X_\rho}},
\end{equation}
where $X_\rho$  {is the support of $\rho$ as defined in Proposition \ref{Prop.supp.}}.
\end{thm}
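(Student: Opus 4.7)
My plan is to identify $\ker T$ via the quadratic form of $T$, then use continuity of elements of $\hh$ together with the minimality characterization of $X_\rho$ to pass from ``vanishing $\rho$-almost everywhere'' to ``vanishing on $X_\rho$.''

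The starting point is the computation
\[
\langle T f, f\rangle \;=\; \int_X \langle (K_x\otimes K_x)f,f\rangle\, d\rho(x) \;=\; \int_X |\langle f,K_x\rangle|^2\, d\rho(x) \;=\; \int_X |f(x)|^2\, d\rho(x),
\]
which follows from the definition \eqref{eq:2}, the Bochner integrability from Assumption~\ref{A}, and the reproducing property. Because $T$ is positive, $f\in\ker T$ is equivalent to $\langle Tf,f\rangle = 0$, hence equivalent to $f=0$ $\rho$-almost everywhere.

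Next I would pass from the measure-theoretic condition to a pointwise one using continuity and the definition of $X_\rho$. By Proposition \ref{metrica} every $f\in\hh$ is continuous on $(X,\dk)$, so the zero set $Z_f := \{x\in X : f(x)=0\}$ is closed. If $f\in\ker T$, then $\rho(Z_f) = 1$, and the minimality property of $X_\rho$ from Proposition \ref{Prop.supp.} forces $X_\rho\subset Z_f$, i.e.\ $f$ vanishes on $X_\rho$. Conversely, if $f$ vanishes on $X_\rho$, then $|f|^2 = 0$ $\rho$-a.e.\ since $\rho(X_\rho)=1$, and the identity above gives $\langle Tf,f\rangle = 0$, hence $f\in\ker T$. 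This establishes
\[
\ker T \;=\; \{f\in\hh : f(x)=0 \ \forall x\in X_\rho\}.
\]

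Finally, by the reproducing property $f(x) = \langle f, K_x\rangle$, the right-hand side is exactly $\{K_x : x\in X_\rho\}^\perp$, which by definition of $\hh_{X_\rho}$ coincides with $\ker P_{X_\rho}$. This yields the two claimed equalities in \eqref{nucleo}.

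The only delicate step is the passage from $\rho$-a.e.\ vanishing to vanishing on all of $X_\rho$; everything else is a direct unwinding of definitions. This step works precisely because continuity of elements of $\hh$ (which rests on Assumption~\ref{A}.\ref{A3}), through Proposition~\ref{metrica}) makes $Z_f$ closed, so that the defining minimality of the support can be invoked. Without continuity, or without having declared $\dk$ the ambient metric on $X$, one would only recover $\ker T$ up to $\rho$-null modifications, and the clean identification with $\ker P_{X_\rho}$ would fail.
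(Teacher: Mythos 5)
Your proof is correct and follows essentially the same route as the paper's: compute $\langle Tf,f\rangle=\int_X|f|^2\,d\rho$, use positivity of $T$ to reduce to $\rho$-a.e.\ vanishing, invoke continuity so that the zero set is closed, and then use the minimality of $X_\rho$ from Proposition~\ref{Prop.supp.} to upgrade to vanishing on $X_\rho$, finishing with the reproducing property. The only difference is that you spell out the two implications between $\rho(Z_f)=1$ and $X_\rho\subset Z_f$ separately, while the paper states them as a single ``if and only if.''
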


\begin{proof}
Note that , for all $f\in \hh$,  the set
\[
C_f=\set{x\in X\mid f(x)=0}= \set{x\in X\mid  \scal{f}{K_x}=0}
\]
is closed since $f$ is continuous. We now prove Equation
\eqref{nucleo}. Since $T$ is a positive operator, spectral theorem gives
that $Tf=0$ if and only if $\scal{Tf}{f}=0$.  The definition of $T$
and the reproducing property gives that
\begin{align*}
  \scal{Tf}{f}=\int_X \scal{(K_x\otimes K_x)f}{f} d\rho (x) =
\int_X \abs{\scal{ K_x}{f}}^2 d\rho (x) =\int_X |f(x)|^2 d\rho (x),
\end{align*}
hence the condition $\scal{Tf}{f}=0$  is equivalent to the fact that
$f(x)=0$ for 
$\rho$-almost every $x\in X$. Hence $f\in\ker{T}$ if and only if $\rho(C_f) = 1$, i.e.~$C_f\supset X_\rho$, or equivalently $\scal{f}{K_x}=0$ $\forall x\in X_\rho$. Equation \eqref{nucleo} then follows.
\end{proof}

In the following, we will use the abbreviated notation $P_\rho = P_{X_\rho}$. Note that the space $\hh$ splits into the direct sum $\hh=\hh_\rho\oplus \hh_\rho^\perp$, where
\begin{align*}
\hh_\rho & = {\ran{P_\rho} =\overline{\ran{T}} =\overline{\text{span}\{K_x \mid x\in X_\rho
\}}} \\
\hh_\rho^\perp & = \ker{P_\rho} =\ker{T}=\{ f\in\hh \mid f(x) = 0 \ \forall x\in X_\rho \}.
\end{align*}

{\begin{rem}\label{rem:Stein2}
The reproducing kernel Hilbert space $\hh_\rho$ (see Remark \ref{rem:Stein1}) 
has been considered  before \cite{stsc12},  and in particular in the context of semi-supervised manifold regularization \cite{benisi06}, where
$X_\rho$ is assumed to be an embedded manifold. The corresponding 
reproducing kernel is $K_\rho (x,y) = \scal{P_\rho K_y}{K_x}$ and  $F_{X_\rho}(x) = K_\rho (x,x)$. See also the discussion in Section~6.
\end{rem}}

Under Assumption~\ref{A}, we also introduce the integral operator $L_K : L^2(X,\rho)\to L^2(X,\rho)$,
\begin{equation*}
(L_K \phi) (x) {=} \int_X K(x,\yps) \phi (\yps) d\rho (\yps) \qquad
\forall \phi\in L^2(X,\rho),\label{eq:7}
\end{equation*}
which is  a positive trace class operator, too. Note the difference
between the operators $T$ and $L_K$: although their definitions are
formally the same, the respective domains and images change. 

{Since $T$ and $L_K$ are positive trace class operators, 
by the Hilbert-Schmidt theorem each of them admits an
orthonormal family of eigenvectors in $\hh$ and $L^2(X,\rho)$,
respectively,  with a corresponding family of positive eigenvalues.
The two spectral decompositions are strongly related, as we now
briefly recall (see also Proposition~8 of \cite{RoBeDe10} and Theorem 2.11 of \cite{stsc12}).}

{Denote by $(\sigma_j)_{j\in J}$ the (finite or countable) family of
strictly positive eigenvalues of $L_K$, where each eigenvalue is
repeated according to its (finite) multiplicity. For each $j\in J$
select a corresponding eigenvector $\phi_j\in L^2(X,\rho)$ {in such a
way that the sequence $(\phi_j)_{j\in J}$ is orthonormal in $L^2(X,\rho)$.}
Hilbert-Schmidt theorem provides that
\begin{equation}\label{eq:3}
L_K=\sum_{j\in J} \sigma_j \phi_j\otimes \phi_j ,
\end{equation}
where the series converges in trace norm.  In general, each
element $\phi_j$ is an equivalence class of functions defined
$\rho$-almost everywhere. In particular, the value of $\phi_j$
is not defined outside $X_\rho$. However,  in each equivalence  class
we can choose  a unique continuous function, denoted again by
$\phi_j$, which is defined at every point of $X$ by means of 
the {\em extension equation} {\cite{coilaf06,RoBeDe10}}
\begin{equation}\label{eq:outofsample}
\phi_j(x) = \sigma_j^{-1} \int_X K(x,y) \phi_j (y)  d\rho (y)\qquad \forall x\in X .
\end{equation}
With this choice, which will be implicitly assumed in the following,
the family $(\sigma_j)_{j\in J}$ 
coincides with the family of strictly positive eigenvalues of 
$T$ (with the same multiplicities), $(\sqrt{\sigma_j}  \phi_j)_{j\in
  J}$ is a orthonormal
family in $\hh$ of  eigenfunctions of $T$,  and 
\begin{equation}
T =\sum_{j\in J} \sigma_j\,\, (\sqrt{\sigma_j}  \phi_j)\otimes
(\sqrt{\sigma_j}   \phi_j) = \sum_{j\in J} \sigma_j^2 \ \phi_j\otimes
\phi_j,\label{eq:6} 
\end{equation}
where  the series converges
in the Banach space $\mathcal S_1$ (hence in $\mathcal S_2$), see
e.g. \cite{cadeto06,RoBeDe10,stsc12}. 
As $\nor{T}_{\mathcal S_1} = 1$, the positive
sequence $(\sigma_j)_{j\in J}$ is summable and sums up to
$1$.  It is clear that the family $(\sqrt{\sigma_j}
  \phi_j)_{j\in J}$ is an orthonormal basis of the Hilbert space
  $\hh_\rho$. Conversely, let
$(f_j)_{j\in J}$  be an orthonormal basis  of $\hh_\rho$ of
eigenvectors of $T$ with corresponding eigenvalues $(\sigma_j)_{j\in
  J}$. Define 
\[
\phi_j (x) = \sigma_j^{-\frac{1}{2}} f_j (x) \qquad  \forall x\in X {.}
\]
{Then, it is not difficult to show} that \eqref{eq:3}, \eqref{eq:outofsample} and \eqref{eq:6} hold true.}

\subsection{An Analytic Characterization of the Support}

Let Assumption~\ref{A} hold true. Collecting the previous results, if  $\hh$ separates $X_\rho$, then Theorem \ref{prop_proj} gives that
$$
X_{\rho}=\set{x\in X\mid F_{X_\rho}(x)=1 } .
$$
The function {$F_\rho = F_{X_\rho}$} is defined by \eqref{FC_def} in terms of the projection $P_\rho$, which, in light of Theorem \ref{T}, can be characterized using the operator $T$. 
Indeed, from the  definition of {$F_\rho$} and~\eqref{nucleo} we have
\begin{equation}\label{Projection}
{F_\rho(x)=} \scal{P_\rho K_x}{K_x}=\scal{ T^\dag TK_x}{K_x}=\scal{\theta(T)K_x}{K_x}=\sum_{j\in J} {\sigma_j\abs{\phi_j(x)}^2}
\end{equation}
where $T^\dag$ is the pseudo-inverse of $T$ and $\theta$ is the Heaviside function $\theta (\sigma) = \ind_{]0 , +\infty [} (\sigma)$ (note that with our definition
$\theta(0)=0)$. The above discussion is summarized in the following theorem.

\begin{thm}\label{primo}
{If $\hh$ satisfies
 Assumption~$\ref{A}$ and separates}  the support $X_\rho$ of the
 measure $\rho$, then
\[ X_{\rho}=
\set{x\in X\mid  F_\rho(x)=1}=\set{x\in X\mid \scal{T^\dag TK_x}{K_x}=1 }.\] 
\end{thm}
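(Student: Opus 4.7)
The plan is to assemble the statement by composing results already established in the section. The theorem says nothing really new: it names the function $F_\rho$ obtained by specialising $F_C$ at $C=X_\rho$, and rewrites the projection $P_\rho$ in operator-theoretic terms via $T$. So the proof is essentially a chain of identifications.

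First, I would invoke Theorem~\ref{prop_proj} with $C=X_\rho$. By hypothesis $\hh$ separates $X_\rho$, so implication i) $\Rightarrow$ iii) of that theorem gives
\[ X_\rho=\{x\in X\mid F_{X_\rho}(x)=K(x,x)\}. \]
Using the normalisation $K(x,x)=1$ from Assumption~\ref{A}.\ref{A4}), this reduces to $X_\rho=\{x\mid F_\rho(x)=1\}$, where I set $F_\rho:=F_{X_\rho}$ as in \eqref{Projection}.

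Second, I would rewrite $P_\rho$ in terms of $T$. Theorem~\ref{T} gives $\ker T = \ker P_\rho$, and since both $T$ and $P_\rho$ are bounded self-adjoint operators, this is equivalent to $\overline{\ran T}=\ran P_\rho$. But $P_\rho$ is by definition the orthogonal projection onto $\overline{\ran T}$. On the other hand, the Moore--Penrose pseudo-inverse of the positive operator $T$ satisfies $T^\dagger T = \theta(T)$ in the spectral calculus (Heaviside convention $\theta(0)=0$), which is precisely the orthogonal projection onto $\overline{\ran T}$. Therefore $P_\rho=T^\dagger T=\theta(T)$.

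Substituting this identity back into the definition $F_\rho(x)=\scal{P_\rho K_x}{K_x}$ yields $F_\rho(x)=\scal{T^\dagger T K_x}{K_x}$, and applying the spectral decomposition $T=\sum_{j\in J}\sigma_j f_j\otimes f_j$ with $\sigma_j>0$ gives $F_\rho(x)=\sum_{j\in J}|f_j(x)|^2$, recovering \eqref{Projection}. Combining this with the first step completes the proof.

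There is no real obstacle here; the only point requiring a line of justification is the identification $P_\rho=T^\dagger T$, which follows from Theorem~\ref{T} together with the standard fact that for a positive self-adjoint operator the pseudo-inverse composed on the left with the operator coincides with the spectral projection onto the closure of its range. Every other step is a direct citation of Theorem~\ref{prop_proj}, the normalisation in Assumption~\ref{A}.\ref{A4}), and the definition~\eqref{FC_def}.
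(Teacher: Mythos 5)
Your proof matches the paper's argument: the paper presents Theorem~\ref{primo} as a summary of the preceding discussion, which combines Theorem~\ref{prop_proj} (implication i)~$\Rightarrow$~iii) with $C=X_\rho$), the normalisation $K(x,x)=1$ from Assumption~\ref{A}.\ref{A4}), and Theorem~\ref{T} together with the identity $P_\rho = T^\dagger T = \theta(T)$, exactly as you do. There is no gap; the identification of $P_\rho$ with $T^\dagger T$ via $\ker T=\ker P_\rho$, self-adjointness, and the spectral characterisation of the Moore--Penrose pseudo-inverse is precisely the chain of equalities the paper records in~\eqref{Projection}.
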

As we discussed before,  a natural question 
is whether  there exist  kernels capable to separate  {\em all} 
possible closed subsets of $X$. In a learning scenario, 
 this can be translated into a {\em universality} property, in the sense 
that it allows to describe {\em any} probability distribution and
learn consistently its support \cite{degylu96}. 
Note that in a supervised learning framework a similar role is played  by the so called universal kernels 
\cite{cadeto10,Steinwart02}. The following section answers positively to the previous question, introducing and 
studying  the concept of   completely separating kernels. Interestingly,  there are universal
kernels in the sense of \cite{cadeto10,Steinwart02} which do not separate all closed subsets of $X$, as for example the Gaussian kernel.

\section{Completely separating reproducing kernel Hilbert spaces}\label{RKHSCR}

The property defining the class of kernels we are interested {in} is captured by the following definition.
\begin{defn}[Completely Separating Kernel]\label{ass1}
A reproducing kernel Hilbert space $\hh$ satisfying Assumption~$\ref{A}.\ref{A3})$  is called  {\em completely separating}
if $\hh$ separates all the subsets $C\subset X$ which are closed with
respect to the metric~$\dk$ defined by~\eqref{metric}. In this case, we also say that the corresponding reproducing kernel is completely separating.
\end{defn}

The definition of completely separating reproducing kernel Hilbert spaces should be compared with the analogous notion of complete regularity for topological spaces. Indeed, we recall that a  topological space is called {\em completely regular} if, for any closed subset $C$ and  any point $x\notin C$, there exists a continuous function $f$ such that $f(x)\neq 0$ and $f(\yps)=0$ for all $\yps\in C$.
As we discuss below, completely separating reproducing kernels do exist. 
For example, for $X=\R^d$ both the Abel kernel $K(x,\yps)=e^{-\nor{x-\yps}/\sigma}$ and the
$\ell_1$-exponential kernel $K(x,\yps)=e^{-\nor{x-\yps}_1/\sigma}$ are completely separating, where $\nor{x}$ is just the Euclidean 
norm of $x=(x^1, \dots, x^d)$ in  $\R^d$ and $\nor{x}_1 = \sum_{j=1}^d
|x_j|$ is the $\ell_1$-norm.
Indeed this follows from Theorem ~\ref{ale} and Proposition~\ref{prodotto} below, which  give sufficient conditions for a kernel to be completely separating in the case $X=\R^d$.
Note that the Gaussian kernel $K(x,\yps)=e^{-\nor{x-\yps}^2/\sigma^2}$ on
$\R^d$ is not completely separating. This is a consequence of the following fact. 
It is known that the elements of the corresponding reproducing kernel Hilbert space $\hh$ are
analytic functions, see Corollary~4.44 in \cite{stch08}. If $C$ is a closed subset of $\R^d$ with non-empty interior and $f\in\hh$ is equal to zero on $C$, then a standard result in  complex analysis implies that $f(x)=0$ for all
$x\in\R^d$, hence $\hh$ does not separate $C$.

We end this section with {Proposition \ref{prodotto}, which gives} a simple way to build completely separating kernels in high dimensional spaces from completely separating kernels in one dimension, {the latter usually being} easier to characterize.

\subsection{Separating Properties of  Translation Invariant Kernels}

The first result studies translation invariant kernels on $\R^d$, {i.e.~}of the form $K(x,\yps) = K(x-\yps)$. We show that if the
Fourier transform of the kernel satisfies a suitable growth condition,
then the corresponding reproducing kernel Hilbert space is completely
separating. {As usual, $C(\R^d)$ denotes the space of real continuous functions on $\R^d$ and, for any $p\in[1,+\infty\,[$, 
$L^p(\R^d)$ is the space of (equivalence classes of)
real functions on $\R^d$ which are $p$-integrable with respect to the
Lebesgue measure $dx$. We will consider the {\em real} spaces
  $L^p_h(\R^d)$ of hermitian complex functions, {i.e.}
$$
L^p_h(\R^d) = \{\phi_1 + i\phi_2 \mid \phi_1,\phi_2\in L^p(\R^d) \mbox{ and } \phi_1(-x) = \phi_1(x) \, , \, \phi_2(-x) = -\phi_2(x)\} .
$$
If $\phi\in L^1(\R^d)$, its Fourier transform
is the complex hermitian bounded continuous function $\hat{\phi}$ on $\R^d$ given by
$$
\hat{\phi} (z) = \int_{\R^d} e^{-2\pi i z\cdot x} \phi(x) d x .
$$
If $\phi\in L^2(\R^d)$, we denote by {$\hat{\phi}\in L^2_h (\R^d)$} its
Fourier-Plancherel transform, obtained extending the above definition
{on functions $\phi\in L^1(\R^d)\cap L^2(\R^d)$ to a unitary map $L^2(\R^d) \ni \phi \to \hat{\phi}\in L^2_h (\R^d)$.}}

Throughout, we assume $\R^d$ to be a metric space with respect to the
standard metric $d_{\R^d}$ induced by the Euclidean norm.

We need a preliminary result characterizing a reproducing
kernel Hilbert space, whose reproducing kernel is continuous  and
integrable, as a suitable non-closed
subspace of $L^2(\R^d)$. The first part is a converse of
Bochner's theorem (Theorem 4.18 in \cite{fol95}).

\begin{prop}\label{Prop. HK con K di tipo pos.}
Let {$K$ be a} continuous function in  $L^1(\R^d)$ such that
its Fourier transform $\hat{K}$ is strictly positive.  Then the kernel $K(x,\yps)=K(x-\yps)$ is 
positive definite and  its corresponding {(real)} reproducing kernel Hilbert
space $\hh$ is
\begin{equation}
{\hh = \left\{ \phi\in C(\R^d)\cap L^2(\R^d)\mid \int_{\R^d}
  \hat{K} (z)^{-1} |\hat{\phi}(z)|^2 dz < +\infty
\right\} }\label{HK con K L1}
\end{equation}
with norm
\begin{equation}\label{HK con K L1 bis}
\nor{\phi}^2 = \int_{\R^d}\hat{K}(z)^{-1}  |\hat{\phi}(z)|^2 dz \qquad \forall \phi\in \hh .
\end{equation}
\end{prop}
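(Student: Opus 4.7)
The plan is to show that the candidate space $\mathcal{H}'$ defined by the right-hand side of~\eqref{HK con K L1}, equipped with the inner product
\[
\langle\phi,\psi\rangle_{\mathcal{H}'} = \int_{\R^d}\hat K(z)^{-1}\hat\phi(z)\overline{\hat\psi(z)}\,dz,
\]
is a reproducing kernel Hilbert space whose reproducing kernel is $K(x-\yps)$, and then invoke the Moore--Aronszajn uniqueness to conclude $\hh = \mathcal{H}'$ together with the norm identity \eqref{HK con K L1 bis}. As a preliminary, positive-definiteness of $K$ is obtained from the Fourier inversion formula $K(x) = \int_{\R^d}e^{2\pi i z\cdot x}\hat K(z)\,dz$ via the standard computation
\[
\sum_{j,k}\alpha_j\overline{\alpha_k}K(x_j-x_k) = \int_{\R^d}\hat K(z)\,\Bigl|\sum_j\alpha_j e^{2\pi i z\cdot x_j}\Bigr|^2\,dz \geq 0.
\]

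The space $\mathcal{H}'$ is a Hilbert space because the map $\phi\mapsto \hat K^{-1/2}\hat\phi$ is by construction an isometric isomorphism onto $L^2(\R^d)$: surjectivity follows since, for any $\psi\in L^2(\R^d)$, the function $\hat K^{1/2}\psi$ lies in $L^2(\R^d)$ (as $\hat K$ is bounded), and hence is the Fourier--Plancherel transform of some $\phi\in L^2(\R^d)$, which then belongs to $\mathcal{H}'$. Using $\widehat{K_\yps}(z)=e^{-2\pi i z\cdot \yps}\hat K(z)$, one computes
\[
\nor{K_\yps}_{\mathcal{H}'}^2 = \int_{\R^d}\hat K(z)\,dz = K(0) < \infty,
\]
so $K_\yps\in\mathcal{H}'$.

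For the reproducing identity, Cauchy--Schwarz gives $\int|\hat\phi(z)|\,dz \leq \bigl(\int\hat K\bigr)^{1/2}\nor{\phi}_{\mathcal{H}'}<\infty$ for every $\phi\in\mathcal{H}'$, so $\hat\phi\in L^1(\R^d)$, and Fourier inversion provides a canonical continuous representative $\phi(x) = \int_{\R^d} e^{2\pi i z\cdot x}\hat\phi(z)\,dz$. Then
\[
\langle\phi,K_\yps\rangle_{\mathcal{H}'} = \int_{\R^d}\hat K(z)^{-1}\hat\phi(z)\,e^{2\pi i z\cdot \yps}\hat K(z)\,dz = \int_{\R^d}\hat\phi(z)e^{2\pi i z\cdot \yps}\,dz = \phi(\yps).
\]
This exhibits $\mathcal{H}'$ as an RKHS of (continuous representatives of) functions on $\R^d$ with reproducing kernel $K(x-\yps)$, so by Moore--Aronszajn uniqueness $\mathcal{H}' = \hh$, and the norm in $\hh$ is given by \eqref{HK con K L1 bis}.

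The main obstacle is the Fourier-inversion step: one needs $\hat K\in L^1(\R^d)$ both so that $K_\yps\in\mathcal{H}'$ and so that the pointwise representation $K(x) = \int e^{2\pi i z\cdot x}\hat K(z)\,dz$ holds. This is where the combination of continuity, $K\in L^1$, and $\hat K>0$ must be used; if $\hat K$ is not a priori integrable, one can reduce to this setting by convolving $K$ with a Gaussian of vanishing variance and passing to the limit in the positive-definiteness computation.
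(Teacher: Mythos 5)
Your proposal is mathematically sound and reaches the same conclusions, but by a genuinely different route than the paper. The paper proves positive-definiteness via a Dirac-sequence argument ($\scal{L_K\pphi_n^{x_i}}{\pphi_n^{x_j}}_{L^2}\to K(x_j,x_i)$), and then identifies $\hh$ abstractly by appealing to the external result that $L_K^{1/2}:L^2(\R^d)\to\hh$ is a unitary isomorphism (Proposition $6.1$ of \cite{cadeto06}); the description \eqref{HK con K L1}--\eqref{HK con K L1 bis} then drops out because $\widehat{L_K^{1/2}\phi}=\hat K^{1/2}\hat\phi$. You instead construct the candidate space $\mathcal H'$ explicitly and verify the RKHS axioms by hand (completeness via the isometry $\phi\mapsto\hat K^{-1/2}\hat\phi$ onto $L^2$, membership $K_\yps\in\mathcal H'$, the reproducing identity via Fourier inversion), then invoke Moore--Aronszajn uniqueness. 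Your route is more self-contained and elementary, avoiding the operator-theoretic machinery; the price you pay is having to establish $\hat K\in L^1(\R^d)$ explicitly, which the abstract result hides.

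On that point, your closing remark is slightly off in spirit. Under the stated hypotheses ($K$ continuous, $K\in L^1$, $\hat K\geq 0$) one does not need to ``reduce to the integrable case'': the condition $\hat K\in L^1(\R^d)$ is \emph{automatic}. Indeed, writing $\pphi_\eps$ for a centred Gaussian of variance $\eps$, one has $(K*\pphi_\eps)(0)=\int_{\R^d}\hat K(z)\hat\pphi_\eps(z)\,dz$; letting $\eps\to 0$, the left side tends to $K(0)$ by continuity, while the right side tends to $\int_{\R^d}\hat K(z)\,dz$ by monotone convergence (since $\hat\pphi_\eps\nearrow 1$ and $\hat K\geq 0$). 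Hence $\int\hat K=K(0)<\infty$. The Gaussian convolution you mention is thus the \emph{proof} that $\hat K\in L^1$, not a workaround for a possible failure; once this is made explicit, the Fourier-inversion steps in your argument (both the positive-definiteness computation and the reproducing identity) are fully justified and the proof is complete.
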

\begin{proof}
{The integral operator
$$
({L_K} \phi)(x)=\int_{\R^d}K(x-\yps) \phi(\yps)\,d\yps=(K*\phi)(x),
$$
is  well defined and bounded from $L^2(\R^d)$ into $L^2(\R^d)$} since $K\in L^1(\R^d)$.  Since ${L_K}$ is a
convolution operator, Fourier transform 
turns it into the operator of multiplication by the bounded function
$\hat{K}$, that is $\widehat{{L_K} \phi} = \hat{K} \hat{\phi}$ for all
{$\phi\in L^2(\R ^d)$}.  It follows that 
$$
\scal{{L_K} \phi}{\phi}_{L^2}= \scal{\hat{K}\hat{\phi}}{\hat{\phi}}_{L^2} {> 0\qquad
\forall \phi\in L^2(\R^d)\setminus\{0\}}
$$
since {$\hat{K}> 0$} by assumption, hence ${L_K}$ is a {strictly} positive
operator. In order to show that $K$ is positive definite, pick a Dirac
sequence $( \pphi_n )_{n\geq 1}$
{as in Chapter VIII.3 of \cite{la93}}, and, for each
$x\in X$, define $\pphi_n^x $ be equal to $\pphi_n^x(\yps)= \pphi_n (\yps- x)$. Fixed
$x_1,x_2,\ldots,x_N\in \R^d$ and $c_1,c_2,\ldots,c_N\in {\R}$, set $\phi_n = \sum_{i=1}^N c_i \pphi_n^{x_i}$, then
$$
0 \leq\scal{{L_K} \phi_n}{\phi_n}_{L^2} = \sum_{i,j=1}^N c_i {c_j} \scal{{L_K} \pphi_n^{x_i}}{\pphi_n^{x_j}}_{L^2}
\mathop{\frecc}_{n\to\infty} \, \sum_{i,j=1}^N c_i {c_j} K(x_j,x_i),
$$
where the last equality is due to continuity of $K$ and the usual properties of Dirac sequences. It follows that $\sum_{i,j=1}^N c_i {c_j} K(x_j,x_i)\geq 0$, i.e.~the kernel $K$ is positive definite.\\
Let $\hh$ be {the (real)} reproducing kernel Hilbert space associated to $K$. Since
the support of the Lebesgue measure is $\R^d$,  {Mercer theorem (as stated e.g.~in Proposition 6.1 of \cite{cadeto06} and the subsequent
  discussion, or Theorem 2.11 of \cite{stsc12}) shows that ${L_K}^{1/2}$ is a unitary
isomorphism from $L^2 (\R^d)$ onto $\hh$. More precisely, for any
$\psi\in L^2 (\R^d)$ there exists a unique function $\phi\in C(\R^d)$
such that its equivalence class belongs to ${L_K}^{1/2}\psi\in L^2(\R^d)$, and the
correspondence $\psi\mapsto \phi$ is an {isometry} from $L^2
(\R^d)$ onto $\hh$. By further applying the Fourier-Plancherel
transform and taking into account that $\hat{\phi}(z)=
\sqrt{\hat{K}(z)}\, \hat{\psi}(z)$ for almost all $z\in\R^d$, one has
$$
\nor{\phi}^2_\hh = \nor{\psi}^2_{L^2(\R^d)} = \nor{\hat{\psi}}^2_{L^2_h(\R^d)} = \int_{\R^d}\hat{K}(z)^{-1}  |\hat{\phi}(z)|^2 dz <+\infty ,
$$
so that~\eqref{HK con K L1} and \eqref{HK con K L1 bis} follow.}
\end{proof}
We now state a sufficient condition on $K$ ensuring
that $\hh$ is completely separating. 
\begin{thm}\label{ale}
Let {$K$} be a continuous function in $L^1(\R^d)$ such that 
\begin{equation}
\hat{K}(z) \geq \frac{a}{\left(1+b\nor{z}^{\gamma_1}\right)^{\gamma_2}} \qquad \forall y\in\R^d \label{poli}
\end{equation}
for some $a,b,\gamma_1,\gamma_2 >0$.  Then,
\begin{enumerate}[i)]
\item the translation invariant kernel $K(x,\yps)=K(x-\yps)$ is positive definite and continuous; 
\item the topologies induced by the metric $\dk$ and the Euclidean metric $d_{\R^d}$ coincide on $\R^d$;
\item the kernel $K$ is completely separating.
\end{enumerate}
\end{thm}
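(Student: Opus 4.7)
The three parts rest on Proposition~\ref{Prop. HK con K di tipo pos.}: the growth bound~\eqref{poli} forces $\hat K(z)>0$ everywhere, so that result identifies $\hh$ with the subspace $\{\phi\in L^2(\R^d) : \int \hat K(z)^{-1}|\hat\phi(z)|^2\,dz<\infty\}$ of $L^2(\R^d)$. The polynomial upper bound $\hat K(z)^{-1}\le a^{-1}(1+b\nor{z}^{\gamma_1})^{\gamma_2}$ will give $C^\infty_c(\R^d)\subset\hh$, from which the separation in (iii) follows by a standard bump-function argument; (ii) will fall out as a consequence of (iii); and (i) is essentially immediate.

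For (i), the strict positivity of $\hat K$ together with the remaining hypotheses matches those of Proposition~\ref{Prop. HK con K di tipo pos.}, so $K(x,\yps):=K(x-\yps)$ is of positive type, and its continuity on $\R^d\times\R^d$ follows from continuity of $K$ on $\R^d$.

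For (iii) I first prove the separation property for every subset $C\subset\R^d$ closed in the Euclidean topology. Given such a $C$ and any $x\notin C$, I choose $\eps>0$ with $B(x,\eps)\cap C=\emptyset$ and pick $\pphi\in C^\infty_c(\R^d)$ with $\pphi(x)=1$ and $\supp\pphi\subset B(x,\eps)$. Then $\hat\pphi$ is Schwartz, so
\[
\int_{\R^d}(1+b\nor{z}^{\gamma_1})^{\gamma_2}\abs{\hat\pphi(z)}^2\,dz<+\infty,
\]
and Proposition~\ref{Prop. HK con K di tipo pos.} combined with~\eqref{poli} yields $\pphi\in\hh$. Continuity of $K$ forces every member of $\hh$ to admit a continuous pointwise representative; since $\pphi$ is itself continuous, it is that representative, so the values $\pphi(x)=1\ne 0$ and $\pphi\equiv 0$ on $C$ can be read off directly.

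For (ii), continuity of $K$ makes $d_K$ continuous with respect to the Euclidean topology on $\R^d\times\R^d$, hence the $d_K$-topology is coarser than the Euclidean one. Conversely, any Euclidean-closed $C$ is separated by $\hh$ by the preceding paragraph, and then the closing assertion of Theorem~\ref{prop_proj} implies that $C$ is $d_K$-closed, giving the reverse inclusion of topologies. The two topologies therefore coincide; in particular the class of $d_K$-closed subsets equals the class of Euclidean-closed subsets, and the full statement of (iii) is an immediate consequence of what was already shown. The only non-routine step is verifying $C^\infty_c(\R^d)\subset\hh$; this is precisely where the polynomial-versus-Schwartz decay tradeoff enabled by~\eqref{poli} is crucial, and everything else is a direct appeal to Proposition~\ref{Prop. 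HK con K di tipo pos.} and Theorem~\ref{prop_proj}.
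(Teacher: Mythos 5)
Your proof is correct and follows essentially the same route as the paper: establish $\hat K>0$ to invoke Proposition~\ref{Prop. HK con K di tipo pos.} for (i), use the polynomial bound on $\hat K^{-1}$ against Schwartz decay to get $C^\infty_c(\R^d)\subset\hh$, and then use bump functions to separate Euclidean-closed sets. The only difference is cosmetic: the paper finishes (ii) and (iii) by citing Proposition~\ref{c:connection}, whereas you unpack its proof inline (continuity of $K$ gives that $\dk$-topology is coarser, the closing assertion of Theorem~\ref{prop_proj} gives the converse); this is the same argument, and the one small prerequisite you glide over --- that Assumption~\ref{A}.\ref{A3}) holds before invoking Theorem~\ref{prop_proj} --- is immediate because $\hh$ separates singletons.
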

\begin{proof}
Condition~\eqref{poli} implies that $\hat{K}$ is strictly positive, so
item~i) follows from Proposition~\ref{Prop. HK con K di tipo pos.}. In particular, from~\eqref{HK con K L1}  we see {that,} {if  $\phi\in L^2 (\R^d)$ and
$\int_{\R^d}\left(1+b\nor{z}^{\gamma_1}\right)^{\gamma_2} |\hat{\phi}(z)|^2 dz$ is finite, 
then $\phi\in\hh$. }
This implies that $C^\infty_c (\R^d) \subset \hh$: indeed,  if $\phi\in C^\infty_c (\R^d)$, then $\hat{\phi}$ is a Schwartz
function on $\R^d$ {(Theorem 3.2 in \cite{stwe71})}, hence the
last integral is convergent. Functions in $C^\infty_c (\R^d)$ separate
every set $C$ which is closed with respect to the metric $d_{\R^d}$
{(as it easily follows by suitably translating and dilating the
  function $\psi\in C^\infty_c (\R^d)$ defined in item (b) p.~19 of
  \cite{stwe71}),} hence $\hh$ separates the $d_{\R^d}$-closed subsets. Items ii) and iii) then follow from Proposition \ref{c:connection}.
\end{proof}
As an application, we show that the Abel kernel is completely separating. 
\begin{prop}\label{Prop. exp sep.}
Let
\begin{equation}\label{AbelKer}
K : \R^d \times \R^d \to \R , \qquad K(x,\yps) = e^{-\frac{\nor{x-\yps}}{\sigma}} ,
\end{equation}
with $\sigma > 0$. Then $K$ is a positive definite kernel and the
corresponding reproducing
kernel Hilbert space $\hh$ is completely separating for all $d\geq 1$.
\end{prop}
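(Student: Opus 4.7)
The plan is to apply Theorem \ref{ale} directly, so that the whole task reduces to computing (or estimating from below) the Fourier transform of the Abel kernel and checking the polynomial lower bound \eqref{poli}. The function $K(x) = e^{-\nor{x}/\sigma}$ is continuous and, by exponential decay, lies in $L^1(\R^d)$, so the integrability and continuity hypotheses of Theorem \ref{ale} are trivially satisfied. The only real content is to identify $\hat K$ explicitly and to read off the required decay and positivity.

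The main step, and the main obstacle, is to compute
\[
\hat K(z) = \int_{\R^d} e^{-2\pi i z \cdot x}\, e^{-\nor{x}/\sigma}\, dx .
\]
I would do this via subordination: writing the Laplace transform of a one-sided stable law, one has the classical identity
\[
e^{-\nor{x}/\sigma} = \int_0^{\infty} \frac{1}{2\sqrt{\pi t}}\, e^{-t/(4\sigma^2)}\, e^{-\nor{x}^2/(4\sigma^2 t) \cdot \sigma^2}\, \frac{dt}{\sqrt{t}}
\]
(up to a normalization that is easy to fix), which expresses the Abel density on $\R^d$ as a mixture of isotropic Gaussians. Interchanging the integrals — justified by Fubini since everything is nonnegative — reduces the computation to Fourier transforms of Gaussians, which are Gaussian, and the remaining one-dimensional integral in $t$ can be evaluated in closed form. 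The outcome (equivalently, recognizing $e^{-\nor{x}/\sigma}$ as a constant multiple of the Poisson kernel of the upper half space $\R^{d+1}_+$ evaluated on the hyperplane at height $\sigma$) is
\[
\hat K(z) = \frac{c_d\, \sigma^d}{\bigl(1 + 4\pi^2\sigma^2 \nor{z}^2\bigr)^{(d+1)/2}}, \qquad c_d = \frac{\Gamma\bigl((d+1)/2\bigr)}{\pi^{(d+1)/2}}.
\]

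Once this formula is in hand, positivity of $\hat K$ is immediate, so Proposition \ref{Prop. HK con K di tipo pos.} already gives that $K(x,\yps) = K(x-\yps)$ is a positive definite kernel. Furthermore the explicit form shows that \eqref{poli} holds with the choice
\[
a = c_d\, \sigma^d, \qquad b = 4\pi^2 \sigma^2, \qquad \gamma_1 = 2, \qquad \gamma_2 = \frac{d+1}{2},
\]
all strictly positive. Theorem \ref{ale} then applies and yields simultaneously continuity of $K$, equivalence of $\dk$ with the Euclidean metric $d_{\R^d}$, and the fact that $\hh$ is completely separating, which is exactly the statement of the proposition. The only mildly technical ingredient is the Fourier transform computation above; everything else is an application of results already established in the paper.
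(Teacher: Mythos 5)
Your proposal is correct and follows essentially the same route as the paper's proof: compute $\hat K$, check the polynomial lower bound \eqref{poli}, and invoke Theorem~\ref{ale}. The paper simply cites Stein--Weiss (Theorem 1.14) for the Fourier transform while you sketch a subordination derivation; note also that your final expression for $\hat K$ is missing an overall factor of $(2\pi)^d$, but this is immaterial since only the decay rate, not the constant, matters for \eqref{poli}.
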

\begin{proof}
A standard Fourier transform computation gives
\begin{equation}\label{K^}
\hat{K} (z) = \frac{1}{2\pi\sigma}\pi^{-\frac{d+1}{2}} \Gamma \left(\frac{d+1}{2}\right) \left( \frac {1} {4\pi^2\sigma^2}+ \nor{z}^2 \right)^{-\frac{d+1}{2}} ,
\end{equation}
where $\Gamma$ is Euler gamma function (Theorem~1.14 in \cite{stwe71}). The claim
then follows from Theorem~\ref{ale}. 
\end{proof}
Equations~\eqref{HK con K L1}, \eqref{HK con K L1 bis} and \eqref{K^} show
that (up to a rescaling of the norm) the reproducing kernel Hilbert space associated to the Abel Kernel \eqref{AbelKer} is just 
$W^{(d+1)/2} (\R^d)$,  the Sobolev space of order $(d+1)/2$.

\subsection{Building Separating Kernels}

The following result gives  a way to construct completely separating reproducing
kernel Hilbert spaces on high dimensional spaces. 
\begin{prop}\label{prodotto}
If $X_i$, $i = 1,2\ldots d$, are sets and $K^{(i)}$ are
completely separating reproducing kernels on $X_i$ for all $i = 1,2\ldots d$, then the product kernel
\[K((x_1 , \ldots, x_d),(\yps_1 , \ldots, \yps_d)) {=} K^{(1)} (x_1,\yps_1) \cdots K^{(d)} (x_d,\yps_d)\] 
is completely separating on the
set $X = X_1 \times X_2 \times\ldots\times X_d$. 
\end{prop}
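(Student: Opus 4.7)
The idea is to combine the single-variable separating functions into one product that lives in the tensor-product RKHS of the product kernel, once we have first shown that the $\dk$-topology on $X$ is at most as fine as the product of the $d_{K^{(i)}}$-topologies.

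First I would recall the classical fact (Aronszajn) that the reproducing kernel Hilbert space of the product kernel $K=K^{(1)}\cdots K^{(d)}$ is (canonically isomorphic to) the tensor product $\hh^{(1)}\otimes\cdots\otimes\hh^{(d)}$, where an elementary tensor $f_1\otimes\cdots\otimes f_d$ acts pointwise on $X$ as $(y_1,\dots,y_d)\mapsto f_1(y_1)\cdots f_d(y_d)$. In particular, every such product function belongs to $\hh$, and Assumption~\ref{A}.\ref{A3}) for $K$ is inherited from the same assumption for each $K^{(i)}$, so that Definition~\ref{separated} applies.

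Next I would compare the metrics. Because $K^{(i)}(z,z)=1$, for each fixed $j$ and each choice of the remaining coordinates the map $y_j\mapsto(x_1,\dots,x_{j-1},y_j,x_{j+1},\dots,x_d)$ is a $d_{K^{(j)}}$-to-$\dk$ isometry, as one sees directly from \eqref{metric}. A $d$-fold application of the triangle inequality then yields
\[
\dk\bigl((y_1,\dots,y_d),(x_1,\dots,x_d)\bigr)\;\leq\;\sum_{i=1}^d d_{K^{(i)}}(y_i,x_i).
\]
Hence the $\dk$-topology on $X$ is coarser than (or equal to) the product of the individual metric topologies, and every $\dk$-closed subset of $X$ is also closed in the product topology.

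For the main construction, fix a $\dk$-closed set $C\subset X$ and a point $p=(x_1,\dots,x_d)\notin C$. By the comparison above, $C$ is product-closed, so I can choose $d_{K^{(i)}}$-open neighborhoods $U_i\ni x_i$ with $(U_1\times\cdots\times U_d)\cap C=\emptyset$. Each $D_i:=X_i\setminus U_i$ is then $d_{K^{(i)}}$-closed and avoids $x_i$, so complete separation of $K^{(i)}$ produces $f_i\in\hh^{(i)}$ with $f_i(x_i)\neq 0$ and $f_i\equiv0$ on $D_i$. Setting $f=f_1\otimes\cdots\otimes f_d\in\hh$, we get $f(p)=\prod_i f_i(x_i)\neq 0$; and for any $y\in C$, since $y\notin U_1\times\cdots\times U_d$, some coordinate $y_i$ lies in $D_i$, forcing $f_i(y_i)=0$ and hence $f(y)=0$. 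Thus $f$ is the separator required by Definition~\ref{separated}, and since $C$ was arbitrary $K$ is completely separating.

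I expect the main obstacle to be the topological comparison in the middle paragraph: without knowing that $\dk$-closed sets are closed in the product topology, there is no way to trap $p$ inside a rectangle $U_1\times\cdots\times U_d$ that misses $C$, which is precisely what allows the factor-wise separators to be combined into a single elementary tensor. Everything else is a routine assembly.
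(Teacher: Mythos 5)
Your proof follows the same overall architecture as the paper's: identify $\hh$ with the tensor product $\hh_1\otimes\cdots\otimes\hh_d$, show that $\dk$-closed subsets of $X$ are closed in the product of the $d_{K^{(i)}}$-topologies, use that to trap $p\notin C$ in a box $U_1\times\cdots\times U_d$ disjoint from $C$, separate $x_i$ from $X_i\setminus U_i$ in each factor, and take the elementary tensor of the separators. That is exactly the paper's argument, and your version is correct modulo one small caveat.

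The one place your proof diverges from the paper's is the topological comparison step, and there you implicitly add a hypothesis. You justify the inequality $\dk(\mathbf y,\mathbf x)\le\sum_i d_{K^{(i)}}(y_i,x_i)$ via the claim that freezing all but one coordinate is a $d_{K^{(j)}}$-to-$\dk$ isometry, which holds only because you assume $K^{(i)}(z,z)=1$. But the definition of ``completely separating'' (Definition~\ref{ass1}) only invokes Assumption~\ref{A}.\ref{A3}), not the normalization~\ref{A}.\ref{A4}), and indeed Theorem~\ref{ale} produces completely separating translation-invariant kernels that need not be normalized. The paper sidesteps this by proving the topological comparison via sequential continuity of each $K^{(i)}$ directly, which requires no normalization. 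Your metric inequality is actually a nicer, more quantitative statement when normalization does hold (and it can be repaired in general by replacing ``isometry'' with ``locally Lipschitz,'' using continuity of $y_i\mapsto K^{(i)}(y_i,y_i)$), so this is a small fixable gap rather than a flaw in the strategy; but as written, the proposal proves a slightly narrower statement than the proposition claims.
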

\begin{proof}
Each set $X_i$ and $X$ are endowed with the metric $d_{K^{(i)}}$ and $\dk$ induced by the corresponding kernels, and $\hh_i$ and $\hh$ denote the reproducing kernel Hilbert spaces with kernels $K^{(i)}$ and $K$, respectively. A standard result gives that  $\hh = \hh_1\otimes \ldots \otimes \hh_d$ and $K_x = K^{(1)}_{x_1}\otimes \ldots \otimes K^{(d)}_{x_d}$ for all $x = (x_1 , \ldots, x_d) \in X$  \cite{aron50} .
We claim that
    the $\dk$-topology on $X$ is contained in the product
    topology of the $d_{K^{(i)}}$-topologies on $X_i$ (actually, it is not
    difficult to show that the two topologies coincide). Indeed, if
    $( x_{i,k} )_{k\geq 1}$ are sequences in $X_i$ such 
    that $\lim_{k\to\infty} d_{K^{(i)}}(x_{i,k} , x_i) = 0$ for all $i=1,\ldots, d$, then
\begin{eqnarray*}
  && \lim_{k\to\infty} \dk\left( (x_{1,k} ,\ldots , x_{d,k}) , (x_1 ,\ldots ,
    x_d) \right)^2 = \lim_{k\to\infty} \nor{K_{(x_{1,k} ,\ldots , x_{d,k})} -
    K_{(x_1 ,\ldots , x_d)}}^2 \\ 
  && \qquad \quad = \lim_{k\to\infty} [K^{(1)} (x_{1,k} , x_{1,k}) \cdots K^{(d)} (x_{d,k},
  x_{d,k}) - {2} \, K^{(1)} (x_{1,k} , x_1) \cdots K^{(d)} (x_{d,k},x_d) \\
  && \qquad \qquad + K^{(1)} (x_1 , x_1) \cdots K^{(d)} (x_d,x_d)] \\
  && \qquad \quad =0 ,
\end{eqnarray*}
since $\lim_{k\to\infty} K^{(i)}(x_{i,k} , x_{i,k}) = \lim_{k\to\infty} K^{(i)}(x_{i,k} , x_i) = K^{(i)}(x_i ,
x_i)$. We now prove that $\hh$ is completely separating. If $C\subset X$ is $\dk$-closed and $x = (x_1 , \ldots, x_d) \in X\setminus C$, since $C$ is also closed in the product topology, for all $i=1,\ldots, d$ there exists an open neighborhood $U_i$ of $x_i$ in $X_i$ such that $U = U_1 \times \ldots \times U_d\subset X\setminus C$.   Since each $\hh_i$ is completely separating,  for all $i=1,\ldots,d$ there exists $f_i \in
\hh_i$ such that $f_i (x_i) \neq 0$ and $f_i (\yps_i) = 0$ for all $\yps_i\in X_i \setminus U_i$. Then the product function $f=f_1\otimes \ldots \otimes f_d$ is in $\hh$, and satisfies $f(x) \neq 0$
and $f (\yps) = 0$ for all $\yps\in C$.
\end{proof}
As a consequence, the Abel kernel defined by  the $\ell_1$-norm
\begin{align*} 
K (x,\yps)  = e^{-\frac{\nor{x-\yps}_1}{\sigma}} =\prod_{i=1}^d\, e^{-\frac{\abs{x_i-\yps_i}}{\sigma}}, \qquad x=(x_1,\ldots,x_d), \, \yps=(\yps_1,\ldots,\yps_d)
\end{align*}
is completely separating since each kernel in the product is positive definite and completely 
separating   by Proposition \ref{Prop. exp sep.}.

\section{A Spectral  Approach to Learning the Support}\label{sec:learn}

In this section we study the set estimation problem in the context of
learning theory.  We fix a triple $(X,\rho,K)$ as in Section \ref{sec:basic}, and assume throughout that the reproducing kernel $K$ satisfies 
Assumption~\ref{A}. We regard $X$ as a metric space with respect to $\dk$, and continue to denote by $X_\rho$ the support of $\rho$ defined in Proposition \ref{Prop.supp.}.

If $\hh$ separates $X_\rho$, Theorem~\ref{primo}  shows that the support   $X_\rho$  is the $1$-level set of a suitable function $F_\rho$ defined by the integral   operator $T$, and therefore depending on $K$ and $\rho$. However, the probability   distribution $\rho$ is unknown, as we only have a set of i.i.d.~points $x_1, \dots, x_n$ sampled from $\rho$ at our disposal. Our task is now to use our sample in order to estimate the set $X_\rho$.

The definition of $T$ given by~\eqref{eq:2} suggests that it can be estimated by 
the data dependent operator
\begin{equation}\label{eq:Tn} 
T_n = \frac{1}{n}\sum_{i=1}^n  K_{x_i}\otimes K_{x_i}.
\end{equation}
The operator $T_n$ is positive and with finite rank; in particular,
$T_n\in\mathcal S_1$ and $\nor{T_n}_{\mathcal S_1} = \tr{T_n} = 1$. We
denote by $(\sigma^{(n)}_j)_{j\in J_n}$ the strictly positive
eigenvalues of $T_n$ (each one repeated according to its multiplicity)
and by {$\big(\sqrt{\sigma^{(n)}_j}\phi^{(n)}_j\big)_{j\in
    J_n}$} the corresponding eigenvectors;
note that in the present case the index set $J_n$ is finite. However,
though $T_n$ converges to $T$ in all relevant topologies (see Lemma
\ref{concentration} and Remark \ref{rem:S1} below), in general
$T^\dag_nT_n$ does not converge to $T^\dag T$ since $T^\dag$ may be
{unbounded}, or, equivalently, since $0$ may be an accumulation point
of the spectrum of $T$ when $\dim\hh = \infty$.  Hence, the problem of
support estimation is {ill-posed}, and regularization techniques are
needed to restore well-posedness and ensure a stable solution. In the
following sections, we will show that spectral regularization
\cite{bapero07,enhane,logerfo08} can be used to learn the support
efficiently from the data.

\subsection{Regularized Estimators via Spectral Filtering}\label{algo}

An approach which is classical in inverse problems (see \cite{enhane}, and also \cite{bapero07,logerfo08} for applications to learning)
consists in replacing the pseudo-inverses $T^\dag_n$ and $T^\dag$ with some bounded approximations 
obtained by {\em filtering out} the components  corresponding to the eigenvalues of $T_n$ and $T$ which are smaller than a fixed regularization parameter $\lambda$. This is achieved by introducing a suitable {\em filter function} $\G:[0,+\infty[\to [0,+\infty[$ and replacing $T^\dag_n$, $T$ with the bounded operators $\G(T_n)$, $\G(T)$ defined by spectral calculus. If the function $\G$ is sufficiently regular, then convergence of $T_n$ to $T$ implies convergence of $\G(T_n)$ to $\G(T)$ in the Hilbert-Schmidt norm. On the other hand, if the regularization parameter $\lambda$ goes to zero, then $\G(T)$ converges to $T^\dag$ in an appropriate sense. We are now going to apply  the same idea to our setting. Since we are interested in approximating the orthogonal projection $P_\rho = T^\dag T = \theta(T)$ rather than the pseudo-inverse $T^\dag$,  we introduce a low-pass filter $\RR$, in a way that the bounded operator $\RR(T)$ is an approximation of $\theta(T)$. In terms of the previously defined function $\G$, this can be achieved by setting $\RR(\sigma){=} \G (\sigma) \sigma$ for all $\sigma\in\R$, so that $\RR(T) = \G (T)T$. Explicitely, in terms of the spectral decompositions of $T_n$ and $T$ we have
\[{
\RR(T_n)=\sum_{j\in J_n} \RR(\sigma_j^{(n)})\  {\big(\sqrt{\sigma_j^{(n)}} \phi^{(n)}_j\big) \otimes \big(\sqrt{\sigma_j^{(n)}} \phi^{(n)}_j\big)}, 
\qquad \RR(T)=\sum_{j\in J} \RR(\sigma_j)\ (\sqrt{\sigma_j} \phi_j)\otimes (\sqrt{\sigma_j} \phi_j) .}
\]
Note that, since the spectra of $T_n$ and $T$ are both contained in the interval $[0,1]$, we can assume that the functions $\G$ and $\RR$ are defined on $[0,1]$. Moreover, as the operators $\RR(T_n)$ and $\RR(T)$ approximate orthogonal projections, it is useful to have the bound $0\leq\RR(T_n),\RR(T)\leq {I}$ satisfied for all $T_n$ and $T$'s, and this can be achieved by choosing the function $\RR$ such that $0\leq\RR(\sigma)\leq 1$ for all $\sigma$.

As a consequence of the above discussion, the characterization of  filter functions giving rise to stable algorithms is captured by the following assumption.

\begin{assm}\label{B}
The family of functions $(\RR)_{\lambda>0}$, with $\RR:[0,1]\to [0,1]$ for all  $\lambda>0$, has the following properties:
\begin{enumerate}[a)]
\item\label{B1} $\RR(0) = 0$ for all $\lambda>0$; 
\item\label{B2} for all $\sigma >0$, we have $\lim_{\la\to 0^+} \RR(\sigma)=1$;
\item\label{B3} for all $\la>0$,  there exists a positive constant $L_\la$ such that 
\[
|\RR(\sigma)-\RR(\tau)|\leq L_\la |\sigma-\tau| \qquad \forall \sigma,\tau \in [0,1] .
\]
\end{enumerate}
\end{assm}
By Assumption \ref{B}.\ref{B1}, there exists a function $\G:[0,1]\to[0,+\infty[$ such that $\RR(\sigma) = \G(\sigma) \sigma$. On the other hand, by Assumption \ref{B}.\ref{B2}) we have $\lim_{\lambda\to 0^+} \RR (\sigma) = \theta (\sigma)$ for all $\sigma\in [0,1]$. Assumption \ref{B}.\ref{B3}) is of technical nature, and will become clear in Section \ref{subsec:cons}; here we note that in particular it implies that $\RR$ is a continuous function for all $\la >0$.

A few examples of filter functions $\RR$ satisfying Assumption \ref{B} and of corresponding functions $\G$ are given in Table \ref{table filter}.
It is easy to check that for each of them  $L_{\la}=1/\la$. See \cite{enhane} for further examples.

\begin{table}[htp]
\begin{center}
  \begin{tabular}[c]{|l|c|c|}
\hline
 & & \\
Tikhonov regularization &
$\displaystyle{\RR(\sigma)=\frac{\sigma}{\sigma+\lambda}}$ & $\displaystyle{\G(\sigma)=\frac{1}{\sigma+\lambda}}$ \\
& &
\\
\hline 
& & \\ 
Spectral cut-off & $\displaystyle{\RR(\sigma)=
  \ind_{]\la,+\infty[}(\sigma)+ \frac{\sigma}{\lambda}
  \ind_{[0,\la]}(\sigma)}$ & $\displaystyle{\G(\sigma)=
  \frac{1}{\sigma}\ind_{]\la,+\infty[}(\sigma)+ \frac{1}{\lambda}
  \ind_{[0,\la]}(\sigma)}$\\
& & \\
\hline 
& & \\ 
Landweber filter & $\displaystyle{\RR(\sigma)=\sigma
     \sum_{k=0}^{{m_\la}}(1-\sigma)^k }$ & $\displaystyle{\G(\sigma)=
     \sum_{k=0}^{{m_\la}}(1-\sigma)^k }$ \\ & & \\
     \hline
\end{tabular}
\end{center}
\caption{Examples of filter functions satisfying Assumption
  \ref{B}. For Landweber filter {$m_\la$ is an integer such
    that $\lim_{\la\to 0} m_\la= \infty$ }.}\label{table filter}
\end{table}

For a chosen filter,   the corresponding regularized empirical estimator of
$F_\rho$ is  defined by 
\begin{equation}\label{estimator}
F_n(x)=\scal{  \RRn (T_n)K_x}{K_x}= {\sum_{j\in J_n} \RRn (\sigma_j^{(n)}) \sigma^{(n)}_j \abs{\phi^{(n)}_j(x)}^2}
 \end{equation}
where we allow the regularization parameter $\la_n$ to depend  on the number of samples  $n$. 
Note that the functions $F_n$ and $F_\rho$ are continuous on $X$ by continuity of the mapping $ x\mapsto K_x$ (see~\ref{P.1.i}) of Proposition \ref{metrica}). In Section~\ref{consistency} we will show that, for an appropriate choice of the sequence $(\la_n)_{n\geq 1}$, 
the estimator $F_n$ converges almost surely to $F_\rho$ uniformly on compact subsets of $X$. Unfortunately, this does not  imply convergence of the $1$-level sets of $F_n$ to the 
$1$-level set of $F_\rho$
in any sense (as, for example,  with respect to the Hausdorff distance).  However,
an estimator of $X_\rho$ can be obtained by setting
\begin{equation}\label{set_est}
X_n=\{x\in X\mid F_n(x)\geq 1-\tau_n\},
\end{equation}
where $\tau_n > 0$  is an off-set parameter that depends on the sample size $n$ (recall that $F_n$ takes values in $[0,1]$).
In  Section~\ref{consistency} we show that, for   a suitable choice of the sequence $(\tau_n)_{n\geq 1}$, the {closed} set $X_n$ is indeed a 
consistent estimator of the support with respect to the Hausdorff distance.

In the following section 
we discuss some remarks about the  computation of $F_n$.

\subsection{Algorithmic and Computational Aspects}\label{complexity}

We show that the computation of $F_n$ (hence of   $X_n$) reduces to a  finite 
dimensional problem involving the empirical kernel matrix defined by the   data. 
To this purpose, it is useful to introduce the sampling operator  
\begin{equation}\label{eq:def_Sn}
S_n:\hh\to {\R^n}\qquad  S_n f = \col{f(x_1) \\ \vdots \\ f(x_n)},
\end{equation}
which can be  interpreted as the restriction operator which evaluates functions in $\hh$
on the points of the training set.
The {transpose} of $S_n$ is 
\[
S_n^{\top}: {\R^n} \to \hh\qquad  S^{\top}_n \col{\alpha_1 \\ \vdots \\ \alpha_n} =\sum_{i=1}^n \alpha_i K_{x_i}, 
\]
and $S_n^{\top}$ can  be interpreted as the out-of-sample extension operator \cite{coilaf06,RoBeDe10}. 
A simple computation shows that 
\[
T_n=\frac{1}{n} S_n^{\top} S_n\qquad S_nS_n^{\top}={\mathbf K}_n\qquad \bigl({\mathbf K}_n\bigr)_{ij}= K(x_i,x_j).\]
Hence, considering the filter given in the form $\RR(T_n)=\G(T_n)T_n$, we have
\[ \RR(T_n)= \G \left(\frac{S_n^{\top} S_n}{n}\right)\frac{S_n^{\top}
  S_n}{n}=\frac{1}{n} S_n^{\top}\, \G\left(\frac{S_nS^{\top}_n}{n}\right)\,S_n
=\frac{1}{n} S_n^{\top}\, \G\left(\frac{{\mathbf K}_n}{n}\right)\,S_n,
\]
where the second equality follows from  spectral
calculus. Using the definition of the sampling operator, we   can
consider  the $n$-dimensional vector ${\mathbf K}_x$ defined by
\[
{\mathbf K}_x {=} S_n K_x=\col{K(x_1,x) \\ \vdots \\ K(x_n,x)},
\]
{and}  \eqref{estimator} can be written as
\begin{equation}\label{eq:estimator2}
F_n(x)  = \scal{  \RRn (T_n)K_x}{K_x}= \scal{\frac{1}{n} \, \G\left(\frac{{\mathbf K}_n}{n}\right)\,S_nK_x}{S_n K_x} 
= \frac{1}{n} \, {\mathbf K}_x^\ast\, \Gn\left(\frac{{\mathbf
      K}_n}{n}\right)\,{\mathbf K}_x ,   
\end{equation}
where ${\mathbf K}_x^\ast$ is the conjugate transpose of ${\mathbf K}_x$.
More explicitly we have 
\begin{equation}\label{rep}
F_n(x)  = \sum_{i=1}^n \alpha_i(x) K(x,x_i)   \qquad  \alpha_i(x) = \frac{1}{n}\sum_{j=1}^n \left(g_{\lambda_n}\left (\frac{\mathbf K_n}{n}\right)\right)_{ij} K(x_{j},x).
\end{equation}
The above equation shows that, while  $\hh$ could be infinite
  dimensional, the computation of the estimator reduces to a finite
  dimensional problem. Further, 
though the mathematical definition of the filter is done 
through spectral calculus,   the computations might not require 
performing an eigen-decomposition.
As an example,  for Tikhonov regularization {
  $g_{\la_n}(\sigma)=\frac{1}{\sigma+\la_n}$, so that 
$g_{\lambda_n}\left (\frac{\mathbf K_n}{n}\right)= (\frac{{\mathbf
    K}_n}{n}+\la_n)^{-1}$ and }
the coefficient vector  $\alpha(x)$ in   \eqref{rep} is  given by 
$$
\alpha(x)=({\mathbf K}_n+n\la_n)^{-1}{\mathbf K}_x.
$$ 
In the case of the Landweber filter, it is possible to prove that 
the coefficient vector can be evaluated iteratively by setting $\alpha^0(x)=0$, and 
$$
\alpha^{t}(x)=\alpha^{t-1}(x) +\frac 1 n ({\mathbf K}_x-{\mathbf K}_n\alpha^{t-1}(x))
$$
for {$t=1, \dots, m_{\la_n}$.} {We refer to \cite{logerfo08} for the corresponding
  algorithm in a supervised framework; see also the discussion in Section~\ref{IPERM}}.

We thus see that the estimator corresponding to Tikhonov
regularization can be computed via Cholesky decomposition and has
complexity of order $O(n^3)$.  For Landweber iteration the complexity
is $O(n^2m)$, where $m$ is the number of iterations. Finally, the
spectral cut-off, or truncated SVD, requires $O(n^3)$ operations to
compute the eigen-decompostion of the kernel matrix. Further
discussions can be found in \cite{logerfo08} and references therein.
We end remarking that, in order to test whether $N$ points belong or
not to the support, we simply have to repeat the above computation
replacing ${\mathbf K}_x$ by a $n\times N$ matrix ${\mathbf K}_{x,N}$,
in which each column is a vector ${\mathbf K}_x$ corresponding to a
point $x$ in the test set. Note that in this case the coefficients
$\alpha(x)$ will also form a $n\times N$ matrix.

\section{Error Analysis: Convergence and Stability}\label{consistency}

In this section we develop an errror analysis for the proposed class
of estimators.  First, we discuss convergence (consistency) and then
stability with respect to random sampling in terms of finite sample
bounds. We continue to suppose throughout this section that Assumption
\ref{A} holds true, and consider $X$ as a metric space with metric $\dk$.
 
\subsection{Empirical data}\label{sec:misurismi}
We recall that the empirical data are a set of i.i.d.~points $x_1, \dots,
x_n$, each one drawn from $X$ with probability
$\rho$.  Since we need to study  asymptotic properties when the sample size $n$
goes to infinity,  we introduce the following probability space
\begin{equation}
\Omega=\set{(x_i)_{i\geq 1}\mid x_i\in X\ \forall i\geq 1} ,\label{eq:5}
\end{equation}
endowed with the product $\sigma$-algebra $\A{\Omega} {=} \A{X} \otimes \A{X} \otimes \ldots $ and the product probability measure $\mathbb P {=} \rho\otimes\rho\otimes\ldots$.  We recall that, given {an integer $n$ and a topological space $M$ endowed with the $\sigma$-algebra of its Borel subsets}, an  {\em $M$-valued
    estimator} of size $n$  is a measurable map $\Xi_n:\Omega\to M$  depending only on the first
  $n$-variables, that is
\[
\Xi_n(\omega)=\xi_n(x_1,\ldots,x_n)\qquad
\omega =(x_i)_{i\geq 1}
\]
for some measurable map $\xi_n : X^n \to M$.
The number $n$ is the cardinality of the sampled data. We then have the following facts.
\begin{prop}\label{prop:misurabilita}
For all $n\geq 1$
\begin{enumerate}[i)]
\item $T_n$ is a $\mathcal S_k$-valued estimator for $k=1,2$;
\item if $X$ is locally compact, then $F_n$ is a $C(X)$-valued estimator, where $C(X)$ is the space of continuous functions on $X$ with the topology of uniform convergence on compact subsets.
\end{enumerate}
\end{prop}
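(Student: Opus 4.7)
For item (i), I would first establish Borel measurability of the feature map $\Phi\colon x\mapsto K_x$ from $(X,\A{X})$ into the separable Hilbert space $\hh$. By Pettis's theorem this reduces to weak measurability, i.e.\ to checking that $x\mapsto\langle K_x,f\rangle=\overline{f(x)}$ is measurable for each $f\in\hh$, which holds since the elements of $\hh$ are measurable functions on $X$ by item~\ref{P.1.iv}) of Proposition~\ref{metrica}. Next, the map $h\mapsto h\otimes h$ from $\hh$ into $\mathcal S_k$ is Lipschitz for $k=1,2$, because
$$\nor{h_1\otimes h_1-h_2\otimes h_2}_{\mathcal S_k}\leq\left(\nor{h_1}+\nor{h_2}\right)\nor{h_1-h_2}.$$
Composing, $x\mapsto K_x\otimes K_x$ is Borel measurable into $\mathcal S_k$, and therefore $T_n(\omega)=\tfrac{1}{n}\sum_{i=1}^n K_{x_i}\otimes K_{x_i}$ depends measurably on $(x_1,\ldots,x_n)$, as claimed.

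For item (ii), I would first note that $F_n(\omega)\in C(X)$ for each $\omega$: indeed $F_n(\omega)(x)=\langle r_{\la_n}(T_n(\omega))K_x,K_x\rangle$, the operator $r_{\la_n}(T_n(\omega))$ is bounded, and $x\mapsto K_x$ is continuous into $\hh$ by item~\ref{P.1.i}) of Proposition~\ref{metrica}. For the measurability of $\omega\mapsto F_n(\omega)$, the plan is to factor it as the composition $\Psi\circ r_{\la_n}\circ T_n$, where $\Psi\colon\mathcal L(\hh)\to C(X)$ is defined by $\Psi(A)(x)=\langle AK_x,K_x\rangle$, $\mathcal L(\hh)$ carries the operator-norm topology, and $C(X)$ the compact-open topology. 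The map $T_n$ is measurable into $\mathcal S_1$ by (i), hence into $\mathcal L(\hh)$ via the norm-decreasing inclusion. The Lipschitz property of $r_{\la_n}$ on $[0,1]$ from Assumption~\ref{B}.\ref{B3}) gives continuity of $r_{\la_n}$ on $[0,1]$, and the standard continuous functional calculus argument (polynomial approximation on $[0,1]$ together with the uniform bound $\nor{p(A)}_\infty\leq\sup_{[0,1]}|p|$ for self-adjoint $A$ with spectrum in $[0,1]$) then shows that $A\mapsto r_{\la_n}(A)$ is continuous in operator norm on the set of self-adjoint contractions, to which $T_n(\omega)$ belongs for every $\omega$. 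Finally, using $\nor{K_x}^2=K(x,x)=1$ from Assumption~\ref{A}.\ref{A4}),
$$\sup_{x\in C}|\Psi(A)(x)-\Psi(B)(x)|\leq\nor{A-B}_\infty\qquad\text{for every compact }C\subset X,$$
so $\Psi$ is (Lipschitz) continuous. Composing the three maps yields measurability of $F_n$ into $C(X)$. Local compactness of $X$, combined with the separability of $(X,\dk)$ from item~\ref{P.1.mio}) of Proposition~\ref{metrica}, ensures that $X$ is $\sigma$-compact and $C(X)$ is a separable Fréchet space, so the relevant notion of measurability is the standard Borel one.

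The main technical point is the operator-norm continuity of the functional calculus $A\mapsto r_{\la_n}(A)$ on self-adjoint contractions — a classical fact worth stating carefully, since in the next section one will also need quantitative control of this map (a Lipschitz estimate in $\mathcal S_2$). Once this step is in hand, the proof of (ii) reduces to the elementary observation that a composition of measurable and continuous maps is measurable.
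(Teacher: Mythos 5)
Your proof is correct for the proposition as stated, but it takes a genuinely different route from the paper's, and it is worth spelling out the trade-off. For item~(i) you establish measurability of $\Phi$ via Pettis's theorem (separability of $\hh$ plus weak measurability), whereas the paper just uses the fact, already recorded in item~\ref{P.1.i}) of Proposition~\ref{metrica}, that $\Phi$ is $\dk$-continuous and hence measurable by item~\ref{P.1.iv}); both work, yours is a slight detour. For item~(ii) the two arguments diverge more substantially. You exploit the Lipschitz continuity of $\RRn$ from Assumption~\ref{B}.\ref{B3}) to get operator-norm continuity of the functional calculus $A\mapsto\RRn(A)$ on self-adjoint $[0,1]$-contractions, and then write $F_n$ as a composition of $T_n$ (measurable into $\mathcal S_1$), the inclusion into $\mathcal L(\hh)$, $\RRn(\cdot)$, and a Lipschitz map $\Psi:\mathcal L(\hh)\to C(X)$. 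The chain is sound, the estimate $\sup_{x\in C}\abs{\Psi(A)(x)-\Psi(B)(x)}\leq\nor{A-B}_\infty$ is correct because $\nor{K_x}=1$, and the $\sigma$-compactness/Polish-space bookkeeping is exactly what is needed.

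The paper's Proposition~\ref{prop:misurabilita2}, however, is deliberately proved under the weaker hypothesis that each $\RR$ is merely \emph{upper semicontinuous}: one picks a decreasing sequence $(\pphi_k)_{k\geq 1}$ of continuous functions with $\pphi_k\downarrow\RRn$ pointwise, applies continuity of the functional calculus to the $\pphi_k$'s, passes to the limit by dominated convergence to get pointwise measurability of $\omega\mapsto\scal{\RRn(T_n(\omega))K_x}{K_x}$, and then lifts this to measurability into $C(X)$ by a direct verification on basic open sets using a countable dense subset of each compact. This extra generality is not decorative: the kernel-PCA cut-off filter in Section~5.4 is upper semicontinuous but not continuous, and the footnote there invokes Proposition~\ref{prop:misurabilita2} precisely to conclude that $F_n$ is still a $C(X)$-valued estimator in that case. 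Your argument, resting on operator-norm continuity of $A\mapsto\RRn(A)$, would break down for that filter. So: what you wrote is a clean and correct proof under Assumption~\ref{B}, but it does not recover the full strength of the appendix result the paper actually proves and later needs.
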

The proof of the above proposition is rather technical, and we defer
the interested reader to~\ref{maurer} for more details. 
\begin{rem}
{In item ii) of Proposition \ref{prop:misurabilita}, the assumption that $X$ is locally compact is needed to ensure that
the topology of uniform convergence on compact subsets is a {separable}
metric topology on $C(X)$, which in turn is essential to prove measurability of the random variable $F_n$ (see the proof of Proposition \ref{prop:misurabilita2} in \ref{maurer}). In many examples, the set $X$ has its own locally compact separable metric $d_X$. In this case, in order for $X$ to be  locally compact metric space also for the metric $d_K$, it is enough that the kernel $K$ is a $d_X$-continuous function separating every
subset of $X$ which is closed with respect to $d_X$, as the two
topologies induced by $d_X$ and $d_K$ then coincide by item ii) of
Proposition \ref{c:connection}.

If $X$ is not locally compact  (which we will regard as a pathological case), then, in order to have
measurability of $F_n$, one needs to replace the probability measure
$\mathbb P$ with the outer measure (see the discussion in Section 2 of 
\cite{kolt11} and in Section 1.7 of \cite{vawe96}).}
\end{rem}

\begin{rem}
Statisticians adopt a different notation: the data are described by a family $Y_1,Y_2,\ldots$ of random variables taking value in $X$,
  each defined on the same probability space $(\Gamma,\A{\Gamma},\mathbb Q)$, which are {i.i.d.~according to} $\rho$. An $M$-valued estimator of
  size $n$ is then simply a random variable $\xi_n (Y_1 , \ldots ,
  Y_n)$, where $\xi_n:X^n\to M$ is a measurable map. The equivalence
  between the two approaches is made clear by setting
  $(\Gamma,\A{\Gamma},\mathbb Q) \equiv (\Omega,\A{\Omega},{\mathbb
  P})$ and $Y_i(\omega) = x_i$ for all $\omega = (x_j)_{j\geq 1}$ and
  $i\geq 1$.
\end{rem}
Concentration of measure results  for  random variables in Hilbert spaces can be used 
to prove that $T_n$ is an unbiased  estimator of $T$, as stated in the
following lemma. 
\begin{lem}\label{concentration}
For $n\geq 1$ and $\delta > 0$,  
\begin{equation}\label{eq:concentration_0}
\nor{T-T_n}_{\mathcal S_2}\leq \frac{2(\delta \vee \sqrt{2\delta})}{\sqrt{n}}
\end{equation}
with probability at least $1-2e^{-\delta}$. Furthermore 
\begin{equation}\label{eq:concentration}
\lim_{n\to \infty} \frac{\sqrt{n}}{\log n} \nor{T-T_n}_{\mathcal S_2}=0\qquad\text{almost surely}.
\end{equation}
\end{lem}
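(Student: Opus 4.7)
The plan is to write $T_n - T$ as the empirical mean of i.i.d.~centered, bounded random variables in the separable Hilbert space $\mathcal S_2$, and then apply a Bernstein--type concentration inequality. Set $\eta_i = K_{x_i}\otimes K_{x_i} - T$, so that $T_n - T = \frac{1}{n}\sum_{i=1}^n \eta_i$. The $\eta_i$ are i.i.d.~and mean zero; by Assumption~\ref{A}.\ref{A4}) together with the identity $\nor{f\otimes f}_{\mathcal S_2}=\nor{f}^2$, one has $\nor{K_x\otimes K_x}_{\mathcal S_2}=K(x,x)=1$ for every $x\in X$. Combined with $\nor{T}_{\mathcal S_2}\leq \nor{T}_{\mathcal S_1}=1$, this gives the uniform bound $\nor{\eta_i}_{\mathcal S_2}\leq 2$ almost surely, together with the variance bound $\mathbb E\,\nor{\eta_i}_{\mathcal S_2}^2 \leq \mathbb E\,\nor{K_{x_i}\otimes K_{x_i}}_{\mathcal S_2}^2 = 1$.

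With these two estimates at hand, invoke the Pinelis--Sakhanenko Bernstein inequality for an i.i.d.~sum of centered random variables in a separable Hilbert space: with $M=2$ and $\sigma^2=1$, it yields
\[
\mathbb P\bigl(\nor{T_n - T}_{\mathcal S_2}\geq t\bigr)\leq 2\exp\!\left(-\frac{nt^2}{2(1 + 2t/3)}\right).
\]
Setting the right-hand side equal to $2e^{-\delta}$ and solving the resulting quadratic inequality in $t$ gives $t\leq \sqrt{2\delta/n}+\frac{4\delta}{3n}$. A brief case analysis (splitting $\delta\leq 2$ from $\delta>2$, and using $\sqrt{a+b}\leq \sqrt a+\sqrt b$) shows that this quantity is dominated by $\frac{2(\delta\vee\sqrt{2\delta})}{\sqrt n}$, which is exactly \eqref{eq:concentration_0}.

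For the almost-sure statement \eqref{eq:concentration}, apply Borel--Cantelli but using the sharper bound $\sqrt{2\delta/n}+\frac{4\delta}{3n}$ rather than the compact form \eqref{eq:concentration_0} (which is too loose in its linear-in-$\delta$ regime to yield the limit $0$). Choose $\delta_n = 2\log n$: the failure probability is $2/n^2$, which is summable, and for all $n$ large one has $\sqrt{2\delta_n/n}+\frac{4\delta_n}{3n}\leq C\sqrt{\log n/n}$. Borel--Cantelli then implies $\nor{T-T_n}_{\mathcal S_2}\leq C\sqrt{\log n/n}$ eventually almost surely, so that
\[
\frac{\sqrt n}{\log n}\nor{T-T_n}_{\mathcal S_2} \leq \frac{C}{\sqrt{\log n}}\;\longrightarrow\;0\qquad\text{a.s.}
\]

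The only subtlety—rather than a genuine obstacle—is that the compact bound displayed in \eqref{eq:concentration_0} is a convenient but slightly loose summary of the Bernstein inequality: to get the a.s.~rate $\log n/\sqrt n$ one must remember the sub-Gaussian term $\sqrt{\delta/n}$, evaluated at $\delta\sim \log n$. Everything else is standard algebra with tail inequalities for bounded Hilbert-space-valued random variables.
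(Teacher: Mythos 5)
Your overall architecture matches the paper's proof of Lemma~\ref{concentration}: write $T_n - T$ as the empirical mean of i.i.d.~$\mathcal S_2$-valued random variables, obtain a Bernstein-type tail bound, and use Borel--Cantelli for the almost sure rate. The paper packages these steps as Lemma~\ref{conc_lemma} and Lemma~\ref{cor_conc} in the Appendix, but the underlying ideas are the same as yours.

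The one place where your argument as written has a genuine (if small) gap is the claimed deduction of the compact form~\eqref{eq:concentration_0}. You invoke the Bernstein inequality in the form
$\mathbb P\bigl(\nor{T_n-T}_{\mathcal S_2}\geq t\bigr)\leq 2\exp\!\bigl(-nt^2/(2\sigma^2+\tfrac{2}{3}Mt)\bigr)$
with $M=2$, $\sigma^2 = 1$, which solves to $t\leq \sqrt{2\delta/n}+\tfrac{4\delta}{3n}$. This does \emph{not} dominate $\tfrac{2(\delta\vee\sqrt{2\delta})}{\sqrt n}$ uniformly in $\delta$ when $n=1$: for instance at $n=1$, $\delta=2$ your bound gives $\tfrac{14}{3}\approx 4.67$ while the target is $4$. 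The paper sidesteps this by using Pinelis's moment formulation~\eqref{ineq_prob} together with the elementary refinement $2^{m-2}\leq m!/2$, which lets one take the scale parameter to be $M=1$ (rather than the crude boundedness $2M=2$ of the centered variable), yielding the cleaner bound $t\leq \tfrac{\delta}{n}+\sqrt{2\delta/n}$, which \emph{does} dominate by $\tfrac{2(\delta\vee\sqrt{2\delta})}{\sqrt{n}}$ for all $n\geq 1$. To repair your version one can either use this sharper moment bound, or observe that for $n=1$ the inequality $\nor{T_1-T}_{\mathcal S_2}\leq 2\leq 2\sqrt{2\delta}$ holds deterministically for all $\delta$ making $1-2e^{-\delta}$ nonnegative. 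Your handling of the almost sure statement~\eqref{eq:concentration} is correct and is indeed a slightly more direct route than the paper's Lemma~\ref{cor_conc} (which fixes $\eps>0$ and checks summability of the tail at the level of the Pinelis inequality, then intersects over a countable sequence of $\eps$'s).
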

\begin{proof}
The result is known, but we report its short proof.
For all $i\geq 1$ define   the random variables $Z_i:\Omega\to \mathcal S_2 $ as 
\[
Z_i(\omega)=K_{x_i}\otimes K_{x_i}\qquad \omega=(x_j)_{j\geq 1}\in\Omega.\]
The fact that $Z_i$ is measurable follows
from Lemma \ref{lem:app1} in~\ref{maurer}. 
Then, for all $i\geq 1$, we have $\nor{Z_i}_{\mathcal S_2}\leq 1$ almost surely, 
$\EE{Z_i}=T$, and clearly $\EE{\nor{Z_i}^2_{\mathcal S_2}}\leq 1$.
The first result follows easily applying Lemma \ref{conc_lemma} in~\ref{sec:conc} and simplifying the right hand side of \eqref{bernie}, and the second is a consequence of Lemma \ref{cor_conc} in~\ref{sec:conc}.
\end{proof}

\begin{rem}\label{rem:S1}
Note that \eqref{eq:concentration} and Theorem 2.19 in \cite{Simon} imply that
$$
\lim_{n\to \infty} \nor{T-T_n}_{\mathcal S_1}=0 \qquad \text{almost surely}.
$$
\end{rem}

\subsection{Consistency}\label{subsec:cons}

We now choose a family of filter functions $(\RR)_{\lambda>0}$ and study the convergence of the associated estimators $F_n$ and $X_n$ introduced in Section \ref{sec:learn}.

{\subsubsection{Consistency of $F_n$}}

We begin proving convergence of the functions $F_n$ defined in \eqref{estimator} to the function $F_\rho$ in \eqref{Projection}. We introduce the map $G_{\la}:X\to\R$ defined by 
\begin{equation*}\label{intermediate}
G_{\la}(x)=\scal{ \RR(T)K_x}{K_x} \qquad \forall x\in X,
\end{equation*}
which can be seen as the {\em infinite sample} analogue of $F_n$. Clearly, $G_{\la}$ is a continuous function. For all sets $C\subset X$, we then have the following splitting of the error into two parts, the  {\em sample error} and the  {\em approximation error}
\begin{equation}\label{decomp}
\sup_{x\in C}  \abs{F_n(x)-F_\rho(x)} \leq \underbrace{ \sup_{x\in C}
\abs{F_n(x)-G_{\la_n}(x)}}_{\text{sample error}} + \underbrace{ \sup_{x\in C}
\abs{G_{\la_n}(x)-F_\rho(x)}}_{\text{approximation error}}.  
\end{equation}
In order to prove consistency, we need to show that the left hand side goes to $0$ as the sequence of regularization parameters $(\la_n)_{n\geq 1}$ tends to $0$. This will be done separately for the approximation and the sample errors in the next two propositions.

\begin{prop}\label{appr_err}
Under Assumption \ref{B}.\ref{B2}), if the sequence $(\la_n)_{n\geq 1}$ is such that
$ \lim_{n\to \infty}\lambda_n=0$, then, for any  compact subset $C\subset X$, 
\begin{equation*}\label{apprx_conv}
  \lim_{n\to \infty} \sup_{x\in C} \lvert G_{\la_n}(x)-F_\rho(x) \rvert=0.
  \end{equation*}
\end{prop}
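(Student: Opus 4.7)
The plan is to use the spectral decomposition of the integral operator $T$ to convert the operator-level approximation into a scalar series on its eigenvalues, then to handle uniformity on $C$ by a truncation argument combined with Dini's theorem. Since $P_\rho = \theta(T)$ by Theorem~\ref{T} and $\RR(0) = 0$ by Assumption~\ref{B}.\ref{B1}), the spectral decomposition $T = \sum_{j\in J} \sigma_j\, f_j \otimes f_j$ together with the reproducing property $f_j(x) = \scal{f_j}{K_x}$ yields
\begin{equation*}
F_\rho(x) - G_{\la_n}(x) = \sum_{j\in J} \bigl(1 - \RRn(\sigma_j)\bigr)\,|f_j(x)|^2 \;\ge\; 0,
\end{equation*}
every summand being nonnegative since $0\le\RRn\le 1$.

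A direct estimate of the form $|F_\rho(x) - G_{\la_n}(x)| \le \nor{\theta(T)-\RRn(T)}_\infty \nor{K_x}^2$ would be too crude: since $0$ is typically an accumulation point of the spectrum of $T$ and $\theta$ is discontinuous there, operator norm convergence $\RRn(T) \to P_\rho$ fails in general. Instead, for a truncation index $N$ I would bound
\begin{equation*}
0 \;\le\; F_\rho(x) - G_{\la_n}(x) \;\le\; \max_{1 \le j \le N}\bigl(1 - \RRn(\sigma_j)\bigr) \;+\; \sum_{j>N}|f_j(x)|^2 ,
\end{equation*}
where I used $\sum_j |f_j(x)|^2 = F_\rho(x) \le K(x,x) = 1$ on the head term. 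For any fixed $N$, the eigenvalues $\sigma_1,\ldots,\sigma_N$ are strictly positive, so Assumption~\ref{B}.\ref{B2}) forces the head term to $0$ uniformly in $x$ as $n\to\infty$.

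The real obstacle is the uniform control of the tail $\sum_{j>N}|f_j(x)|^2$ over the compact set $C$ as $N\to\infty$, and here I would invoke Dini's theorem. The partial sums $S_N(x) = \sum_{j=1}^N |f_j(x)|^2$ are continuous in $x$ (each $f_j$ is continuous by Proposition~\ref{metrica}.\ref{P.1.ii})), monotonically nondecreasing in $N$, and converge pointwise to $F_\rho(x) = \scal{P_\rho K_x}{K_x}$, which is itself continuous because $x \mapsto K_x$ is continuous from $X$ into $\hh$. Dini's theorem then promotes this pointwise monotone convergence to uniform convergence on the compact set $C$, so $\sup_{x\in C}\sum_{j>N}|f_j(x)|^2 \to 0$ as $N\to\infty$. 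An $\varepsilon/2$--$\varepsilon/2$ argument (first pick $N$ large enough to control the tail, then $n$ large enough to make every $\RRn(\sigma_j)$, $j\le N$, close to $1$) completes the proof; when $J$ is finite the tail term vanishes identically for $N\ge|J|$ and only the head estimate is needed.
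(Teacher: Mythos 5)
Your proof is correct, but it takes a genuinely different route from the paper's. The paper's argument is a \emph{spatial} covering argument: by compactness, $C$ is covered by finitely many $\dk$-balls $B(x_i,\eps)$; the pointwise convergence $\RRn(T)K_{x_i}\to\theta(T)K_{x_i}$ (from the spectral theorem applied at each fixed point) is then transported from the centers to the whole ball via the isometry $\nor{K_x-K_{x_i}}=\dk(x,x_i)$ together with the crude uniform bound $\nor{\RRn(T)-\theta(T)}_\infty\le 2$, yielding $\sup_{x\in C}|G_{\la_n}(x)-F_\rho(x)|\le 3\eps$ for $n$ large. Your argument is instead a \emph{spectral} truncation: you write the difference as the nonnegative series $\sum_{j}(1-\RRn(\sigma_j))|f_j(x)|^2$, split it into a finite head and a tail, kill the head by Assumption~\ref{B}.\ref{B2}) at the finitely many strictly positive eigenvalues $\sigma_1,\dots,\sigma_N$, and kill the tail uniformly on $C$ by Dini's theorem, using continuity of the partial sums $S_N$ and of the limit $F_\rho$ (both guaranteed by Proposition~\ref{metrica}). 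Both proofs are valid and of comparable effort. Your route has the side benefit of exhibiting the one-sided inequality $G_{\la_n}\le F_\rho$, which the covering argument does not reveal, and it makes the role of the continuity of $F_\rho$ on $(X,\dk)$ explicit via Dini. The paper's route, on the other hand, does not require writing out the eigenfunction series at all; it only needs the spectral theorem as a black box for the strong-operator convergence $\RRn(T)\to\theta(T)$, which some readers may find more self-contained.
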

\begin{proof}
Assumption \ref{B}.\ref{B2}) and  $\lim_{n\to \infty}\la_n=0$  imply
that the sequence of non-negative functions $(\RRn)_{n\geq 1}$ is bounded by $1$ and converges pointwisely to the Heaviside function
$\theta$ on the interval $[0,1]$. Spectral theorem ensures that, for all $x\in C$, 
\begin{equation}
 \lim_{n\to\infty} \RRn (T)K_x=\theta(T)K_x.\label{dubbio}
\end{equation}
Given $\eps>0$, by compactness of $C$ there exists a finite covering of
$C$ by balls of radius $\eps$, namely $C \subset \cup_{i=1}^m B(x_i,\eps)$.
By~\eqref{dubbio} there exists 
$n_0$ such that
\[ \max_{i\in\{1,\ldots,m\}}   \nor{r_{\la_n}(T)K_{x_i}-\theta(T)K_{x_i}}\leq \eps \qquad \forall n\geq n_0.\]
Hence, for all $n\geq n_0$, we have
\begin{align*}
   \sup_{x\in C} \abs{G_{\la_n}(x) - F_\rho(x)} & = \sup_{x\in C} \abs{\scal{(r_{\la_n}(T)-\theta(T))K_x }{K_x}} \\
 & \leq \sup_{x\in C} \nor{K_x} \, \sup_{x\in C}  \nor{(r_{\la_n}(T)-\theta(T))K_x } \\
 & \leq \max_{i\in\{1,\ldots,m\}} \sup_{x\in B(x_i,\eps)} \nor{(r_{\la_n}(T)-\theta(T))K_{x_i} + (r_{\la_n}(T)-\theta(T))(K_x - K_{x_i}) } \\
  & \leq \max_{i\in\{1,\ldots,m\}} \sup_{x\in B(x_i,\eps)}  \bigl(\nor{
    (r_{\la_n}(T)-\theta(T))K_{x_i}} +
  \nor{r_{\la_n}(T)-\theta(T)}_\infty \nor{K_x-K_{x_i}}\bigr) \\
& \leq \eps + \eps \sup_{\sigma\in[0,1]} \abs{r_{\la_n}(\sigma)-\theta(\sigma)}
= 3\eps,
\end{align*}
where $\nor{K_x-K_{x_i}}<\eps$ for all $x\in B(x_i,\eps)$ since
$\nor{K_x-K_{x_i}} = \dk (x,x_i)$, an, because $\abs{r_{\la_n}(\sigma)} \leq 1$, $\abs{\theta(\sigma)} \leq 1$,   $\sup_{\sigma\in[0,1]}
\abs{r_{\la_n}(\sigma)-\theta(\sigma)}\leq 2$.
\end{proof}

Convergence to zero of the sample error follows from \eqref{eq:concentration} and the next proposition.

\begin{prop}\label{finite_bound}
For all sets $C\subset X$ we have
\begin{equation}\label{decomp2}
\sup_{x\in C} \abs{F_n(x)-G_{\la_n}(x)} \leq \nor{r_{\la_n}(T_n)-r_{\la_n}(T)}_{\mathcal S_2} .
\end{equation}
In particular, if Assumption \ref{B}.\ref{B3}) holds, then
\begin{equation}\label{eq:inpiu}
\sup_{x\in C} \abs{F_n(x)-G_{\la_n}(x)} \leq L_{\la_n}\nor{T_n-T}_{{\cal S}_2} .
\end{equation}
\end{prop}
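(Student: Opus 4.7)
The plan is to reduce the supremum to a single operator-level inequality and then invoke a standard operator Lipschitz bound in the Hilbert-Schmidt norm.

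First, the definitions of $F_n$ and $G_{\la_n}$ give the identity
\[
F_n(x) - G_{\la_n}(x) = \scal{\bigl(\RRn(T_n) - \RRn(T)\bigr) K_x}{K_x}.
\]
By Cauchy--Schwarz and $\nor{A}_\infty \leq \nor{A}_{\mathcal S_2}$, I get
\[
\abs{F_n(x) - G_{\la_n}(x)} \leq \nor{\RRn(T_n) - \RRn(T)}_\infty \nor{K_x}^2 \leq \nor{\RRn(T_n) - \RRn(T)}_{\mathcal S_2},
\]
where $\nor{K_x}^2 = K(x,x) = 1$ by Assumption~\ref{A}.\ref{A4}). Since the right-hand side is independent of $x$, taking the supremum over any $C \subset X$ immediately produces \eqref{decomp2}.

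For \eqref{eq:inpiu}, the key fact is the operator Lipschitz inequality in the Hilbert--Schmidt norm: if $A, B$ are bounded self-adjoint operators with spectra in $[0,1]$ and $f:[0,1]\to\R$ is $L$-Lipschitz, then $\nor{f(A)-f(B)}_{\mathcal S_2} \leq L \nor{A-B}_{\mathcal S_2}$. Both $T$ and $T_n$ are positive trace-class with trace $1$, so their spectra are in $[0,1]$, and by Assumption~\ref{B}.\ref{B3}) $\RRn$ is $L_{\la_n}$-Lipschitz on $[0,1]$. Applying the inequality yields the claim.

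The only step that requires justification is the operator Lipschitz bound. I would prove it directly via the spectral decompositions: using the eigenbases $(f_j^{(n)})$ of $T_n$ and $(f_k)$ of $T$, one computes
\[
\nor{T_n - T}_{\mathcal S_2}^2 = \sum_{j} \nor{(T_n-T)f^{(n)}_j}^2 = \sum_{j,k} \abs{\sigma^{(n)}_j - \sigma_k}^2 \abs{\scal{f^{(n)}_j}{f_k}}^2 ,
\]
and the same identity for $\RRn(T_n)-\RRn(T)$ with $\sigma^{(n)}_j, \sigma_k$ replaced by $\RRn(\sigma^{(n)}_j), \RRn(\sigma_k)$; the Lipschitz property of $\RRn$ then gives the termwise bound, yielding the desired inequality. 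This elementary argument is available precisely because $T$ and $T_n$ are self-adjoint compact (so they have countable eigenbases), and it is the only mildly delicate point in the proof.
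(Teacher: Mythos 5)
Your proof is correct and follows essentially the same route as the paper: the pointwise identity $F_n(x)-G_{\la_n}(x)=\scal{(\RRn(T_n)-\RRn(T))K_x}{K_x}$, Cauchy--Schwarz with $\nor{K_x}^2=1$, the bound $\nor{\cdot}_\infty\leq\nor{\cdot}_{\mathcal S_2}$, and then the operator Lipschitz inequality in Hilbert--Schmidt norm. The only cosmetic difference is that the paper outsources that last inequality to Theorem~8.1 of Birman--Solomyak, with an elementary proof (identical in spirit to your eigenbasis computation) relegated to Lemma~\ref{lem:ineMaurer} in the appendix, whereas you prove it inline. One small point of care in your sketch: the double sum $\sum_{j,k}\abs{\sigma_j^{(n)}-\sigma_k}^2\abs{\scal{f_j^{(n)}}{f_k}}^2$ requires $(f_j^{(n)})$ and $(f_k)$ to be \emph{complete} orthonormal bases of $\hh$ (eigenvectors of $T_n$ and $T$ including the zero eigenspaces), not just the families indexed by $J_n$ and $J$ introduced earlier for the strictly positive eigenvalues; you flag compactness correctly as the reason such bases exist, but it is worth being explicit that one must adjoin a basis of each null space.
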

\begin{proof}
For all $x\in X$, we have the bound
\begin{align*}
 \abs{F_n(x)-G_{\la_n}(x)} & = 
 \abs{\scal{(\RRn (T_n)-\RRn (T))K_x}{K_x}} \\
 & \leq \nor{r_{\la_n}(T_n)-r_{\la_n}(T)}_{\infty} \, \nor{K_x}^2 \\
 & \leq \nor{r_{\la_n}(T_n)-r_{\la_n}(T)}_{\mathcal S_2} ,
 \end{align*}
which proves \eqref{decomp2}.
Assumption~\ref{B}.\ref{B3}) and  Theorem 8.1  in  \cite{biso03} (see also  
 Lemma~\ref{lem:ineMaurer} in~\ref{maurer2} for a simple unpublished proof due to A.~Maurer) imply that
\[
\nor{ r_{\la_n}(T_n)-r_{\la_n}(T)}_{\mathcal S_2}\leq L_{\la_n}\nor{T_n-T}_{\mathcal S_2} .
\]
Inequality \eqref{eq:inpiu} then follows.
\end{proof}

The above results can be combined in the following theorem, showing that, if the sequence $\la_n$ is suitably chosen, then  $F_n$ converges almost surely to $F_\rho$ with respect to the topology of uniform convergence on compact subsets of $X$. 
 
\begin{thm}\label{thm:consistency}
Under Assumption \ref{B}, if the sequence $(\la_n)_{n\geq 1}$ is such that
\begin{equation}\label{ParChoice} 
 \lim_{n\to\infty}\lambda_n=0 \quad \text{and}\quad 
\sup_{n\geq 1} \frac {L_{\la_n} \log n}{\sqrt{n}}<+\infty,
 \end{equation}
  then,  for every  compact subset $C\subset X$, 
\begin{equation}\label{Consistency}
  \lim_{n\to\infty} \sup_{x\in C} \lvert F_n(x)-F_\rho(x)\rvert=0
  \qquad\text{almost surely}.
  \end{equation}
\end{thm}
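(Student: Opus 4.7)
The argument is essentially a packaging of results already proved in the excerpt. I would start from the error splitting \eqref{decomp}: for any compact $C\subset X$,
\[
\sup_{x\in C}\abs{F_n(x)-F_\rho(x)} \leq \sup_{x\in C}\abs{F_n(x)-G_{\la_n}(x)} + \sup_{x\in C}\abs{G_{\la_n}(x)-F_\rho(x)}.
\]
The two terms will be handled independently, one deterministically and one via the probabilistic bound on $\nor{T_n-T}_{\mathcal S_2}$.

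For the approximation error (the second supremum), the hypothesis $\lim_n \la_n = 0$ together with Assumption \ref{B}.\ref{B2}) put us exactly in the setting of Proposition \ref{appr_err}, which yields $\sup_{x\in C}\abs{G_{\la_n}(x)-F_\rho(x)} \to 0$. Note that this convergence is deterministic and uniform on the compact set $C$, so it contributes nothing to the almost-sure statement.

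For the sample error (the first supremum), I would invoke inequality \eqref{eq:inpiu} of Proposition \ref{finite_bound}, which uses the Lipschitz hypothesis~\ref{B}.\ref{B3}):
\[
\sup_{x\in C}\abs{F_n(x)-G_{\la_n}(x)} \leq L_{\la_n}\nor{T_n-T}_{\mathcal S_2}.
\]
The decisive algebraic trick is to rewrite the right-hand side as
\[
L_{\la_n}\nor{T_n-T}_{\mathcal S_2} = \frac{L_{\la_n}\log n}{\sqrt{n}}\cdot\frac{\sqrt{n}}{\log n}\,\nor{T_n-T}_{\mathcal S_2}.
\]
By the second hypothesis in \eqref{ParChoice}, the first factor is uniformly bounded in $n$; by \eqref{eq:concentration} of Lemma \ref{concentration}, the second factor tends to $0$ almost surely. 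Hence the product tends to $0$ almost surely. Combining the two bounds proves \eqref{Consistency}.

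I do not anticipate a real obstacle here, since each ingredient has been established separately. The only point that deserves care is that the sample-error bound from Proposition \ref{finite_bound} is independent of $C$ (it is controlled by an operator norm on $\hh$), which is why compactness of $C$ is needed only to handle the approximation error via Proposition \ref{appr_err}.
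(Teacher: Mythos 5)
Your proposal is correct and follows essentially the same route as the paper: the error splitting \eqref{decomp}, Proposition~\ref{appr_err} for the approximation error, and inequality \eqref{eq:inpiu} combined with the algebraic factoring $L_{\la_n}\nor{T_n-T}_{\mathcal S_2} = \frac{L_{\la_n}\log n}{\sqrt n}\cdot\frac{\sqrt n}{\log n}\nor{T_n-T}_{\mathcal S_2}$ together with \eqref{eq:concentration} for the sample error. Your closing remark about why compactness of $C$ is only needed for the approximation term is a useful observation the paper leaves implicit.
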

\begin{proof}
We show convergence to zero of both the two terms in the right hand side of inequality \eqref{decomp}, thus implying \eqref{Consistency}. By \eqref{eq:inpiu}, we have
$$
\sup_{x\in C} \abs{F_n(x)-G_{\la_n}(x)} \leq L_{\la_n} \nor{T_n-T}_{\mathcal S_2} = \frac {L_{\la_n} \log n}{\sqrt{n}}\frac{\sqrt{n}\nor{T_n-T}_{\mathcal S_2}}{\log n } \leq M \frac{\sqrt{n}\nor{T_n-T}_{\mathcal S_2}}{\log n } , 
$$
where $M=\sup_{n\geq 1} (L_{\la_n} \log n)/\sqrt{n}$ is finite
by~\eqref{ParChoice}. 
Then \eqref{eq:concentration} 
implies that the first term in the right hand side of inequality \eqref{decomp} converges to zero almost surely. Since the second term goes to zero by Proposition \ref{appr_err}, the claim follows.
\end{proof}

{\subsubsection{Consistency of $X_n$}}

{ As already remarked above, uniform convergence of $F_n$ to
$F_\rho$ on compact subsets {\em does not}  imply convergence of
the level sets of $F_n$ to the corresponding level sets of $F_\rho$ in any sense (as, for example,  with respect to the Hausdorff distance among
compact subsets). For this reason,  we  introduce a family of threshold parameters $(\tau_n)_{n\geq 1}$ and define the estimator $X_n$ of the set $X_\rho$ as in \eqref{set_est}.

We define a data dependent parameter $\tau_n$ as the function on $\Omega$
\begin{equation}\label{eq:deftaun}
\tau_n(\omega)= 1-\min_{1\leq i \leq n}[F_n(\omega)](x_i) \qquad \omega=(x_i)_{i\geq 1},
\end{equation}
where we wrote explicitely the dependence of $F_n$ on the training set $\omega\in\Omega$. Since $F_n$ takes values in $[0,1]$, clearly $\tau_n(\omega)\in [0,1]$.

\begin{prop}\label{prop:limtaun}
Suppose the metric space $X$ is compact. Then, under Assumption \ref{B}, the function $\tau_n$ is a $\R$-valued estimator. Moreover, if the sequence  $(\la_n)_{n\geq 1}$ satisfies~\eqref{ParChoice}, we have
$$
\lim_{n\to\infty} \tau_n = 0 \qquad \text{almost surely} .
$$
\end{prop}
\begin{proof}
The proof that $\tau_n$ is a $\R$-valued estimator is of technical nature, and we postpone it to Proposition \ref{prop:misurtaun} in \ref{maurer}.\\
Here we prove that $\lim_{n\to\infty} \tau_n = 0$ with probabiliy $1$. By Theorem \ref{thm:consistency}, we can find an event $E_1\subset\Omega$ with $\PP{E_1}=1$ such that $\lim_{n\to\infty} \sup_{x\in X} \lvert F_n(x)-F_\rho(x)\rvert=0$ on $E_1$. Moreover, for the event $E_2=\{x_i \in X_\rho \mbox{ for all } i\geq 1\}$, we clearly have $\PP{E_2}=1$ by definition of $X_\rho$ and $\mathbb{P}$. If $\omega\in
E_1\cap E_2$ and $\eps>0$ is fixed, then there exists $n_0 \geq 1$ (possibly
depending on $\omega$ and $\eps$) such that for all $n\geq n_0$ 
$|[F_n(\omega)](x)-F_\rho(x)|\leq \eps$ for all $x\in X$.  Since $F_\rho(x)=1$ for all $x\in X_\rho$ by definition and $x_1,\ldots,x_n\in X_\rho$, it
follows that  $|[F_n(\omega)](x_i)-1| \leq \eps$ for all $1\leq i\leq n$, that is
\[
0\leq 1-[F_n(\omega)](x_i)  \leq \eps \qquad \forall i\in\{1,2,\ldots,n\},
\]
so that $0\leq \tau_n(\omega)  \leq \eps$. Thus, $\lim_{n\to\infty} \tau_n(\omega) = 0$, and,
since $\PP{E_1\cap E_2}=1$, the  sequence $(\tau_n)_{n\geq 1}$ goes to zero with probability $1$.
\end{proof}

The following is the central result of this section. It shows that, assuming $X$ is compact and  for the above choice of the sequence $(\tau_n)_{n\geq 1}$, the Hausdorff distance between $X_n$ and $X_\rho$ goes to zero with probability $1$. Here we recall that  the Hausdorff distance between two
subsets  $A,B\subset X$ is
\[
d_H (A,B)=\max\set{\sup_{a\in A}\dk(a,B),\ \sup_{b\in B}\dk(b,A)},
\]
where $\dk (x,Y) = \inf_{y\in Y} \dk (x,y)$.

\begin{thm}\label{thm:hauss}
Suppose the metric space $X$ is compact. Under Assumption \ref{B}, if $\hh$ separates the set $X_\rho$ and the sequence  $(\la_n)_{n\geq 1}$ satisfies~\eqref{ParChoice}, for the choice of the threshold parameters $(\tau_n)_{n\geq 1}$ given in \eqref{eq:deftaun} we have
\[
\lim_{n\to \infty} d_H (X_n, X_\rho)=0 \qquad \text{almost surely} .
\]
\end{thm}

We devote the rest of this section to proof of the above theorem. For simplicity, we split it into a few lemmas.

\begin{lem}\label{lem:agg1}
Under the hypotheses of Theorem \ref{thm:hauss}, we have
\begin{equation}\label{eq:limXrho}
\lim_{n\to\infty} \sup_{x\in X_n}d_K(x,X_\rho) =0 \qquad \mbox{almost surely}.
\end{equation}
\end{lem}
\begin{proof}
Let $E$ be the event $E = \{\lim_{n\to\infty}\tau_n = 0\}$. Then, $\PP{E} = 1$ by Proposition \ref{prop:limtaun}. We fix $\omega\in E$, and suppose by contradiction that at such $\omega$ the limit \eqref{eq:limXrho} does not hold. Then (depending on $\omega$) there exists $\eps>0$ such
that for all $k$ there is $n_k\geq k$ satisfying the inequality $\sup_{x\in X_{n_k}}\dk(x,X_\rho)\geq 2\eps$. Hence there is $z_k\in
X_{n_k}$ such that 
\begin{equation}
\dk(z_k, x)\geq \eps\qquad \text{for all }x\in X_\rho.\label{eq:1}
\end{equation}
Since $X$ is compact, possibly passing to a
subsequence we can assume that the sequence $(z_k)_{k\geq 1}$ converges to a limit
$z\in X$. We claim that $z\in X_\rho$. Indeed, if $k$ is sufficiently large, then we have
\begin{align*}
\abs{F_\rho(z) -1} & \leq \abs{F_\rho(z)-F_\rho(z_k)} + \abs{F_\rho(z_k) - F_{n_k}(z_k)} + \abs{F_{n_k}(z_k) -1} \\
&  \leq \abs{F_\rho(z)-F_\rho(z_k)}+ \sup_{x\in X} \abs{F_\rho(x) - F_{n_k}(x)} + \tau_{n_k} , \\
\end{align*}
where
$\abs{F_{n_k}(z_k) -1} \leq \tau_{n_k}$ is due to the fact that $z_k\in X_{n_k}$, so that
\[ 1+\tau_{n_k}\geq 1 \geq F_{n_k}(z_k)\geq 1 -\tau_{n_k}.\]
As $n_k$ goes to $\infty$, we have $\sup_{x\in X} \abs{F_\rho(x) - F_{n_k}(x)} \to 0$ by Theorem \ref{thm:consistency}; moreover, since $F_\rho$ is continuous in $z$
and $\tau_{n_k}$ goes to zero, the above inequality for $\abs{F_\rho(z)-1}$ gives $F_\rho(z)=1$. Since $\hh$ separates $X_\rho$, this implies $z\in X_\rho$. However, \eqref{eq:1} implies that $\dk(z,x)\geq \eps$ 
for all $x\in X_\rho$, which is the desired contradiction.
\end{proof}

The proof that  $\sup_{x\in X_n}d_K(x,X_\rho)$ goes to zero as $n\to\infty$ requires a further technical lemma, see  \cite[Lemma 6.1]{GY02}. In its statement, for all $n\geq 1$ and $x\in X$, we denote by $\xi_{1,n}(x)$ the nearest neighbour of $x$ in the training set $\set{x_1,\ldots,x_n}$, i.e.
$$
\xi_{1,n}(x) = {\rm arg\, min}_{x_1,x_2,\ldots x_n} d_K (x_i,x) .
$$
\begin{lem}\label{lem:agg2}
For all $x\in X_\rho$,
\[
\lim_{n\to\infty} d_K(\xi_{1,n}(x),x)=0 \qquad \mbox{almost surely}.
\]
\end{lem}
\begin{proof}
Given $x\in X_\rho$, fix $\eps>0$ and, denoted by $B(x,\eps)$ the
closed ball with center $x$ and radius $\eps$, set
 $p=\rho(B(x,\eps))$. By definition of the support and the fact that
 $\rho$ is a probability measure, $0<p\leq 1$.  Furthermore
 \begin{align*}
   \PP{ d_K(\xi_{1,n}(x),x)>\eps} & =\PP{x_i\not\in B(x,\eps)\, \forall
     i=1,\ldots,n} \\ 
\mbox{(by independence of the $x_i$'s)}& = \Pi_{i=1}^n \PP{x_i\not\in B(x,\eps)} \\
\mbox{(since the $x_i$'s are identically distributed)} & = \Pi_{i=1}^n (1-\rho(B(x,\eps))\\
& = (1-p)^n .
\end{align*}
Since $0\leq 1-p<1$, the series $\sum_n(1-p)^n$ converges, so that Borel-Cantelli lemma yields
\[
\PP{\bigcup_{n=1}^\infty \bigcap_{m=n}^\infty\set{ d_K(\xi_{1,m}(x),x) \leq \eps}}= 1 .
\]
Since this holds for all $\eps>0$, we have
\[
\PP{\bigcap_{k=1}^\infty \bigcup_{n=1}^\infty \bigcap_{m=n}^\infty\set{ d_K(\xi_{1,m}(x),x) \leq \frac{1}{k}}}= 1 ,
\]
and the lemma follows.
\end{proof}

\begin{lem}\label{lem:agg3}
Under the hypotheses of Theorem \ref{thm:consistency}, if the metric space $X$ is compact, then
\begin{equation}\label{eq:limXn}
\lim_{n\to \infty} \sup_{x\in X_\rho}\dk(x,X_n)  =0 \qquad \mbox{almost surely} .
\end{equation}
\end{lem}
\begin{proof}
Choose a denumerable dense family $\set{z_j}_{j\in J}$ in $X_\rho$. By the Lemma \ref{lem:agg2} there exists an event $E$ with
probability $1$ such that
\begin{equation}\label{eq:altrolim}
\lim_{n\to+\infty} d_K(\xi_{1,n}(z_j),z_j)=0 \qquad \forall j\in J
\end{equation}
on $E$. We claim that the limit \eqref{eq:limXn} holds on $E$. Observe that, by definition of $\tau_n$,  $x_i\in
X_n$ for all $1\leq i \leq n$, and 
\[
\sup_{x\in X_\rho}d_K(x,X_n)\leq \sup_{x\in X_\rho}\min_{1\leq i\leq n} d_K(x,x_i) =
\sup_{x\in  X_\rho}  d_K(\xi_{1,n}(x),x),
\]
so that it is enough to show that  $ \lim_{n\to+\infty}
\sup_{x\in  X_\rho}  d_K(\xi_{1,n}(x),x)=0$.\\
Fix $\eps>0$. Since $X_\rho$ is compact, there is a finite subset
$J_\eps\subset J$ such that $\set{B(z_j,\eps)}_{j\in J_\eps}$ is a finite covering of
$X_\rho$. We claim that  
\begin{equation}
\sup_{x\in  X_\rho}  d_K(\xi_{1,n}(x),x)\leq \max_{j\in J_\eps} d_K(\xi_{1,n}(z_j),z_j)+\eps.\label{goin}
\end{equation}
Indeed, fixed $x\in X_\rho$, there exists an index $j\in J_\eps$ such that
$x\in B(z_j,\eps)$.  By definition of $\xi_{1,n}$, clearly
\[ d_K(\xi_{1,n}(x),x)\leq  d_K(\xi_{1,n}(z_j),x),\]
so that by the triangular inequality we get 
\begin{align*}
  d_K(\xi_{1,n}(x),x)& \leq d_K(\xi_{1,n}(z_j),x)\leq
  d_K(\xi_{1,n}(z_j),z_j)+ d_K(z_j,x) \\
& \leq
  d_K(\xi_{1,n}(z_j),z_j)+\eps \\
& \leq \max_{j\in J_\eps}
  d_K(\xi_{1,n}(z_j),z_j)+\eps.
\end{align*}
Taking the $\sup$ over $X_\rho$ we get the claim.\\
Since $J_\eps$ is finite, by \eqref{eq:altrolim}
\[
\lim_{n\to+\infty} \max_{j\in J_\eps} d_K(\xi_{1,n}(z_j),z_j)=0,
\]
hence \eqref{goin} yelds
\[ 
\limsup_{n\to \infty} \sup_{x\in  X_\rho}
d_K(\xi_{1,n}(x),x) \leq \eps.
\]
Since $\eps$ is arbitrary, we get 
$\lim_{n\to+\infty} \sup_{x\in  X_\rho}
d_K(\xi_{1,n}(x),x)=0$, and this concludes the proof
\end{proof}

The proof of Theorem \ref{thm:hauss} follows easily combining the previous lemmas.

\begin{proof}[Proof of Theorem \ref{thm:hauss}]
As $d_H (X_n,X_\rho)=\max\{\sup_{x\in X_n}\dk(x,X_\rho),\ \sup_{x\in X_\rho}\dk(x,X_n)\}$, the theorem follows combining Lemmas \ref{lem:agg1} and \ref{lem:agg3}.
\end{proof}

We conclude this section with some comments.
First, if $\hh$ does
not separate $X_\rho$, then the statement of Theorem \ref{thm:hauss} continues to be
true provided that the support  $X_\rho$ is replaced by the level set $\set{x\in X\mid F_\rho(x)=1}$.
Note that,  the Hausdorff distance $d_H$
has been defined with respect to the metric $\dk$ induced by the kernel, however, if the set $X$ has its own metric $d_X$ making it compact and the hypotheses of Proposition \ref{c:connection} are satisfied, then Theorem \ref{thm:hauss} implies convergence of $X_n$ to $X_\rho$ also with respect to the Hausdorff distance associated to $d_X$.
Finally, we remark that in Theorem \ref{thm:hauss} convergence of $X_n$ to $X_\rho$ does not depend on any a priori assumption on the probability $\rho$.}

 \subsection{Finite Sample Bounds and Stability of Random Sampling}

In order to prove stability of our algorithms under random sampling and determine their convergence rates, we need
 to specify suitable a priori assumptions on the class of problems to
 be considered. In the present section, a detailed analysis {of the convergence rates of $F_n$ to $F_\rho$} will be carried out for the case of the 
 Tikhonov filter $\RR(\sigma) = \sigma/(\sigma+\la)$. The techniques in \cite{cap06} should allow to 
 derive similar results for filters other than Tikhonov.
 
For all $\lambda>0$ we define
\[ {\mathcal N}(\la)=\tr{(T+\la)^{-1}T}=\sum_{j\in J} \frac{\sigma_j}{\sigma_j+\la},\]
which is finite since $T$ is a trace class operator. 
The above quantity is related to the degrees of freedom of the estimator \cite{hatifr01}.
Here, we recall that ${\mathcal N}$ is a decreasing function of $\la$ and $\lim_{\la\to
  0^+}{\mathcal N}(\la)=N$, where $N$ is the dimension of  the range of $T$.

The a priori conditions we consider in the present paper are given  by the following two assumptions, which 
involve both the reproducing kernel $K$ and the probability measure $\rho$ (compare with {\cite{cap07,cade07}}).

\begin{assm}\label{C}
We assume that 
\begin{enumerate}[a)]
\item\label{C2} there exist $b\in [0,1]$ and $D_b\geq 1$ such that
\begin{equation}\label{prior2}
\sup_{\la>0} {\mathcal N}(\la)\la^b\leq D_b^2 ;
\end{equation} 
\item\label{C1}  there exist $ 0< s\leq 1$ and a constant $C_s>0$ such that $P_\rho K_x\in
  \ran{T^{s/2}}$ for all $x\in X$, and 
\begin{equation}\label{prior1}
\sup_{x\in X}\nor{T^{-\frac{s}{2}} P_\rho K_x}^2\leq C_s .
\end{equation}
\end{enumerate}
\end{assm}

The above conditions are classical in the theory of inverse problems and have been 
recently considered in supervised learning. Before showing how they allow to derive a finite sample bound on the error $\sup_{x\in X}  \abs{F_n(x)-F_\rho(x)}$, we add some comments. First, Assumption \ref{C}.\ref{C2}) is related to the level of ill-posedness of the problem \cite{enhane}
and can be interpreted as a condition specifying the {\em aspect ratio} of the
range of $T$.  Since $0<\la{\mathcal N}(\la) < \tr{T} = 1$, inequality~\eqref{prior2} is always satisfied with the choice $b=1$ and $D_1 = 1$, so that in this case we are not imposing any a priori assumption. If $\dim\ran{T} = N <\infty$, the best choice is $b=0$ and $D_0 = \sqrt{N}$; otherwise, if $\dim\ran{T} = \infty$, then necessarily $b>0$. In the latter case, a sufficient condition to have $b<1$ is to assume  a  decay rate  $\sigma_j\sim j^{-1/b}$ on the eigenvalues
of $T$   (see Proposition~3 of \cite{cade07}).

Coming to Assumption \ref{C}.\ref{C1}), first of all we remark that it
is always satisfied when $\dim\ran{T}$ is finite with the choice $s=1$
and $C_1 = \max_{j\in J} 1/\sigma_j$. In the general case,
{Assumption \ref{C}.\ref{C1})} can be expressed {by  the following
equivalent condition}
\begin{equation}\label{apriori}
{\sum_{j\in J} \sigma_j^{1-s} |\phi_j(x)|^2  \leq C_s  \qquad \forall x\in X ,}
\end{equation}
where {$(\phi_j,\sigma_j)_{j\in J}$ are the eigenvectors and
  eigenvalues of $L_K$,}  which {were} defined in
Section \ref{sec:int} (see in particular \eqref{eq:outofsample} for
the definition of the functions $\phi_j$ outside the set $X_\rho$).
Clearly, the higher is $s$, the stronger is the assumption.

Note that in particular inequality~\eqref{apriori} holds true if there exists a constant\footnote{As as it happens for example for reproducing kernels on $X=[0,2\pi]^d$ which are invariant under translations, when $\rho$ is the Lebesgue measure on  $[0,2\pi]^d$.} $\kappa>0$ such that $\sup_{x\in X}\abs{\phi_j(x)}\leq \kappa$ for all $j\in J$,  and $s\in ]0,1]$ is chosen to make the series 
$\sum_{j\in J} \sigma_j^{1-s}$ finite. In this case, it is quite easy to give conditions on the eigenvalues $(\sigma_j)_{j\in J}$ assuring that both Assumptions \ref{C}.\ref{C2}) and \ref{C}.\ref{C1}) are satisfied. For example, if  $\sigma_j\sim j^{-1/b}$ for
some $0<b<1$, then \eqref{prior2} holds true with this choice of $b$, 
and~\eqref{prior1} is satisfied for any $0<s<1-b$.

{\begin{rem}
Setting $\beta = 1-s\in [0,1[$, condition~\eqref{apriori} is equivalent to the fact that {for all $x,y\in X$ the series
\begin{equation}\label{eq:Stein3}
K^\beta_\rho(x,y)= \sum_{j\in J} \sigma_j^\beta \phi_j(y)\phi_j(x)
\end{equation}
converges absolutely to a bounded reproducing kernel
$K^\beta_\rho$.
Convergence of the series \eqref{eq:Stein3} was studied e.g.~in \cite{stsc12}, where it is proved that, if the sequence of powers $(\sigma^\beta_j)_{j\in J}$ is summable, there exists a $\rho$-null set $N$ such that \eqref{eq:Stein3} converges absolutely on $(X\setminus N) \times (X\setminus N)$ (see \cite[Proposition 4.4]{stsc12}). We remark that this weaker fact is not sufficient in our setting: indeed, on the one hand it does not imply that the series \eqref{eq:Stein3} (or, equivalently, \eqref{apriori}) converges on all of $X$, and on the other it does not guarantee that such series is uniformly bounded, two conditions which however are both needed in the proof of Theorem \ref{rates} below to get uniform estimates on the whole set $X$.}
A direction of future work is to study the geometric nature of the
above conditions when $X$ is a metric space or a Euclidean space and
$X_\rho$ a Riemannian submanifold.  
\end{rem}}

The following theorem provides the  finite sample  bound on the error $\sup_{x\in X}  \abs{F_n(x)-F_\rho(x)}$.

\begin{thm}\label{rates}
Suppose $\RR(\sigma)=\sigma/(\sigma+\la)$. If Assumption \ref{C} holds and we choose 
$$
\la_n =\left(\frac{1}{n}\right)^{\frac{1}{2s+b+1}}, 
$$
then, for $n\geq 1$ and $\delta > 0$, we have
\begin{equation}\label{eq:rates}
\sup_{x\in X}  \abs{F_n(x)-F_\rho(x)} \leq (C_s\vee( D_b (2\delta\vee\sqrt{2\delta}))) \left(\frac{1}{n}\right)^{\frac{s}{2s+b+1}}
\end{equation}
with probability at least $1-2e^{-\delta}$.
\end{thm}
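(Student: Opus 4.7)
The plan is to use the decomposition \eqref{decomp} into approximation and sample errors,
\[
\sup_{x\in X}|F_n(x)-F_\rho(x)| \leq \sup_{x\in X}|G_{\la_n}(x)-F_\rho(x)| + \sup_{x\in X}|F_n(x)-G_{\la_n}(x)|,
\]
and to show that both summands are of order $\la_n^s$ at the choice $\la_n=n^{-1/(2s+b+1)}$.

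For the approximation error, I would exploit the Tikhonov-specific identity $r_\la(T)-P_\rho=-\la(T+\la)^{-1}P_\rho$ (which follows from $r_\la(\sigma)-\theta(\sigma)=-\la/(\sigma+\la)$ for $\sigma>0$ together with the vanishing of both functions at $0$), so that
\[
G_\la(x)-F_\rho(x)=-\la\,\nor{(T+\la)^{-1/2}P_\rho K_x}^2.
\]
Assumption~\ref{C}.\ref{C1}) provides $P_\rho K_x=T^{s/2}u_x$ with $\nor{u_x}^2\leq C_s$, and the elementary spectral bound $\nor{(T+\la)^{-1/2}T^{s/2}}_\infty\leq\la^{(s-1)/2}$ then yields $|G_\la(x)-F_\rho(x)|\leq C_s\la^s$ uniformly in~$x$.

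For the sample error I would combine the resolvent identity
\[
r_\la(T_n)-r_\la(T)=\la\,(T_n+\la)^{-1}(T_n-T)(T+\la)^{-1}
\]
with the factorization $T_n-T=(T+\la)^{1/2}\Xi_\la(T+\la)^{1/2}$, where $\Xi_\la:=(T+\la)^{-1/2}(T_n-T)(T+\la)^{-1/2}$ is the preconditioned deviation. Reorganizing $\langle[r_\la(T_n)-r_\la(T)]K_x,K_x\rangle$ into an inner product of $\Xi_\la(T+\la)^{-1/2}K_x$ with $(T+\la)^{1/2}(T_n+\la)^{-1}K_x$, and bounding the two endpoint vectors by $\la^{-1/2}$ and $\la^{-1/2}\Theta_n$ respectively (with $\Theta_n:=\nor{(T+\la)^{1/2}(T_n+\la)^{-1/2}}_\infty$), gives
\[
\sup_{x\in X}|F_n(x)-G_\la(x)|\leq\Theta_n\,\nor{\Xi_\la}_\infty.
\]

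The crux is then to control $\Xi_\la$ and $\Theta_n$ probabilistically. For $\Xi_\la$ I would apply the Hilbert-space Bernstein inequality (Lemma~\ref{conc_lemma}) to the i.i.d.~mean-zero self-adjoint summands $\xi_i:=(T+\la)^{-1/2}(K_{x_i}\otimes K_{x_i}-T)(T+\la)^{-1/2}$. The almost-sure bound $\nor{\xi_i}_{\mathcal S_2}\leq 2/\la$ is immediate; the essential step is the variance estimate
\[
\EE{\nor{\xi_i}_{\mathcal S_2}^2}\leq\EE{\nor{(T+\la)^{-1/2}K_x}^4}\leq \frac{\EE{\nor{(T+\la)^{-1/2}K_x}^2}}{\la}=\frac{\mathcal N(\la)}{\la}\leq\frac{D_b^2}{\la^{b+1}},
\]
where Assumption~\ref{C}.\ref{C2}) is invoked in the last inequality. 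Bernstein then delivers $\nor{\Xi_\la}_\infty\leq\nor{\Xi_\la}_{\mathcal S_2}\leq D_b(2\delta\vee\sqrt{2\delta})\la^{-(b+1)/2}/\sqrt n$ with probability at least $1-2e^{-\delta}$, after simplifying the prefactor exactly as in the proof of Lemma~\ref{concentration}. On the same event, the operator identity $(T+\la)^{1/2}(T_n+\la)^{-1}(T+\la)^{1/2}=(I+\Xi_\la)^{-1}$ gives $\Theta_n^2\leq(1-\nor{\Xi_\la}_\infty)^{-1}$, so $\Theta_n$ is bounded by an absolute constant whenever $n\la^{b+1}$ is sufficiently large, a condition that $\la_n=n^{-1/(2s+b+1)}$ guarantees since $n\la_n^{b+1}=n^{2s/(2s+b+1)}\to\infty$. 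Collecting the two pieces yields both summands of order $n^{-s/(2s+b+1)}$ with prefactors $C_s$ and $D_b(2\delta\vee\sqrt{2\delta})$ respectively, producing \eqref{eq:rates}. The main obstacle is this refined Bernstein estimate: the deterministic bound \eqref{eq:inpiu} combined with Lemma~\ref{concentration} delivers only the suboptimal $b=1$ rate $n^{-s/(2s+2)}$, and the dependence on $b$ enters solely through the variance bound $\mathcal N(\la)/\la\leq D_b^2/\la^{b+1}$ afforded by Assumption~\ref{C}.\ref{C2}).
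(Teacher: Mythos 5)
Your decomposition, your treatment of the approximation error, and your diagnosis of why Lemma~\ref{concentration} alone is insufficient are all correct and match the paper's structure. The difference lies in the treatment of the sample error, and it is there that a genuine gap appears.

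You symmetrize the preconditioning, writing $T_n-T=(T+\la)^{1/2}\Xi_\la(T+\la)^{1/2}$ and bounding the sample error by $\Theta_n\nor{\Xi_\la}_\infty$ with $\Theta_n=\nor{(T+\la)^{1/2}(T_n+\la)^{-1/2}}_\infty$. This forces you to control $\Theta_n$, which you do via $(T+\la)^{1/2}(T_n+\la)^{-1}(T+\la)^{1/2}=(I+\Xi_\la)^{-1}$ and hence $\Theta_n^2\leq(1-\nor{\Xi_\la}_\infty)^{-1}$. This bound requires $\nor{\Xi_\la}_\infty<1$, i.e.~$D_b(2\delta\vee\sqrt{2\delta})\la^{-(b+1)/2}/\sqrt n<1$, which is only an asymptotic condition: it fails for small $n$ (and for large $\delta$ at any fixed $n$). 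The theorem, however, asserts the finite-sample bound \eqref{eq:rates} for every $n\geq 1$ and every $\delta>0$. Moreover, even when $\Theta_n$ is finite you would be left with a multiplicative $\Theta_n$ factor that is not $1$, so the stated constant $C_s\vee(D_b(2\delta\vee\sqrt{2\delta}))$ would not come out cleanly either.

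The paper avoids this difficulty by keeping the preconditioning asymmetric. Starting from the same resolvent identity, $r_\la(T)-r_\la(T_n)=\la(T+\la)^{-1}(T-T_n)(T_n+\la)^{-1}$, it bounds the \emph{empirical} resolvent deterministically, $\nor{(T_n+\la)^{-1}}_\infty\leq\la^{-1}$, and applies Bernstein (Lemma~\ref{conce2}) only to the one-sided preconditioned quantity $(T+\la)^{-1}(T-T_n)$, whose variance is again controlled by $\mathcal N(\la)/\la\leq D_b^2\la^{-(b+1)}$. This yields $\sup_x\abs{F_n(x)-G_{\la_n}(x)}\leq\frac{\delta}{n\la_n}+\sqrt{\frac{2\delta\mathcal N(\la_n)}{n\la_n}}$ for \emph{all} $n\geq1$, $\delta>0$, with no smallness condition on $\nor{T_n-T}$; the final step then simply uses $\la_n\geq n^{-1}$ to merge the two terms. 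To repair your proof you would either need to adopt this one-sided preconditioning, or else restrict the statement to $n$ large enough that $\nor{\Xi_{\la_n}}_\infty<1$ holds on the high-probability event and accept a worse constant in front of the rate.
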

We postpone the proof to the end of the current section and add here some comments.
The above finite sample bound quantifies the stability of the estimator with respect to random sampling.
Equivalently,  if we set the right hand term of the inequality to $\eps$ and solve for $n=n(\eps, \delta)$, we obtain the sample complexity of the problem, i.e.~how many samples are needed in order to achieve the maximum error $\eps$ with confidence $1-2e^{-\delta}$. 
As remarked before,  Assumption \ref{C}.\ref{C2}) is verified for  $b=1$ by any reproducing kernel. In this limit case 
our result  gives a rate $n^{-s/(2s+2)}$, comparable with the
one that can be obtained inserting \eqref{eq:inpiu} and \eqref{apprx_err} below into inequality \eqref{decomp}, {with $\nor{T_n - T}$ bounded by \eqref{eq:concentration_0}.}

Note that, if  $\dim\ran{T}=N<\infty$, choosing $b=0$, $D_0=\sqrt{N}$, $s=1$ and $C_1 = \max_{j\in J} 1/\sigma_j$, the rate in~\eqref{eq:rates} becomes $n^{-1/3}$.

The proof of Theorem \ref{rates} follows the ideas in \cite{cade07} and is based on refined estimates of the sample 
 and approximation errors. 
 The techniques in \cite{cap06} should allow to derive similar results for filters beyond the Tikhonov one.

\begin{prop}\label{prop:sample_err2}
If Assumption \ref{C}.\ref{C2}) holds true,   then, for $n\geq 1$ and $\delta > 0$, we have
$$
\sup_{x\in X} \abs{F_n(x)-G_{\la_n}(x)}    \leq \left(\frac{\delta}{n\la_n}+\sqrt{\frac{2\delta{\cal N} (\la_n)}{n\la_n}}\right)
$$
with probability at least $1-2e^{-\delta}$. 
\end{prop}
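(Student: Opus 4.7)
I plan to reduce the uniform pointwise estimate to the Hilbert--Schmidt norm of an i.i.d.\ empirical average of mean-zero operators, then to invoke the Bernstein-type inequality of Lemma~\ref{conc_lemma}.

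Set $B = T + \la_n$ and $B_n = T_n + \la_n$. The identity $r_{\la_n}(A) = I - \la_n(A+\la_n)^{-1}$ together with the resolvent identity gives
\[
F_n(x) - G_{\la_n}(x) = \la_n \scal{B_n^{-1}(T_n - T) B^{-1} K_x}{K_x} .
\]
Introducing the symmetric operator $W_n := B^{-1/2}(T_n - T) B^{-1/2} \in \mathcal S_2$, the first step is to prove $\sup_{x\in X} \abs{F_n(x) - G_{\la_n}(x)} \leq C\, \nor{W_n}_{\mathcal S_2}$ for an absolute constant $C$. Expanding $B_n^{-1} = B^{-1} - B^{-1}(T_n - T) B_n^{-1}$ splits the right-hand side into the linear piece $\la_n \scal{W_n\, B^{-1/2}K_x}{B^{-1/2}K_x}$, bounded by $\nor{W_n}_\infty$ through $\la_n\nor{B^{-1/2}K_x}^2\leq 1$, plus a nonnegative quadratic remainder $\la_n\,\nor{B_n^{-1/2}(T_n - T) B^{-1} K_x}^2$. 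The identity $(I+W_n)^{-1} = B^{1/2} B_n^{-1} B^{1/2}$ combined with a Neumann bound controls this remainder by $\nor{W_n}_\infty^2/(1 - \nor{W_n}_\infty)$ when $\nor{W_n}_\infty < 1/2$; in the complementary regime $\nor{W_n}_\infty \geq 1/2$ the trivial estimate $\abs{F_n - G_{\la_n}} \leq 2$ takes over, together with $\nor{W_n}_{\mathcal S_2}\geq 1/2$, patching the two regimes into the stated reduction.

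The second step recognises that $W_n = \frac{1}{n}\sum_{i=1}^n \eta_i$, where
\[
\eta_i = B^{-1/2}(K_{x_i}\otimes K_{x_i}) B^{-1/2} - B^{-1/2}\, T\, B^{-1/2}
\]
is an i.i.d.\ mean-zero family in $\mathcal S_2$. The rank-one identity $\nor{u\otimes u}_{\mathcal S_2} = \nor{u}^2$ and $\nor{B^{-1/2}K_x}^2 = \scal{B^{-1}K_x}{K_x}\leq 1/\la_n$ give the almost-sure bound $\nor{\eta_i}_{\mathcal S_2}\leq 2/\la_n$, and for the variance
\[
\EE{\nor{\eta_i}_{\mathcal S_2}^2} \leq \EE{\scal{B^{-1}K_{x_i}}{K_{x_i}}^2} \leq \frac{1}{\la_n}\, \EE{\scal{B^{-1}K_{x_i}}{K_{x_i}}} = \frac{\tr{B^{-1}T}}{\la_n} = \frac{{\cal N}(\la_n)}{\la_n} ,
\]
obtained by bounding one factor of $\scal{B^{-1}K_{x_i}}{K_{x_i}}$ uniformly by $1/\la_n$ and integrating the other against $\rho$ to recognise ${\cal N}(\la_n) = \tr{B^{-1}T}$. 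Feeding these two estimates into Lemma~\ref{conc_lemma} yields $\nor{W_n}_{\mathcal S_2}\leq \delta/(n\la_n) + \sqrt{2\delta\,{\cal N}(\la_n)/(n\la_n)}$ with probability at least $1 - 2e^{-\delta}$ (up to constants absorbable into the statement), which combined with the first step produces the claim.

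The main obstacle I foresee is the reduction step: the data-dependent resolvent $B_n^{-1}$ must be absorbed \emph{before} concentration is applied, without sacrificing the sharper variance factor ${\cal N}(\la_n)$. A naive operator bound like $\nor{r_{\la_n}(T_n) - r_{\la_n}(T)}_\infty \leq \nor{T_n - T}_\infty/\la_n$ would be too coarse, losing the saving ${\cal N}(\la_n)\la_n \leq 1$ that is the very point of the result. The Neumann expansion above is what lets one cleanly land on $\nor{W_n}_{\mathcal S_2}$, and hence apply Bernstein to the properly normalised quantity rather than to $T_n - T$ itself.
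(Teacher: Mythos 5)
Your overall strategy is right -- isolate a data-independent normalization and then invoke the Bernstein bound of Lemma~\ref{conc_lemma} -- and your variance computation does correctly land on the $\mathcal N(\la_n)/\la_n$ factor. But the reduction step is considerably more elaborate than it needs to be, and it does not prove the stated inequality: the Neumann expansion plus the case split on $\nor{W_n}_\infty\gtrless 1/2$ only yields $\sup_x\abs{F_n(x)-G_{\la_n}(x)}\leq C\,\nor{W_n}_{\mathcal S_2}$ with an absolute constant $C\geq 2$, and that constant is \emph{not} ``absorbable into the statement'', since the proposition claims the bound with constant exactly $1$. As written, you would obtain at best a version of the proposition weakened by a multiplicative factor.

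The paper avoids the whole symmetrization and case analysis by normalizing on \emph{one} side only. Starting from the identity
\[
r_{\la_n}(T)-r_{\la_n}(T_n)=\la_n\,(T+\la_n)^{-1}(T-T_n)(T_n+\la_n)^{-1},
\]
it bounds the rightmost (data-dependent) factor purely in operator norm, $\nor{(T_n+\la_n)^{-1}}_\infty\leq 1/\la_n$, so that $\nor{r_{\la_n}(T)-r_{\la_n}(T_n)}_{\mathcal S_2}\leq\nor{(T+\la_n)^{-1}(T-T_n)}_{\mathcal S_2}$ -- no constant, no Neumann series, no dichotomy. Your worry that a crude operator-norm bound on the data-dependent resolvent would ``lose the saving $\mathcal N(\la_n)\la_n\leq 1$'' is misplaced: the saving comes from the \emph{other} factor $(T+\la_n)^{-1}(T-T_n)$, whose i.i.d.\ summands $Y_i=(T+\la_n)^{-1}(K_{x_i}\otimes K_{x_i})$ satisfy $\nor{Y_i}_{\mathcal S_2}\leq 1/\la_n$ and $\EE{\nor{Y_i}_{\mathcal S_2}^2}\leq\mathcal N(\la_n)/\la_n$ exactly as in your computation for $\eta_i$; this is the content of Lemma~\ref{conce2}. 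So the one-sided normalization captures the effective-dimension factor just as well, and gives the tight constant. You should replace the $W_n$ machinery with this direct step and then apply the concentration lemma to $(T+\la_n)^{-1}(T-T_n)$.
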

\begin{proof}
Consider the following decomposition
\begin{eqnarray*}
\RRn (T) - \RRn (T_n) &=& (T+\la_n)^{-1}T-(T_n+\la_n)^{-1}T_n\\
&=& (T+\la_n)^{-1}T-(T+\la_n)^{-1}T_n+(T+\la_n)^{-1}T_n-(T_n+\la_n)^{-1}T_n\\
&=& (T+\la_n)^{-1}(T-T_n)+  (T+\la_n)^{-1}[(T_n + \la_n) - (T+\la_n)] (T_n+\la_n)^{-1}T_n\\
&=& (T+\la_n)^{-1}(T-T_n)+  (T+\la_n)^{-1}(T_n-T)  (T_n+\la_n)^{-1}T_n\\
&=& (T+\la_n)^{-1}(T-T_n)[I-  (T_n+\la_n)^{-1}T_n] \\
&=&  \la_n (T+\la_n)^{-1}(T-T_n) (T_n+\la_n)^{-1}.
\end{eqnarray*}
It is easy to see that {$\nor{(T_n+\la_n)^{-1}}_\infty\leq \la_n^{-1}$}, hence
{\[
\nor{\RRn (T) - \RRn (T_n)}_{\mathcal S_2}\leq \la_n\nor{(T+\la_n)^{-1}(T-T_n)}_{\mathcal S_2}\nor{(T_n+\la_n)^{-1}}_\infty \leq \nor{(T+\la_n)^{-1}(T-T_n)}_{\mathcal S_2} .
\]}
Then, from Lemma~\ref{conce2} in the Appendix we have that 
$$
\nor{(T+\la_n I)^{-1}(T-T_n)}_{\mathcal S_2}\leq \left( \frac{\delta}{n\la_n}+\sqrt{\frac{2\delta{\cal N}(\la_n)}{n\la_n}} \right),
$$
with probability at least $1-2e^{-\delta}$, so that  the result follows by \eqref{decomp2}.
\end{proof}

\begin{prop}\label{prop:apprx_err}
If Assumption \ref{C}.\ref{C1}) holds true,  then
\begin{equation}\label{apprx_err}
\sup_{x\in X}\abs{G_\la (x)-F_\rho(x)} \leq \la^{s}C_s . 
\end{equation}
\end{prop}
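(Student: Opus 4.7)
The plan is to reduce the bound to a spectral-calculus computation that exploits the identity $\theta(T) = P_\rho$ together with the source condition on $P_\rho K_x$.

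First I would write out the difference using spectral calculus. Since $F_\rho(x) = \langle P_\rho K_x, K_x\rangle = \langle \theta(T) K_x, K_x\rangle$ and $G_\la(x) = \langle r_\la(T) K_x, K_x\rangle$ with $r_\la(\sigma) = \sigma/(\sigma+\la)$ satisfying $r_\la(0)=0$, the operator $r_\la(T) - P_\rho$ vanishes on $\ker T = \ker P_\rho$, while on $\overline{\ran T} = \ran P_\rho$ it equals $-\la (T+\la I)^{-1}$. In combined form,
\[
r_\la(T) - P_\rho = -\la\,(T+\la I)^{-1} P_\rho ,
\]
and since $P_\rho$ is self-adjoint and commutes with $(T+\la I)^{-1}$,
\[
G_\la(x) - F_\rho(x) = -\la \scal{(T+\la I)^{-1} P_\rho K_x}{P_\rho K_x}.
\]

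Next I would bring in Assumption~\ref{C}.\ref{C1}): writing $u_x = T^{-s/2} P_\rho K_x$, we have $\nor{u_x}^2 \leq C_s$ and $P_\rho K_x = T^{s/2} u_x$. Substituting,
\[
\abs{G_\la(x) - F_\rho(x)} = \la \scal{T^{s}(T+\la I)^{-1} u_x}{u_x} \leq \la \,\nor{T^{s}(T+\la I)^{-1}}_\infty \, C_s .
\]

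The last step is a standard scalar inequality: by spectral calculus,
\[
\nor{T^{s}(T+\la I)^{-1}}_\infty = \sup_{\sigma \in [0,1]} \frac{\sigma^{s}}{\sigma+\la} \leq \la^{s-1}
\]
for $0 < s \leq 1$ (elementary calculus shows the maximum of $\sigma^s/(\sigma+\la)$ is attained at $\sigma = s\la/(1-s)$ and equals $s^s(1-s)^{1-s}\la^{s-1} \leq \la^{s-1}$, with the case $s=1$ trivial). Combining the two displays yields the claimed bound, uniformly in $x\in X$.

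No step is really an obstacle here: the only subtlety is the manipulation $r_\la(T) - P_\rho = -\la(T+\la I)^{-1}P_\rho$, which requires splitting $\hh = \ran P_\rho \oplus \ker P_\rho$ and using that $r_\la$ vanishes at $0$ so that $r_\la(T)$ annihilates $\ker P_\rho = \ker T$. Once this decomposition is in place, the source condition and the scalar bound $\sup_\sigma \la\sigma^s/(\sigma+\la) \leq \la^s$ finish the proof.
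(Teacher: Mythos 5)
Your proof is correct and follows essentially the same route as the paper's: both identify $r_\la(T)-\theta(T)$ as $-\la(T+\la I)^{-1}$ on $\ran P_\rho$, insert $P_\rho K_x$, factor through the source condition $P_\rho K_x = T^{s/2}u_x$, and finish with the scalar bound $\sup_\sigma \sigma^s/(\sigma+\la)\leq\la^{s-1}$. The paper merely phrases the intermediate quantity as $\la\nor{(T+\la)^{-1/2}P_\rho K_x}^2$ rather than as the inner product $\la\scal{T^s(T+\la)^{-1}u_x}{u_x}$, which is an equivalent presentation of the same estimate.
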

\begin{proof}
Since $\theta(\sigma)-r_{\la}(\sigma)=\la/(\sigma+\la)$ for all $ \sigma >0$,  we have 
\begin{align*}
  \abs{G_{\la}(x)-F_\rho(x)} & = \abs{\scal{(r_{\la}(T)
      -\theta(T))K_x}{K_x}} = \abs{\scal{(r_{\la}(T) -\theta(T))P_\rho
      K_x}{P_\rho K_x}} \\ & =\la \nor{ (T+\la)^{-\frac12}P_\rho K_x}^2 ,
\end{align*}
as $P_{\rho}K_x \in \ker{T}^\perp$. Since by assumption $P_{\rho}K_x\in \ran{T^{s/2}}$ for some $0<s\leq 1$, spectral calculus and the bound $\sigma^s/(\sigma+\la) \leq \la^{s-1}$ give  the inequality
$$
\nor{ (T+\la)^{-\frac12}P_\rho K_x}^2 = \nor{ [(T+\la)^{-1} T^s]^{\frac 12} T^{-\frac s2} P_\rho K_x}^2 \leq \la^{s-1}
  \nor{T^{-\frac s2}P_\rho K_x}^2 ,
$$
so that 
\[
  \abs{G_{\la}(x)-F_\rho(x)} \leq \la^{s} \nor{T^{-\frac s2}P_\rho K_x}^2\leq \la^sC_s
\]
for all $x\in X$.
\end{proof}
We are now  ready to prove the main result.
\begin{proof}[Proof of Theorem \ref{rates}]
The choiche $\la_n=n^{-1/(2s+b+1)}$ is the one that set  the contributions of the sample and approximation errors in \eqref{decomp} to be equal.
Indeed, we begin by simplifying the bound on the sample error. If $\la \geq n^{-1}$, then $n\la\geq\sqrt{n\la^{b+1}}$ for all $0<b\leq1$, so that 
$$
\frac{\delta}{n\la }+\sqrt{\frac{2\delta{\cal N}(\la)}{n\la }} = \frac{\delta}{n \la}+\sqrt{\frac{2\delta {\mathcal N}(\la)\la^b}{n\la^{b+1} }}
       \leq D_b(\delta\vee\sqrt{2\delta}) \left(\frac{1}{n\la}+\frac{1}{\sqrt{n\la^{b+1}}}\right)
       \leq  \frac{2D_b(\delta\vee\sqrt{2\delta})}{\sqrt{n\la^{b+1}}},
$$
where we used the definition of $D_b$ (and the fact that $D_b\geq 1$). Then, by the above inequality and Propositions \ref{prop:sample_err2} and \ref{prop:apprx_err}, inequality \eqref{decomp} gives 
\begin{equation}\label{staqua}
\sup_{x\in X}  \abs{F_n(x)-F_\rho(x)} \leq C_s\la^s+ \frac{2D_b(\delta\vee\sqrt{2\delta})}{\sqrt{n\la^{b+1}}}.
\end{equation}
If we set  the contributions of the sample and approximation errors to be equal, the choice for $\la$ is
$$
\la =\left(\frac{1}{n}\right)^{\frac{1}{2s+b+1}}.
$$
It is easy to see that $\la \geq n^{-1}$ for all  values of $s,b$, so that  from \eqref{staqua} we have 
$$
\sup_{x\in X}  \abs{F_n(x)-F_\rho(x)} \leq (C_s\vee (2D_b (\delta\vee\sqrt{2\delta}))) \left(\frac{1}{n}\right)^{\frac{s}{2s+b+1}} .
$$
\end{proof}

\subsection{The kernel PCA filter}
A natural choice for the spectral filter $\RR$ would be the regularization defined by kernel PCA \cite{scsmmu98},
 that  corresponds to truncating  the generalized inverse of the kernel matrix at some cutoff parameter $\la$.  The corresponding filter function is 
\[ \RR(\sigma)=
\begin{cases}
  1 & \sigma\geq \la \\
  0 & \sigma <\la
\end{cases}.\]
The above filter does not satisfy   the Lipschitz condition~\ref{B}.\ref{B3}) in Assumption~\ref{B}, so that the bound \eqref{eq:inpiu} for the sample error $\sup_{x\in X} \abs{F_n (x) - G_{\la_n} (x)}$ does not hold in this case\footnote{Note that, by Proposition \ref{prop:misurabilita2} in~\ref{maurer}, if $X$ is locally compact, then $F_n$ defined in \eqref{estimator} still is a $C(X)$-valued estimator.}. However, 
{{we can still achieve an estimate by employing inequality \eqref{ineMaurer_bis} in~\ref{maurer2}. To this aim,}
with a slight abuse of the notation, here we count the eigenvalues of $T$ and $T_n$ without their multiplicities and we list them in decreasing order. Furthermore, for any $\la>0$ we set $\sigma_{j(\la)}$ and $\sigma_{k(\la)}^{(n)}$ as the smallest eigenvalues of $T$ and $T_n$ which are greater or equal to $\lambda$, i.e.
\[ \sigma_1>\sigma_2>\ldots>\sigma_{j(\la)}\geq\la>\sigma_{j(\la)+1}\qquad \sigma^{(n)}_1>\sigma^{(n)}_2>\ldots>\sigma_{k(\la)}^{(n)}\geq\la>\sigma_{k(\la) +1}^{(n)}. \]
Inequality~\eqref{ineMaurer_bis} implies that
$$
\nor{ r_\la (T_n)-r_\la (T)}_{\mathcal S_2} \leq
\frac{\nor{T_n-T}_{\mathcal S_2}}{\min\set{\sigma_{j(\la)}-\sigma_{k(\la)+1}^{(n)}, \sigma_{k(\la)}^{(n)}-\sigma_{j(\la)+1}}} \leq \frac{\nor{T_n-T}_{\mathcal S_2}}{\min\set{\sigma_{j(\la)}-\lambda, \lambda-\sigma_{j(\la)+1}}} ,
$$
and} inequality \eqref{decomp2} for the sample error then reads
$$
\sup_{x\in C} \abs{F_n(x)-G_{\la_n}(x)} \leq \frac{\nor{T_n-T}_{\mathcal S_2}}{\min\set{\sigma_{j(\la_n)}-\lambda_n , \lambda_n - \sigma_{j(\la_n)+1}}} .
$$
By Lemma \ref{concentration}, in order to have convergence to $0$ of the right hand side of this expression we need to choose the sequence $(\la_n)_{n\geq 1}$ such that
$$
\sup_{n\geq 1} \frac{\log n}{\sqrt{n} \min\set{\sigma_{j(\la_n)}-\lambda_n , \lambda_n - \sigma_{j(\la_n)+1}}} < \infty . 
$$
Since the gap $\sigma_{j(\la)}-\sigma_{j(\la)+1}$ can have any arbitrary rate of convergence to zero as $\la\to 0^+$, we thus see that there exists {\em no} distribution independent choice of $(\lambda_n)_{n\geq 1}$ ensuring the convergence to zero of the above bound.

Note that $r_\la (T)$ is the projection $P_{j(\la)}$ onto the sum of the eigenspaces of
the first $j(\la)$ eigenvalues of $T$ and $r_\la(T_n)$ is the
projection $P^{(n)}_{k(\la)}$ onto the sum of the eigenspaces of
the first $k(\la)$ eigenvalues of $T$. If $(M_n)_{n\geq 1}$ is any strictly increasing sequence with $M_n \in \N$ for all $n$, we can consider the following  distribution dependent choice $\lambda_n=(\sigma_{M_n}+\sigma_{M_n+1})/2$. Then we have 
\[
\nor{ P^{(n)}_{M_n}-P_{M_n}}_{\mathcal S_2}= \nor{r_{\la_n}(T_n)-r_{\la_n}(T)}_{\mathcal S_2} \leq \frac{2\nor{T_n-T}_{\mathcal S_2}}{\sigma_{M_n}-\sigma_{M_n+1}},
\]
which recovers a known result  about kernel PCA (see for example
\cite{zwbl06}). Furthermore, if the bound $\nor{T_n-T}_{\mathcal 
  S_2}<(\sigma_{M_n}-\sigma_{M_n+1})/2$ holds,  then we obtain $\nor{P^{(n)}_{M_n}-P_{M_n}}_{\mathcal S_2}<1$, hence we have the equality 
$\dim\ran{P^{(n)}_{M_n}}=\dim\ran{P_{M_n}}$.

The following result extends Theorem~\ref{thm:consistency} to the case of  kernel PCA, 
at the price of  having  a distribution dependent choice of the cut-off sequence $(M_n)_{n\geq 1}$.

\begin{thm}\label{thm:consistencyPCA} If the sequence  of natural numbers $(M_n)_{n\geq 1}$ is strictly  increasing and such that
\begin{equation*}
\sup_{n\geq 1} \frac {\log n}{\sqrt{n}(\sigma_{M_n}-\sigma_{M_n+1})}<+\infty
\end{equation*}
and we define the sequence $(\la_n)_{n\geq 1}$ as
$$
\lambda_n=\frac{\sigma_{M_n}+\sigma_{M_n+1}}{2} ,
$$
then,  for every  compact subset $C\subset X$, 
\begin{equation*}
  \lim_{n\to\infty} \sup_{x\in C} \lvert F_n(x)-F_\rho(x)\rvert=0
  \qquad\text{almost surely}.
  \end{equation*}
\end{thm}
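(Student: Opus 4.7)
The plan is to re-use the error decomposition from the proof of Theorem \ref{thm:consistency},
\[
\sup_{x\in C}|F_n(x) - F_\rho(x)| \;\leq\; \sup_{x\in C}|F_n(x) - G_{\lambda_n}(x)| \;+\; \sup_{x\in C}|G_{\lambda_n}(x) - F_\rho(x)|,
\]
and to show that both summands on the right vanish almost surely. The only real novelty is the control of the sample error, since the kernel PCA filter fails Assumption~\ref{B}.\ref{B3}); we substitute the Lipschitz bound by the spectral-gap bound~\eqref{ineMaurer_bis} already derived in the discussion preceding the theorem.

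For the approximation error, I would first observe that $\lambda_n \to 0$: indeed, $(M_n)_{n\geq 1}$ is strictly increasing in $\N$, so $M_n\to\infty$, and the summability of the eigenvalues $(\sigma_j)_{j\in J}$ (a consequence of $\tr T = 1$) forces $\sigma_{M_n},\sigma_{M_n+1}\to 0$, hence $\lambda_n\to 0$. The kernel PCA filter clearly satisfies Assumption~\ref{B}.\ref{B2}) (pointwise convergence to $\theta$ on $[0,1]$). Since the proof of Proposition~\ref{appr_err} uses \emph{only} Assumption~\ref{B}.\ref{B2}) together with compactness of $C$, it applies verbatim and yields $\sup_{x\in C}|G_{\lambda_n}(x)-F_\rho(x)|\to 0$ for every compact $C\subset X$.

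For the sample error, I would use the chain of inequalities derived in the discussion preceding the theorem. Combining~\eqref{decomp2} with~\eqref{ineMaurer_bis} and the specific choice $\lambda_n=(\sigma_{M_n}+\sigma_{M_n+1})/2$ (so that $j(\lambda_n)=M_n$, and $k(\lambda_n)=M_n$ as soon as $\|T_n-T\|_{\mathcal S_2}<(\sigma_{M_n}-\sigma_{M_n+1})/2$) gives
\[
\sup_{x\in C}|F_n(x) - G_{\lambda_n}(x)| \;\leq\; \|r_{\lambda_n}(T_n)-r_{\lambda_n}(T)\|_{\mathcal S_2} \;\leq\; \frac{2\,\|T_n-T\|_{\mathcal S_2}}{\sigma_{M_n}-\sigma_{M_n+1}} \,.
\]
Splitting the right-hand side as
\[
\frac{2\,\|T_n-T\|_{\mathcal S_2}}{\sigma_{M_n}-\sigma_{M_n+1}} \;=\; \underbrace{\frac{2\log n}{\sqrt{n}\,(\sigma_{M_n}-\sigma_{M_n+1})}}_{\text{bounded by hypothesis}}\;\cdot\; \underbrace{\frac{\sqrt{n}\,\|T_n-T\|_{\mathcal S_2}}{\log n}}_{\to\, 0 \text{ a.s.\ by Lemma }\ref{concentration}}
\]
shows that the sample error vanishes almost surely.

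The one delicate point is pathwise validity of the spectral-gap bound: it requires, for $n$ large, the strict inequality $\|T_n-T\|_{\mathcal S_2}<(\sigma_{M_n}-\sigma_{M_n+1})/2$ so that $k(\lambda_n)=M_n$ and $r_{\lambda_n}(T_n)=P^{(n)}_{M_n}$ really is a projection onto a subspace of the correct dimension. But exactly the same combination of the standing hypothesis with Lemma~\ref{concentration} that drives the sample error to zero forces $\|T_n-T\|_{\mathcal S_2}/(\sigma_{M_n}-\sigma_{M_n+1})\to 0$ almost surely; consequently, on an event of probability one the required smallness condition holds for all sufficiently large $n$, and the argument closes.
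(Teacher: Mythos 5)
Your proof is correct and follows essentially the same route as the paper: decompose into sample and approximation errors via \eqref{decomp}, observe that $\lambda_n\to 0$ so Proposition~\ref{appr_err} applies to the approximation error, and control the sample error via the spectral-gap consequence of \eqref{ineMaurer_bis} together with Lemma~\ref{concentration}. One small remark on your ``delicate point'': the bound
$\nor{r_{\lambda_n}(T_n)-r_{\lambda_n}(T)}_{\mathcal S_2}\leq 2\nor{T_n-T}_{\mathcal S_2}/(\sigma_{M_n}-\sigma_{M_n+1})$
actually holds \emph{unconditionally}, because the chain preceding the theorem bounds the Lipschitz-type constant of Lemma~\ref{lem:ineMaurer} by $1/\min\{\sigma_{j(\lambda)}-\lambda,\ \lambda-\sigma_{j(\lambda)+1}\}$, a quantity that involves only the spectrum of $T$ and the choice of $\lambda$, not that of $T_n$; with $\lambda_n=(\sigma_{M_n}+\sigma_{M_n+1})/2$ both terms in the min equal $(\sigma_{M_n}-\sigma_{M_n+1})/2$. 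The strict inequality $\nor{T_n-T}_{\mathcal S_2}<(\sigma_{M_n}-\sigma_{M_n+1})/2$ is needed only for the \emph{additional} identification $r_{\lambda_n}(T_n)=P^{(n)}_{M_n}$ (i.e.\ $k(\lambda_n)=M_n$ and matching ranks), which is not used in the consistency argument. So that paragraph is a detour; nothing in it is wrong, but it addresses a condition the proof does not actually require.
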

\begin{proof}
By the above discussion and inequality \eqref{decomp2},
$$
\sup_{x\in C} \abs{F_n(x)-G_{\la_n}(x)} \leq \frac{2\nor{T_n-T}_{\mathcal S_2}}{\sigma_{M_n}-\sigma_{M_n+1}} \leq \frac{\sqrt{n} \nor{T_n-T}_{\mathcal S_2}}{\log n} \, \sup_{n\geq 1} \frac {2\log n}{\sqrt{n}(\sigma_{M_n}-\sigma_{M_n+1})} .
$$
Convergence to $0$ of the sample error then follows from \eqref{eq:concentration}. Combining this fact and Proposition \ref{appr_err} into inequality \eqref{decomp}, the claim then follows.
\end{proof}

\section{Some Perspectives}\label{sec:disc}

In this section we discuss some different perspectives to our approach and suggest some possible extensions.

\subsection{Connection to Mercer Theorem}\label{sec:Mercer}
We start discussing some connections between our analytical
characterization of the support of $\rho$ {and Mercer} theorem
\cite{mer09}. With the notations of Section~\ref{sec:int}, the fact
that the family {$(\sqrt{\sigma_j}\phi_j)_{j\in J}$ is an
  orthonormal basis of $P_\rho\hh$ and  the reproducing property
 give the relation
\begin{equation}\label{eq:mercer}
\scal{P_\rho K_{\yps}}{K_x}= \sum_{j\in J} \sigma_j \phi_j (x) {\phi_j (\yps)} \qquad \forall x,y\in X ,
\end{equation}}
where  the  series converges absolutely. Note that in this expression the eigenfunctions $\phi_j$ of $L_K$ are defined outside $X_\rho$ through the  extension equation~\eqref{eq:outofsample}. Restricting \eqref{eq:mercer} to $x,y\in X_\rho$, we obtain
$$
K(x,y)=\sum_{j\in J} \sigma_j \phi_j (x) {\phi_j (\yps)} \qquad \forall x,y\in X_\rho ,
$$
which is nothing else than Mercer theorem \cite{stch08}. In particular, taking $x=y$, this formula implies that 
$\sum_{j\in J} \sigma_j \abs{\phi_j (x)}^2 =K(x,x)$
 for all $x\in X_\rho$. On the other hand, the assumption that the reproducing kernel separates $X_\rho$ precisely ensures that
$$\sum_{j\in J} \sigma_j \abs{\phi_j (x)}^2 \neq K(x,x)\qquad
\forall x\not\in X_\rho.$$
(Recall that, if $K$ separates $X_\rho$, then $X_\rho$ is the $1$-level set of the function $F_\rho= \sum_{j\in J} \sigma_j \abs{\phi_j}^2 $.{)}

\subsection{A Feature Space Point of View}

In machine learning,  kernel methods are often described in terms of a corresponding feature map \cite{vapnik98}.
This point of view highlights the linear structure of the Hilbert space and often provides a more geometric interpretation.

We recall that a feature map associated to a reproducing kernel is a map $\Psi:X\to{\mathcal F}$, where ${\mathcal F}$ is a Hilbert space with inner product $\scal{\cdot}{\cdot}_{\mathcal F}$, satisfying
 $ K(x,\yps)=\scal{\Psi(\yps)}{\Psi(x)}_{\mathcal F}. $
While every map $\Psi$ from $X$ into a Hilbert space $\cal F$ defines a reproducing kernel, it is also possible to prove that each kernel has an associated  feature map (and in fact many). 
Indeed, given $K$, the natural assignment is $\ff\equiv\hh$ and
$\Psi(x)\equiv { K_x}$. Such a choice is also minimal, in the
sense that, if we make a different choice of $\ff$ and $\Psi$, then
there exists an isometry $W:\hh\to\ff$ such that $\Psi(x) = W {K_x}$
$\forall x\in X${ -- see for example Proposition 2.4 of \cite{cadeto06} or Theorem~4.21 of \cite{stch08}, noticing that both papers deal with the transpose $W^\top:\ff\to\hh$.}

We next review some of the concepts introduced in Section \ref{sec:basic} in terms of feature maps.
 For the sake of comparison we assume that $\nor{\Psi(x)}_{\cal F}=1$
 for all $x\in X$ (this corresponds to the normalization assumption \ref{A}.\ref{A4})), we let ${\cal F}_C$ be the closure of the linear span
 of the set $\set{\Psi(x)\mid x \in C}$, and define 
$$
d_{\cal F}(\Psi(x), {\cal F}_C) = \inf_{f\in {\cal F}_C} \nor{\Psi(x)-f}_{\cal F}.
$$
It is easy to see that the definition of separating kernel has the following equivalent and natural analogue in the context of feature maps. 
\begin{defn}\label{Phiseparated} 
We say that  a feature map $\Psi$
 separates   a subset $C\subset X$ if   
\begin{eqnarray*}
 d_{\cal F}(\Psi(x), {\cal F}_C) =0 \quad\Longleftrightarrow\quad x~\in ~C.
\end{eqnarray*}
\end{defn}

The above definition is equivalent to Definition \ref{separated}  since 
$d_{\cal F}(\Psi(x), {\cal F}_C)=\nor{\Psi(x)-Q_C\Psi(x)}_{\cal F}$, where $Q_C$ is the orthogonal projection onto ${\cal F}_C$. 
Then, according to Definition \ref{Phiseparated}, a point $x\in C$ if and only if $\nor{\Psi(x)-Q_C\Psi(x)}_{\cal F}^2=0$. Since $\Psi(x) = WK_x$ $\forall x\in X$ and $Q_C W = WP_C$, this is equivalent to
$$
0 = \nor{\Psi(x)-Q_C\Psi(x)}_{\cal F}^2 = \nor{K_x - P_C K_x}^2 = K(x,x) - F_C (x) .
$$
Theorem \ref{prop_proj} then implies that Definition \ref{separated} and \ref{Phiseparated} are equivalent.
We thus see that the  separating property  has a clear  geometric
interpretation in the  feature space: the set $\Psi(C)$ is the
intersection of the closed subspace $\mathcal F_C$, i.e.~a linear manifold in $\cal F$, 
and $\Psi(X)$ -- see Figure~\ref{Fig:PhiSep}. 

In the above interpretation, the estimator we propose for the support
then stems from the following observation: given a training set
$x_1,\ldots , x_n$, we
classify a new point $x$ as belonging to the estimator $X_n$ of
$X_\rho$ if the distance of $\Psi(x)$ to the linear span of $\{ \Psi(x_1),
\ldots \Psi(x_n)\}$ is sufficiently
small. 

Given a training set $\set{x_1,\ldots,x_n}$, our estimator $F_n$
  classifies a new point $x$ as belonging to the support if the
  distance of $\Psi(x)$ to the linear span of
  $\Psi(x_1),\ldots,\Psi(x_n)$ is sufficiently small. 

\begin{figure}[t!]
\begin{center}
\includegraphics[width=5in]{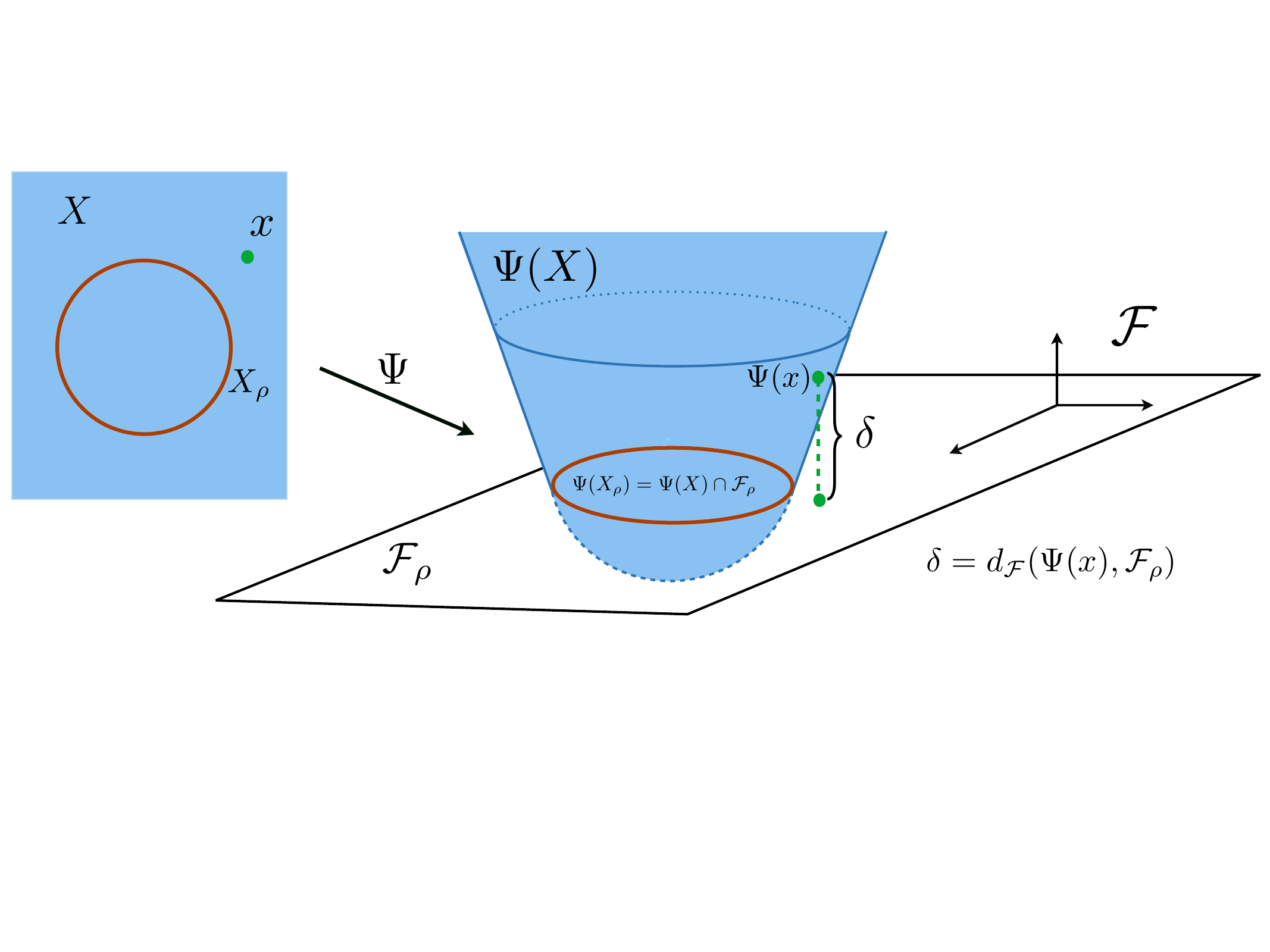}
\caption{The sets $X$  and the support $X_\rho$ are mapped into the feature space $\cal F$, by the feature map $\Psi$. 
Here we take ${\cal F}_{\rho}={\cal F}_{X_\rho}$ to be a linear space passing through the origin. The image of the support  
with respect to the feature map is  given by the intersection of the image of $X$ with ${\cal F}_{\rho}$. 
By the separating property, a point $x$ belongs to the support if and only the distance {between $\Psi(x)$ and $\mathcal F_\rho$} is zero.}\label{Fig:PhiSep}
\end{center}
\end{figure}

\subsection{Inverse Problems and Empirical Risk Minimization}\label{IPERM}

Here we suggest a simple interpretation of the estimator $F_n$ and
stress the connection with the supervised setting. 
We regard the sampled data $x_1,\ldots,x_n$ as a training set of positive examples, so
that each point $x_i\in X_\rho$ almost surely; the new datum is the point $x\in X$,
and we evaluate the estimator $F_n$ at $x$. We label the examples according to the similarity function $K$ by setting
\[
y_i(x)=K(x_i,x) \equiv ({\mathbf K}_x)_i \qquad i=1,\ldots,n .
\]
If $K$ satisfies Assumption \ref{A}, then, since $K(x,x)=1$ and $K$ is $\dk$-continuous, the function $y_i$ is close to $1$
whenever $x_i$ is close to $x$. The interpolation problem
\[
\text{find } f\in\hh \text{ such that }
f(x_i)=y_i(x) \ \forall i\in\{1,\ldots, n\} \quad\Longleftrightarrow\quad  S_n f = {\mathbf K}_x
\]
(where $S_n$ is defined in \eqref{eq:def_Sn}) is ill-posed.
To restore well-posedeness we can consider  the corresponding least square problem (empirical risk minimization problem)
$$
\min_{f\in \hh}\frac 1 n \sum_{i=1}^n \abs{f(x_i)-y_i(x)}^2 \quad\Longleftrightarrow\quad \min_{f\in \hh} \frac 1 n \nor{S_n f-{\mathbf K}_x}^2_{{\R^n}} ,
$$
or in fact  its regularized version
$$
\min_{f\in \hh}\left(\frac 1 n \sum_{i=1}^n \abs{f(x_i)-y_i(x)}^2+\la\nor{f}^2\right) \quad\Longleftrightarrow\quad \min_{f\in \hh} \left(\frac 1 n \nor{S_n f-{\mathbf K}_x}^2_{{\R^n}}+\la\nor{f}^2\right) ,
$$
where $\la>0$ is the regularization parameter (Tikhonov regularization). It is known \cite{enhane} that the minimum of the above expression is achieved by $f\equiv f_n^\la$, with
\[
f_n^\la= \frac{1}{n}g_\la(\frac{S_n^{\top}S_n}{n})S^{\top}_ny ,
\]
where $\G$ is the function $\G(\sigma) = 1/(\sigma + \la)$.\\
More generally, Tikhonov regularization can
be replaced by spectral regularization induced by a different choice of the filter $\G$; the corresponding regularized solution $f_n^\la$ is still given by the previous equation, but the function $\G$ appearing in it is now completely arbitrary. Comparing with \eqref{eq:estimator2}, we see that $f^{\la_n}_n(x)=F_n(x)$. Equation \eqref{set_est} has then the following interpretation: a new point $x$  is estimated to be a positive example (that is, to belong to the support $X_\rho$) if and only   if $f^{\la_n}_n(x) \geq 1-\tau$, where $\tau$ is a threshold parameter.

The above discussion  suggests several extensions and 
variations of our method, obtained considering more general  penalized empirical risk minimization functionals of the form 
$$
\min_{f\in \hh}\left(\frac 1 n \sum_{i=1}^n V(y_i(x), f(x_i))+\la R(f)\right) ,
$$
where:
\begin{itemize}
\item $V$ is a (regression) loss function measuring the approximation property of $f$, for example the logistic loss or a robust loss such as the one used in support vector machine regression. Our theoretical analysis does not carry on to  other loss functions and different mathematical concepts from empirical process theory are probably needed;
\item $R$ is a regularizer  measuring the complexity of a function $f\in\hh$. For example, one can consider the case 
where the kernel is given by  a dictionary  of atoms $f_\gamma:X\to {\R}$, with $\gamma \in \Gamma$, 
such that $\sum_{\gamma\in \Gamma} \abs{f_\gamma(x)}^2=1$, so that  we have $K(x,\yps)=\sum_{\gamma\in \Gamma} f_\gamma(x) {f_\gamma(\yps)}$
and, hence, ${f=\sum_{\gamma\in \Gamma} w_\gamma f_\gamma}$, with $w=(w_\gamma)_{\gamma \in \Gamma}\in \ell_2(\Gamma)$. In this setting, Tikhonov regularization corresponds to the choice $R(f)=\sum_{\gamma\in \Gamma} \abs{w_\gamma}^2$, but other norms, such as the  $\ell_1$ norm $\sum_{\gamma\in \Gamma}|w_\gamma|$,  
can also be considered.
\end{itemize}

\section{Empirical Analysis}\label{sec:emp}

In this section we describe some preliminary experiments aimed at testing the properties 
and the performances  of the proposed methods both on simulated and real data. 
We only  discuss spectral algorithms induced by Tikhonov  regularization to contrast the general method to 
some current state of the art algorithms.
Note that while computations can be made more efficient in several ways, we consider a simple algorithmic  
protocol and leave a more refined computational study  for future work. 
Recall that Tikhonov regularization defines an estimator 
$F_n(x)={\mathbf K_x}^\ast ({\mathbf K}_n+n\lambda)^{-1}{\mathbf K}_x$,
and a point $x$ is labeled as belonging to the support $X_\rho$ if $F_n(x)\geq 1-\tau$. 
The computational cost for the algorithm is,  in the worst case, of order $n^3$ -- like standard regularized least squares -- for training, and order $Nn^2$ if we have to  predict the value of $F_n$ at $N$ test points. 
In practice, one has to choose a good value for the regularization 
parameter $\la$ and this requires computing multiple solutions, a so called {\em regularization path}. 
As noted in \cite{RifLip07}, if we form  the inverse  using the eigendecomposition of the 
kernel matrix the price of computing   the full regularization path is essentially the same as 
that of computing a single solution (note that the cost of the eigen-decomposition of ${\mathbf K}_n$ 
is also of order $n^3$, though the constant is worse).  This is the strategy that we consider in the following.
In our experiments we considered two datasets: the MNIST\footnote{http://yann.lecun.com/exdb/mnist/} dataset and the CBCL\footnote{http://cbcl.mit.edu/} face database.
 For the digits we considered a reduced set consisting of a training set of 5000 images and a test set of 1000 images.
In the first experiment we trained on $500$ images for the digit $3$ and tested on $200$ images of digits $3$ and $8$.  
Each experiment consists of training on one class and testing on two different classes and 
was repeated for 20 trials over different training set choices. 
For all our experiments we considered the Abel kernel. 
Note  that  in this case  the algorithm requires to choose $3$ parameters: 
the regularization parameter $\la$, the kernel width $\sigma$ and the threshold $\tau$.
In supervised learning cross validation is typically used for parameter tuning, but cannot be 
used in  our setting since support estimation is an  unsupervised problem.
Then, we considered the following  heuristics. The kernel width is chosen as the median  
of the distribution of distances of the $k$-th nearest neighbor of each training set point for $k=10$.
Fixed the kernel width, we choose the regularization parameter in correspondence of the maximum curvature 
in the eigenvalue behavior -- see Figure \ref{f:Eigen} -- the rationale being that after this value the eigenvalues are relatively small. 
\begin{figure}[t!]
\begin{center}
\includegraphics[width=8cm]{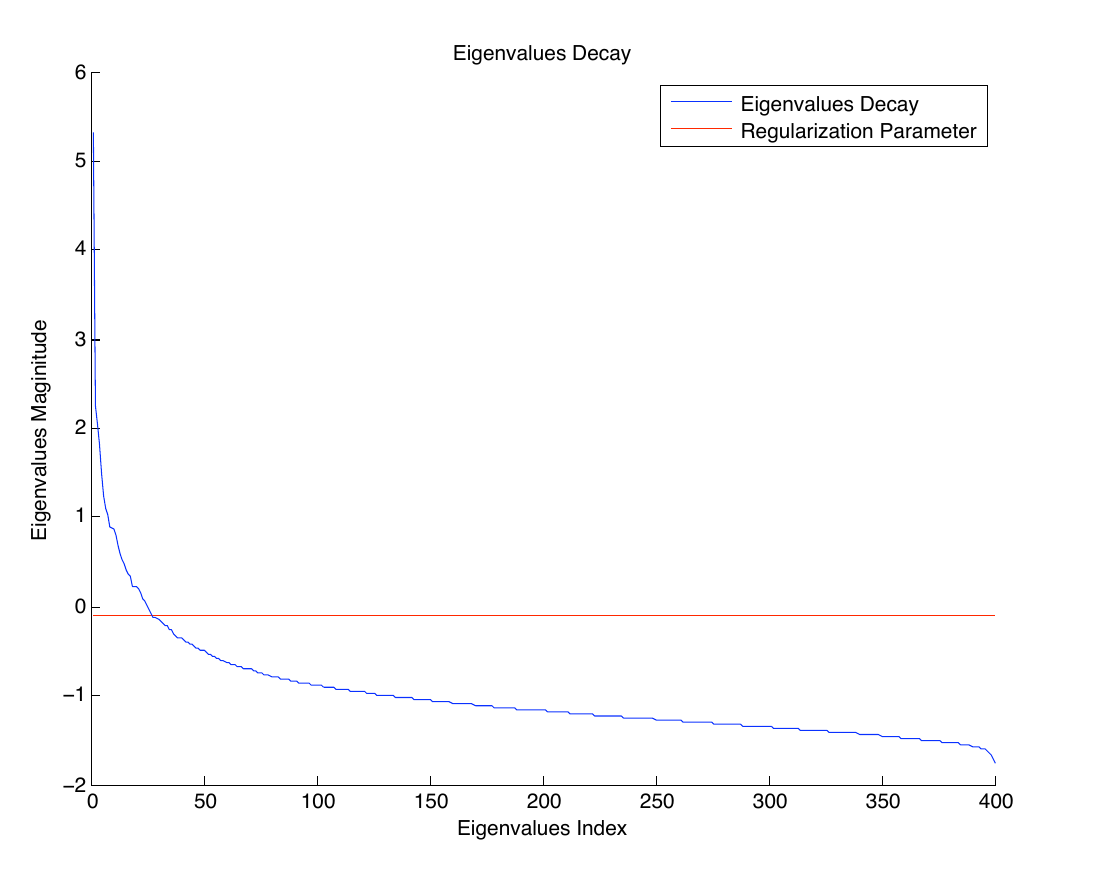}
\end{center}
\caption{{Decay of the eigenvalues of the kernel matrix ordered in decreasing magnitude and corresponding regularization parameter in logarithimic scale.}} 
\label{f:Eigen}
\end{figure}

\begin{figure}
\begin{center}
\begin{minipage}{0.3\textwidth}
\centerline{\tiny{MNIST $9 vs 4$}}
\includegraphics[width=\columnwidth]{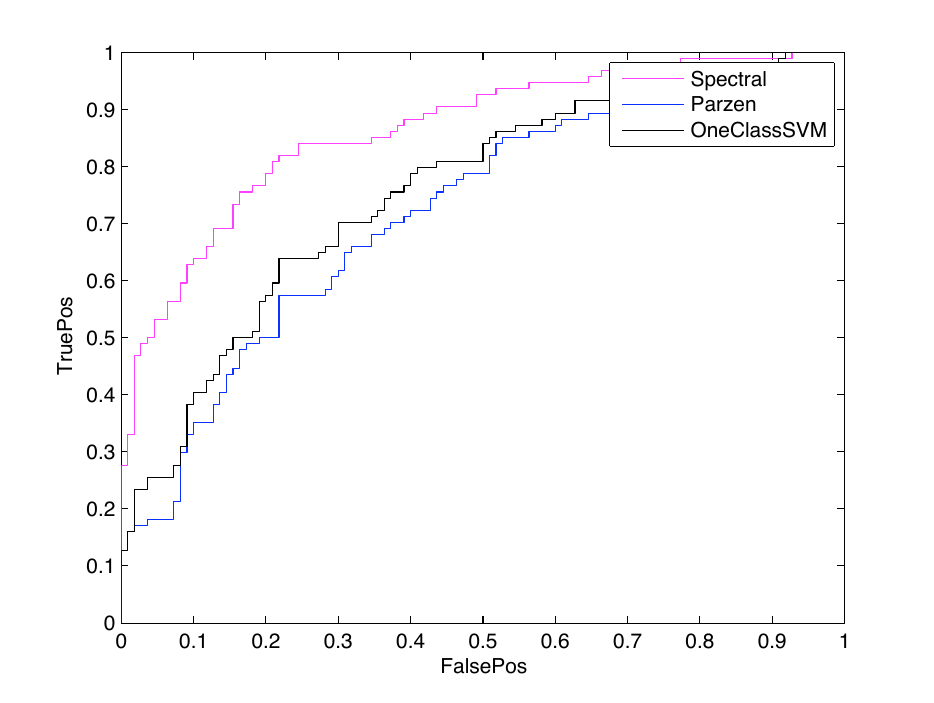}
\end{minipage}
\begin{minipage}{0.3\textwidth}
\centerline{\tiny{MNIST $1 vs 7$ }}
\includegraphics[width=\columnwidth]{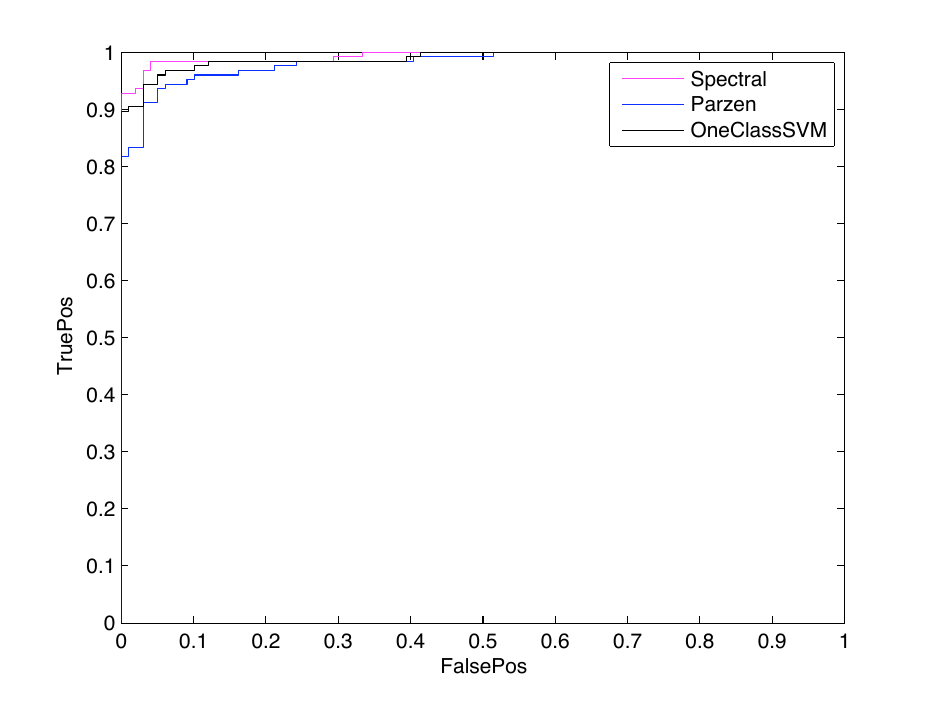}
\end{minipage}
\begin{minipage}{0.3\textwidth}
\centerline{\tiny{CBCL}}
\includegraphics[width=\columnwidth]{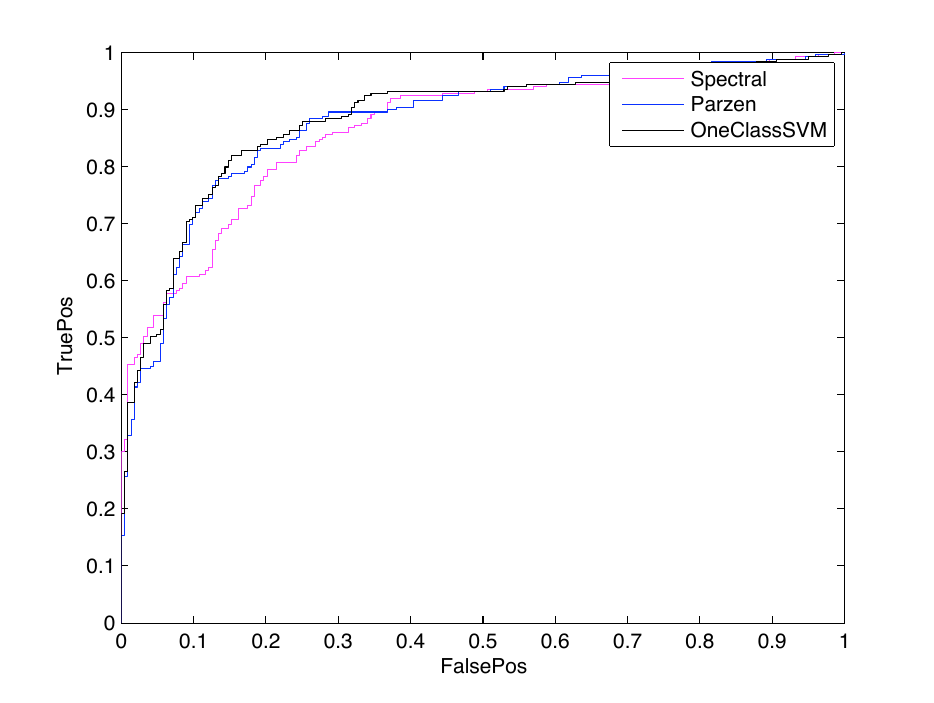}
\end{minipage}
\caption{{ROC curves for the different estimator in three different tasks: digit $9$vs $4$ (Left), digit $1$vs $7$ (Center), CBCL (Right). }}
\label{f:ROC}
\end{center}
\end{figure}
\begin{table}[t!]
\begin{center}
\begin{tabular}{|l|c|c|c|c|c|c|}
\hline
&$3$vs $8$&$8$vs $3$&$1$vs $7$&$9$vs $4$&CBCL\\ 
\hline
\textbf{Spectral}&$0.837\pm0.006$&$0.783\pm0.003$&$0.9921\pm 0.0005 $& $0.865\pm 0.002$&$ 0.868\pm 0.002$\\\hline
\textbf{Parzen}&$0.784\pm0.007$&$0.766\pm0.003$&$0.9811\pm 0.0003 $& $0.724\pm 0.003$&$0.878\pm  0.002 $\\\hline
\textbf{1CSVM}&$0.790\pm 0.006$&$0.764\pm0.003$&$0.9889\pm 0.0002$& $0.753\pm 0.004$ &$0.882\pm 0.002$\\
\hline
\end{tabular}
\end{center}
\caption{Average and standard deviation of the AUC for the different estimators on the considered tasks.}\label{t:AUC}
\end{table}

For comparison we considered a Parzen window density estimator  and one-class SVM  (1CSVM) as implemented  by \cite{SVM-KMToolbox}.
For the Parzen window estimator we used the same kernel of the spectral algorithm, that is the Laplacian kernel, and also the same width.  Given a kernel width, an estimate of the probability distribution is computed and can be used to 
estimate the support by fixing a threshold $\tau'$.   For the one-class SVM we considered the  Gaussian 
kernel, so that we have  to fix the kernel width and a regularization parameter $\nu$. We fixed the kernel width to be the same 
used by our estimator and set $\nu=0.9$. For the sake of comparison,  also for one-class SVM  we considered a varying offset $\tau^{''}$.
The performance is evaluated computing   ROC curve (and the corresponding AUC value) for varying values of the thresholds  $\tau,\tau',\tau^{''}$. 
The ROC curves  on  the different tasks are reported (for one of the trials) in Figure \ref{f:ROC},
Left. 
The mean and standard deviation of the  AUC for the three methods is reported in Table \ref{t:AUC}. 
Similar experiments were repeated considering other pairs of digits, see Table \ref{t:AUC}. 
Also in the case of the CBCL datasets we considered a reduced dataset consisting of $472$ images for training and other $472$ for test. 
On the different test performed on the MNIST data the spectral algorithm always achieves results which are  better -- and often  substantially better -- than those of the other methods. On the CBCL dataset SVM provides the best result, but spectral algorithm still provides a competitive  performance.  

{
\begin{rem}
 We remark that, although binary classification data sets are used in the experiments, the considered set-up
 is that of a one-class classification problem.   Indeed,  the   training and tuning of the algorithms are performed using only examples of one class and the other class is only considered for testing.
 Accordingly, the proposed methods are compared to state of the art algorithms for one-class 
 classification.
%
\end{rem}}

\appendix

\section{Auxiliary Proofs}

In this section we give the proofs of a few technical results needed in the paper.

{\subsection{Normalizing a Kernel}\label{app:lemma}}

{The next result shows that, if $K$ is a reproducing kernel which is nonzero on the diagonal, then it can be normalized, and its normalized version separates the same sets. When $K(x,x) = 0$ for some $x\in X$, then clearly this result still holds replacing the set $X$ with $X\setminus X_0$ and considering the restriction of $K$ to $(X\setminus X_0)\times (X\setminus X_0)$, where $X_0 = \set{x\in X\mid K(x,x) = 0}$.
\begin{prop}\label{l:normalization}
 Assume that $K(x,x)>0$ for
 all $x\in X$. Then, the reproducing kernel $K'$ on $X$, given by 
\[
K'(x,\yps)=\dfrac{K(x,\yps)}{\sqrt{K(x,x)K(\yps,\yps)}} \qquad \forall x,\yps\in X ,
\]
is normalized and separates the same sets as $K$.
\end{prop}
\begin{proof} Clearly $K$ is a kernel of positive type.
Denote by $\hh'$ the reproducing kernel Hilbert space with kernel
  $K'$, and define the feature map $\Psi: X\to \hh$,
  $\Psi(x)=K_x/\nor{K_x}$. It is simple to check that 
  $\scal{\Psi(y)}{\Psi(x)}=K'(x,y)$ and $\Psi(X)^\perp=\{0\}$, so that the map $\Psi_*:\hh\to\hh'$
  \[ (\Psi_*f)(x) =\scal{f}{\Psi(x)}\] 
is a unitary operator  with $K'_x=\Psi_*( \Psi(x))$ \cite{cadeto06}.  Clearly, for any $f\in \hh$ and $x\in X$
\[
\scal{\Psi_* f}{K'_x} = \scal{\Psi_* f}{\Psi_*
\Psi(x)} = \frac{\scal{f}{K_x}}{\nor{K_x}}.
\]
The above  equality shows that $\hh$ and $\hh'$ separate the same sets. 
\end{proof}}

\subsection{Analytic Results}\label{maurer}

In this section, we suppose that the kernel $K$ satisfies Assumption~\ref{A}, and endow the set $X$ with the metric { $\dk$} induced by $K$. {Measurability of a map taking values in a topological space will be always understood with respect to the Borel $\sigma$-algebra of such space.} The next simple lemma will be used frequently.
\begin{lem}\label{lem:app1}
For all $k=1,2$, the map
$$
\xi:X \to \mathcal S_k , \qquad \xi(x)=K_x\otimes K_x
$$
is continuous and measurable. Moreover, if $Z_i : \Omega \to \mathcal S_k$ is given by
$$
Z_i (\omega) = K_{x_i} \otimes K_{x_i} \qquad \omega = (x_j)_{j\geq 1} ,
$$
then $Z_i$ is measurable for all $i\geq 1$.
\end{lem}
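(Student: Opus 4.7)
The proof will be short and direct. The main ingredients are the norm identity $\nor{f_1\otimes f_2}_{\mathcal{S}_k}=\nor{f_1}\nor{f_2}$ for $k=1,2$ (recalled in Section~\ref{sec:int}), the normalization $\nor{K_x}^2=K(x,x)=1$ from Assumption~\ref{A}.\ref{A4}), and the general measurability principle in Proposition~\ref{metrica}.\ref{P.1.iv}).

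First I would prove continuity of $\xi$. Given $x,\yps\in X$, write the telescoping identity
\[
K_x\otimes K_x - K_\yps\otimes K_\yps = (K_x-K_\yps)\otimes K_x + K_\yps\otimes (K_x-K_\yps),
\]
and take the $\mathcal S_k$-norm. Using the norm identity, the triangle inequality, and $\nor{K_x}=\nor{K_\yps}=1$, one obtains
\[
\nor{\xi(x)-\xi(\yps)}_{\mathcal S_k} \leq \nor{K_x-K_\yps}\nor{K_x} + \nor{K_\yps}\nor{K_x-K_\yps} = 2\,\dk(x,\yps),
\]
where the last equality is the very definition of the metric $\dk$ given in~\eqref{metric}. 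In particular $\xi$ is Lipschitz with constant $2$ as a map from $(X,\dk)$ to $\mathcal S_k$, hence continuous.

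Next I would get measurability for free. The space $\mathcal S_k$ is a Banach space and we equip it with its Borel $\sigma$-algebra; by Proposition~\ref{metrica}.\ref{P.1.iv}), any continuous map from $X$ to such a space is measurable with respect to $\A{X}$. Thus $\xi$ is $\A{X}/\text{Borel}$-measurable.

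Finally, for $Z_i$, note that the coordinate projection $\pi_i:\Omega\to X$, $\pi_i((x_j)_{j\geq 1})=x_i$, is measurable with respect to the product $\sigma$-algebra $\A{\Omega}=\A{X}\otimes\A{X}\otimes\cdots$ by construction. Since $Z_i=\xi\circ\pi_i$, it is measurable as the composition of two measurable maps, concluding the proof. There is no real obstacle here; the only subtle point, which I would flag explicitly, is that the Lipschitz bound uses the normalization $K(x,x)=1$ (without it one would obtain continuity via a local bound $\nor{K_x}+\nor{K_\yps}$ instead of a global constant, which would still suffice).
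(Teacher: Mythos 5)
Your proof is correct and follows essentially the same route as the paper's: continuity of $\xi$ from continuity of $x\mapsto K_x$ together with the $\mathcal S_k$-norm of rank-one operators, then measurability via Proposition~\ref{metrica}.\ref{P.1.iv}), then composition with the coordinate projection for $Z_i$. You are merely more explicit than the paper, which states continuity "follows at once" without spelling out the telescoping bound $\nor{\xi(x)-\xi(\yps)}_{\mathcal S_k}\leq 2\,\dk(x,\yps)$; that extra detail (and the remark about where the normalization $K(x,x)=1$ enters) is accurate but not a different argument.
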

\begin{proof}
The map ${x \mapsto K_x}$, is continuous {from $X$ into $\hh$} by item
\ref{P.1.i}) in Proposition \ref{metrica}. Since {$\xi(x) = K_x\otimes K_x$}, continuity of $\xi$ follows at once. By item \ref{P.1.iv}) in Proposition \ref{metrica}, $\xi$ is then a measurable map, hence $Z_i$ is such.
\end{proof}

We recall some basic properties of the operator $T$ defined by the kernel. The next result is known (see for example \cite{dedero09}), but we report a short proof for completeness.

\begin{prop}\label{Tprop}
The $\mathcal S_1$-valued map $\xi$ {defined in Lemma~\ref{lem:app1}} is Bochner-integrable with respect to $\rho$, and its integral
\[
T = \int_{X} K_x\otimes K_x d\rho(x)
\] 
is a positive trace class operator on $\hh$, with $\nor{T}_{\mathcal S_1} = \tr{T} = 1$.
\end{prop}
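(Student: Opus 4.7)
The plan is to verify the three claims in order: Bochner integrability, positivity, and the trace norm identity.

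First, I would establish Bochner integrability of $\xi$. By Lemma~\ref{lem:app1}, the map $x \mapsto K_x\otimes K_x$ is measurable as an $\mathcal S_1$-valued map. For the integrability criterion, I recall the identity $\nor{f_1\otimes f_2}_{\mathcal S_1} = \nor{f_1}\nor{f_2}$ stated in Section~\ref{sec:int}, which gives $\nor{K_x\otimes K_x}_{\mathcal S_1} = \nor{K_x}^2 = K(x,x) = 1$ by the reproducing property and Assumption~\ref{A}.\ref{A4}). Hence $\int_X \nor{\xi(x)}_{\mathcal S_1}\,d\rho(x) = 1 < \infty$, which together with measurability yields Bochner integrability of $\xi$ into $\mathcal S_1$ (and therefore also into $\mathcal S_2$, since $\nor{\cdot}_{\mathcal S_2}\leq\nor{\cdot}_{\mathcal S_1}$). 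Thus $T$ is a well-defined element of $\mathcal S_1$.

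Second, I would check positivity. Since the Bochner integral commutes with any bounded linear functional, for every $f\in\hh$
\[
\scal{Tf}{f} = \int_X \scal{(K_x\otimes K_x)f}{f}\,d\rho(x) = \int_X |\scal{f}{K_x}|^2\,d\rho(x) \geq 0,
\]
so $T\geq 0$.

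Third, I would compute the trace. The trace is a bounded linear functional on $\mathcal S_1$, so it passes under the Bochner integral:
\[
\tr{T} = \int_X \tr{K_x\otimes K_x}\,d\rho(x) = \int_X \nor{K_x}^2\,d\rho(x) = \int_X K(x,x)\,d\rho(x) = 1,
\]
using Assumption~\ref{A}.\ref{A4}) in the final step. Since $T\geq 0$, we have $\nor{T}_{\mathcal S_1} = \tr{T} = 1$, completing the proof.

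There is no real obstacle here; every step is a direct consequence of facts already stated in the excerpt. The only point that could require a brief justification is commuting the trace (or more generally any bounded functional) with the Bochner integral, which is a standard property of the Bochner integral and needs no further argument beyond invoking it.
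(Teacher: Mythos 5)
Your proof is correct and follows essentially the same route as the paper: measurability from Lemma~\ref{lem:app1} plus the bound $\nor{K_x\otimes K_x}_{\mathcal S_1}=K(x,x)=1$ gives Bochner integrability, positivity passes through the integral, and the trace is computed by commuting $\operatorname{tr}$ with the Bochner integral. The only cosmetic difference is that you verify positivity explicitly via the quadratic form $\scal{Tf}{f}\geq 0$, whereas the paper simply notes that the integral of positive operators is positive.
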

\begin{proof}
The map $\xi$ isbounded because $\nor{K_x\otimes K_x}_{\mathcal S_1} = \tr{K_x\otimes K_x} =
K(x,x) = 1$ and measurable by Lemma \ref{lem:app1} . Therefore, $\xi$ is a Bochner-integrable $\mathcal S_1$-valued map, and
its integral $T$ is a trace class operator. As
$\xi(x)$ is a positive operator for all $x$,  so is $T$.  In particular, $\nor{T}_{\mathcal S_1} = \tr{T}$, and $\tr{T} = \int_X \tr{K_x\otimes K_x}
 d\rho(x)=1$.
\end{proof}

Now, we come to the proof of Proposition \ref{prop:misurabilita}. We will split it into the proofs of Propositions \ref{prop:misurabilita1} and \ref{prop:misurabilita2} below.
\begin{lem}\label{lem:app2}
For all $k=1,2$, the map
$$
\check{T}_n : X^n \to {\mathcal S}_k , \qquad \check{T}_n (x_1,\ldots,x_n) {=} \frac{1}{n} \sum_{i=1}^n K_{x_i} \otimes K_{x_i}
$$
is continuous and measurable.
\end{lem}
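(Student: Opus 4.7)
The plan is to obtain this directly from Lemma~\ref{lem:app1} via the coordinate projections. Let $\pi_i : X^n \to X$, $\pi_i (x_1,\ldots,x_n) = x_i$, denote the $i$-th coordinate projection for $i=1,\ldots,n$, and let $\xi : X \to \mathcal{S}_k$ be the map of Lemma~\ref{lem:app1}. Then I would simply observe the identity
\[
\check{T}_n = \frac{1}{n}\sum_{i=1}^n \xi \circ \pi_i.
\]

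For continuity, endow $X^n$ with the product topology induced by $d_K$. Each $\pi_i$ is continuous by definition of the product topology, and $\xi$ is continuous by Lemma~\ref{lem:app1}; hence each composition $\xi\circ\pi_i : X^n \to \mathcal{S}_k$ is continuous. Since $\mathcal{S}_k$ is a Banach space (a Hilbert space when $k=2$) and both the sum and the scalar multiplication by $1/n$ are continuous operations on $\mathcal{S}_k$, the finite linear combination $\check{T}_n$ is continuous as well.

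For measurability, equip $X^n$ with the product $\sigma$-algebra $\mathcal{A}_X^{\otimes n}$. Each projection $\pi_i$ is measurable by the universal property of the product $\sigma$-algebra, and $\xi$ is measurable by Lemma~\ref{lem:app1}; hence each $\xi \circ \pi_i$ is measurable. A finite sum of measurable $\mathcal{S}_k$-valued maps (scaled by the constant $1/n$) is again measurable, since the vector space operations on $\mathcal{S}_k$ are continuous and so Borel-measurable.

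There is essentially no obstacle here: the statement is a direct corollary of Lemma~\ref{lem:app1} together with the standard fact that the product topology and product $\sigma$-algebra make the coordinate projections continuous and measurable, and that continuous vector space operations preserve both properties under finite linear combinations.
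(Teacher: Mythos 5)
Your proof is correct and is exactly the argument the paper has in mind: the paper's proof consists of the single line ``Evident by Lemma~\ref{lem:app1},'' and your write-up simply spells out the obvious details (composition with coordinate projections, continuity and measurability of finite linear combinations in $\mathcal S_k$) that the authors left implicit.
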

\begin{proof}
Evident by Lemma \ref{lem:app1}. 
\end{proof}
\begin{prop}\label{prop:misurabilita1}
For all $n\geq 1$, the map $T_n$ defined in \eqref{eq:Tn} is a $\mathcal S_k$-valued estimator for $k=1,2$.
\end{prop}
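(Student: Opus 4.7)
The plan is to unpack the definition of an estimator and reduce to what has already been done in Lemma~\ref{lem:app2}. Recall that, by definition, an $\mathcal{S}_k$-valued estimator of size $n$ is a measurable map $\Xi_n:\Omega\to\mathcal{S}_k$ which factors through the canonical projection $\pi_n:\Omega\to X^n$, $\pi_n(\omega)=(x_1,\ldots,x_n)$, via a measurable map $\xi_n:X^n\to\mathcal{S}_k$.

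First, I would observe that the definition \eqref{eq:Tn} immediately gives the factorization $T_n=\check{T}_n\circ\pi_n$, since for $\omega=(x_j)_{j\geq 1}$ we have
\[
T_n(\omega)=\frac{1}{n}\sum_{i=1}^n K_{x_i}\otimes K_{x_i}=\check{T}_n(x_1,\ldots,x_n).
\]
In particular, $T_n$ depends only on the first $n$ coordinates of $\omega$, which is the first requirement in the definition of an estimator.

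Next, I would verify measurability of the two maps in the composition. The projection $\pi_n:(\Omega,\mathcal{A}_\Omega)\to(X^n,\mathcal{A}_X^{\otimes n})$ is measurable by the very construction of the product $\sigma$-algebra $\mathcal{A}_\Omega=\mathcal{A}_X\otimes\mathcal{A}_X\otimes\cdots$. The map $\check{T}_n:X^n\to\mathcal{S}_k$ is measurable by Lemma~\ref{lem:app2}. Composing these two measurable maps yields that $T_n:\Omega\to\mathcal{S}_k$ is measurable, for both $k=1$ and $k=2$.

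No real obstacle here; the statement is essentially a bookkeeping consequence of Lemma~\ref{lem:app2} together with the trivial fact that coordinate projections are measurable under the product $\sigma$-algebra. The substantive measurability content has already been absorbed into Lemma~\ref{lem:app1} (measurability of $x\mapsto K_x\otimes K_x$) and Lemma~\ref{lem:app2} (which extends this to finite sums on $X^n$).
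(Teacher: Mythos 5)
Your proof is correct and follows the same route as the paper: both identify the factorization $T_n=\check{T}_n\circ\pi_n$ and invoke Lemma~\ref{lem:app2} for measurability of $\check{T}_n$. The paper states this more tersely, leaving the measurability of the coordinate projection $\pi_n$ implicit, whereas you spell it out explicitly; the substance is identical.
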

\begin{proof}
We have
$$
T_n (\omega) = \check{T}_n (x_1,\ldots,x_n) \qquad \omega = (x_i)_{i\geq 1} ,
$$
hence $T_n$ is measurable by Lemma \ref{lem:app2}.
\end{proof}

For the next proposition we recall that the topology of uniform convergence on compact subsets of $X$ is generated by the following basis of open sets $U_{f,\eps,C}\subset C(X)$
$$
U_{f,\eps,C} = \set{g\in C(X) \mid \sup_{x\in C} \abs{f(x) - g(x)} < \eps} \qquad f\in C(X) ,\, \eps > 0 , \, C\subset X \mbox{ compact} .
$$
\begin{prop}\label{prop:misurabilita2}
Suppose $X$ is locally compact. Let $(\RR)_{\lambda>0}$ be a family of functions $\RR:[0,1]\to [0,1]$ such that each $\RR$ is upper semicontinuous. Then, for any sequence of positive numbers $(\la_n)_{n\geq 1}$ and all $n\geq 1$, the map $F_n$ defined in \eqref{estimator} is a $C(X)$-valued estimator, where $C(X)$ is the space of continuous functions on $X$ with the topology of uniform convergence on compact subsets.
\end{prop}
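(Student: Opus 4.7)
The claim has two parts: (a) $F_n(\omega)$ lies in $C(X)$ for every $\omega\in\Omega$, and (b) the map $\omega\mapsto F_n(\omega)$ is measurable into $C(X)$ with the Borel $\sigma$-algebra of the uniform-on-compacta topology. Part (a) would be immediate: $T_n(\omega)$ is a positive trace-class operator with $\nor{T_n}_{\mathcal{S}_1}=1$, so its spectrum lies in $[0,1]$; the bounded functional calculus then produces a self-adjoint operator $\RRn(T_n(\omega))$ of norm at most $1$, and continuity of $x\mapsto K_x$ from Proposition \ref{metrica}.\ref{P.1.i}) gives continuity of $x\mapsto\scal{\RRn(T_n(\omega)) K_x}{K_x}$.

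For part (b), my strategy would be to first reduce measurability into $C(X)$ to measurability of countably many point evaluations. Since $X$ is locally compact and separable metric, it is $\sigma$-compact, so I would pick an exhaustion $X=\bigcup_m C_m$ by compact sets and a countable dense subset $D_m\subset C_m$. The uniform-on-compacta topology on $C(X)$ is metrized by $d(f,g)=\sum_m 2^{-m}\min(1,\sup_{x\in C_m}|f(x)-g(x)|)$, and continuity together with density of $D_m$ in $C_m$ implies $\sup_{x\in C_m}|f-g|(x)=\sup_{x\in D_m}|f-g|(x)$ for all $f,g\in C(X)$. Consequently, the open balls of $d$ give a countable base of the Borel $\sigma$-algebra whose membership depends only on evaluations at points of $D=\bigcup_m D_m$, and measurability of $F_n$ would reduce to measurability of $\omega\mapsto F_n(\omega)(x)$ for each $x\in D$ (in fact for all $x\in X$).

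Combined with measurability of $T_n:\Omega\to\mathcal{S}_1$ from Proposition \ref{prop:misurabilita1}, it would then suffice to show that for each $x$ the map $\Psi_x:\mathcal{S}_1\to\R$, $\Psi_x(A)=\scal{\RRn(A) K_x}{K_x}$, is Borel measurable on the positive contractions. Here I would exploit the standard fact that any upper semicontinuous function $\RRn:[0,1]\to[0,1]$ is the pointwise infimum of a decreasing sequence of continuous functions $h_k:[0,1]\to[0,1]$ (via the inf-convolution $h_k(\sigma)=\sup_\tau[\RRn(\tau)-k|\sigma-\tau|]$, truncated inside $[0,1]$). Monotone convergence applied to the scalar spectral measure of $A$ at $K_x$ gives $\Psi_x(A)=\inf_k \scal{h_k(A) K_x}{K_x}$. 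Each $A\mapsto\scal{h_k(A) K_x}{K_x}$ is norm-continuous in $A$ by norm-continuity of the continuous functional calculus, so $\Psi_x$ is a countable infimum of continuous maps, hence upper semicontinuous and Borel measurable.

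The main obstacle I anticipate is the topological bookkeeping in the middle paragraph: identifying the Borel $\sigma$-algebra on $C(X)$ with the one generated by countably many point evaluations, which relies on the combination of local compactness and separability of $X$. Once that reduction is secured, the functional-calculus trick of approximating $\RRn$ from above by continuous functions does the rest and clarifies why upper semicontinuity of the filter is the natural hypothesis here.
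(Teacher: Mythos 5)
Your proposal is correct and follows essentially the same route as the paper: approximate the upper semicontinuous filter from above by a decreasing sequence of continuous functions, use norm-continuity of the continuous functional calculus and a limiting argument to obtain measurability of each point evaluation $\omega\mapsto\scal{\RRn(T_n(\omega))K_x}{K_x}$, and then reduce measurability into $C(X)$ to countably many point evaluations via separability. The only cosmetic differences are that the paper works with the map $\check{T}_n$ on $X^n$ rather than factoring explicitly through $\mathcal{S}_1$, and that it handles $C(X)$-measurability by a direct continuity argument on the basic open sets $U_{f,\eps,C}$ rather than via a metrization-and-dense-subset reduction.
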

\begin{proof}
Throughout the proof, $n\geq 1$ will be fixed. Let $(\pphi_k)_{k\geq 1}$ be a decreasing sequence of continuous functions $\pphi_k : [0,1] \to [0,1]$ such that $\pphi_k (\sigma) \downarrow \RRn(\sigma)$ for all $\sigma\in [0,1]$ (such sequence exists by (12.7.8) of \cite{Dieu2}). Then, by Lemma \ref{lem:app2} and continuity of the functional calculus (see e.g.~Problem 126 in \cite{HalmoProb}), for all $k\geq 1$ the map
$$
\pphi_k (\check{T}_n) : X^n \to {\mathcal S}_0 , \qquad [\pphi_k (\check{T}_n)](x_1,\ldots,x_n) {=} \pphi_k (\check{T}_n (x_1,\ldots,x_n))
$$
is continuos from $X^n$ into the Banach space ${\mathcal S}_0$ of the bounded operators on $\hh$ with the uniform operator norm. Thus, for all $x\in X$, the real function $(x_1,\ldots,x_n) \mapsto \scal{[\pphi_k (\check{T}_n)](x_1,\ldots,x_n)\, K_x}{K_x}$ is continuous on $X^n$, hence is measurable by item \ref{P.1.iv}) of Proposition \ref{metrica}. By spectral calculus and dominated convergence theorem,
for all $\omega = (x_i)_{i\geq 1} $
\begin{align*}
\scal{\RRn (T_n (\omega)) \, K_x}{K_x} & = \scal{\RRn (\check{T}_n (x_1,\ldots,x_n)) \, K_x}{K_x} =
\lim_{k\to\infty} \scal{[\pphi_k (\check{T}_n)](x_1,\ldots,x_n)\, K_x}{K_x} 
\end{align*}
It then follows that, for each $x\in X$, the real function $\omega \mapsto \scal{\RRn (T_n(\omega)) K_x}{K_x}$ is measurable on $\Omega$, being the pointwise limit of measurable functions.\\
We now prove that the map $F_n : \omega \mapsto (x\mapsto \scal{\RRn (T_n(\omega)) K_x}{K_x})$ is measurable from $\Omega$ into the space $C(X)$. {By M2, p.~115 in \cite{la93}, this is equivalent to the measurability of the subsets $F_n^{-1} (U)\subset\Omega$ for all open sets $U\subset C(X)$. Since $X$ is a locally compact separable metric space, the topology of uniform convergence on compact subsets is a separable metric topology on $C(X)$ by (12.14.6.2) in \cite{Dieu2}. By separability of $C(X)$, each open set $U\subset C(X)$ then is the denumerable union of sets of the neighborhood basis $\{U_{f,\eps,C} \mid f\in C(X) ,\, \eps > 0 , \, C\subset X \mbox{ compact}\}$. Hence, it is enough to show that $F_n^{-1} (U_{f,\eps,C})$ is measurable for all $f$, $\eps$ and $C$. We have}
$$
F_n^{-1} (U_{f,\eps,C}) = \set{{\omega\in\Omega}\mid \sup_{x\in C} \abs{f(x) - \scal{\RRn (T_n ({\omega})) \, K_x}{K_x}} < \eps} .
$$
By separability of $X$, there exists a countable set $C_0 \subset C$ such that $\overline{C_0} = C$. A continuity argument then shows that
\begin{align*}
F_n^{-1} (U_{f,\eps,C}) & = \bigcap_{k\geq 1} \set{{\omega\in\Omega}\mid \sup_{x\in C} \abs{f(x) - \scal{\RRn (T_n ({\omega})) \, K_x}{K_x}} \leq \eps-\frac 1k} \\
& = \bigcap_{k\geq 1} \bigcap_{x\in C_0} \set{{\omega\in\Omega}\mid \abs{f(x) - \scal{\RRn (T_n ({\omega})) \, K_x}{K_x}} \leq \eps-\frac 1k} .
\end{align*}
Since each set $\set{{\omega\in\Omega}\mid \abs{f(x) - \scal{\RRn (T_n ({\omega})) \, K_x}{K_x}} \leq \eps-1/k}$ is measurable in $\Omega$, measurability of the countable intersection $F_n^{-1} (U_{f,\eps,C})$ then follows.
\end{proof}

{We conclude this section with the proof of measurability of the threshold parameters $(\tau_n)_{n\geq 1}$ defined in \eqref{eq:deftaun}.
\begin{prop}\label{prop:misurtaun}
Suppose $X$ is locally compact. Let $(\RR)_{\lambda>0}$ be a family of functions $\RR:[0,1]\to [0,1]$ such that each $\RR$ is upper semicontinuous. Then, for any sequence of positive numbers $(\la_n)_{n\geq 1}$ and all $n\geq 1$, the map $\tau_n$ defined in \eqref{eq:deftaun} is a $\R$-valued estimator.
\end{prop}
\begin{proof}
As $F_n$ depends only on $(x_1,\ldots,x_n)$, it is clear that so does $\tau_n$. It remains to show measurability of $\tau_n$.\\
Given $i\geq 1$, the map $\omega\mapsto x_i$ is measurable by definition of the product $\sigma$-algebra $\A{\Omega}$ on $\Omega$. Moreover, for any $n\geq 1$, the map $F_n$ is measurable from $\Omega$ into $C(X)$ by Proposition \ref{prop:misurabilita2}. Therefore, the map $\Theta_1 : \Omega \to C(X)\times X$, with $\Theta_1 (\omega) = (F_n (\omega) \, , \, x_i)$, is measurable when $C(X)\times X$ is endowed with the product $\sigma$-algebra of the Borel $\sigma$-algebras of $C(X)$ and $X$, respectively.\\
Since $X$ is locally compact, the map $\Theta_2 : C(X)\times X \to \R$, with $\Theta_2(f,x) = f(x)$, is jointly continuous by \cite[Theorem 5, p.~223]{Kelley} and the discussion following it. Thus, $\Theta_2$ is measurable with respect to the Borel $\sigma$-algebras of $C(X)\times X$ and $\R$.\\
The metric spaces $X$ and $C(X)$ are both separable (for $C(X)$, this is (12.14.6.2) in \cite{Dieu2}). By \cite[Proposition 4.1.7]{Dudley}, the product $\sigma$-algebra of the Borel $\sigma$-algebras of $C(X)$ and $X$ then coincides with the Borel $\sigma$-algebra of $C(X)\times X$. Thus, the composition map $\Phi_i = \Theta_2 \Theta_1$, which is $\Phi_i(\omega) = [F_n(\omega)](x_i)$, is measurable.\\
Finally, the map $m(t_1,\ldots,t_n)\mapsto
\min_{1\leq i\leq n} t_i$ is continuous from $\R^n$ into $\R$, so that $\tau_n = 1-m(\Phi_1,\Phi_2,\ldots,\Phi_n)$ is measurable.
\end{proof}}

\subsection{{A Useful Inequality}}\label{maurer2}
The following proof of inequality \eqref{ineMaurer} below is due to A.~Maurer\footnote{http://www.andreas-maurer.eu}. 
\begin{lem}\label{lem:ineMaurer}
Suppose $S$ and $T$ are two {symmetric}  Hilbert-Schmidt operators on $\hh$ with spectrum contained in the interval $[a,b]$, and let $(\sigma_j)_{j\in J}$ and $(\tau_k)_{k\in K}$ be the eigenvalues of $S$ and $T$, respectively. Given a function $r:[a,b]\to\R$, if the constant
$$
L = \sup_{j\in J, k\in K} \abs{\frac{r(\sigma_j) - r(\tau_k)}{\sigma_j - \tau_k}} \qquad (\mbox{with } 0/0 \equiv 0)
$$
is finite, then
\begin{equation}\label{ineMaurer_bis}
\nor{r(S)-r(T)}_{\mathcal S_2}\leq L \nor{S-T}_{\mathcal S_2}.
\end{equation}
In particular, if $r$ is a Lipshitz function with Lipshitz constant $L_r$, then
\begin{equation}\label{ineMaurer}
\nor{r(S)-r(T)}_{\mathcal S_2}\leq L_r \nor{S-T}_{\mathcal S_2}.
\end{equation}
\end{lem}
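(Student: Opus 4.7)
The plan is to prove this by directly computing both Hilbert-Schmidt norms in a common matrix representation determined by the spectral decompositions of $S$ and $T$. Since $S$ and $T$ are self-adjoint and Hilbert-Schmidt (hence compact), the spectral theorem provides orthonormal bases $(e_j)_{j\in J}$ and $(f_k)_{k\in K}$ of $\hh$ consisting of eigenvectors of $S$ and $T$ respectively, with $Se_j = \sigma_j e_j$ and $Tf_k = \tau_k f_k$ (if $\hh$ is infinite dimensional and $\ker S$ or $\ker T$ is nontrivial, we include the zero eigenvalue with appropriate multiplicity so that these bases are complete).

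The first key step is to express the Hilbert-Schmidt norm of any bounded operator $A$ on $\hh$ via the identity
\[
\nor{A}_{\mathcal S_2}^2 = \sum_{j\in J} \nor{Ae_j}^2 = \sum_{j\in J,k\in K} \abs{\scal{Ae_j}{f_k}}^2,
\]
which follows from Parseval's identity applied with the orthonormal basis $(f_k)$. The second step is to compute the relevant matrix elements. Applying self-adjointness of $r(T)$ and the functional calculus,
\[
\scal{(r(S)-r(T))e_j}{f_k} = r(\sigma_j)\scal{e_j}{f_k} - r(\tau_k)\scal{e_j}{f_k} = (r(\sigma_j)-r(\tau_k))\scal{e_j}{f_k},
\]
and analogously $\scal{(S-T)e_j}{f_k} = (\sigma_j-\tau_k)\scal{e_j}{f_k}$.

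The third step is to compare the two expressions termwise. By the hypothesis on $L$, we have $\abs{r(\sigma_j)-r(\tau_k)} \leq L\abs{\sigma_j-\tau_k}$ whenever $\sigma_j \neq \tau_k$; when $\sigma_j = \tau_k$, both differences vanish and the inequality is trivially true. (The convention $0/0 \equiv 0$ in the definition of $L$ simply records this compatibility.) Squaring and summing over $j,k$, we obtain
\[
\nor{r(S)-r(T)}_{\mathcal S_2}^2 = \sum_{j,k}\abs{r(\sigma_j)-r(\tau_k)}^2\abs{\scal{e_j}{f_k}}^2 \leq L^2 \sum_{j,k}\abs{\sigma_j-\tau_k}^2\abs{\scal{e_j}{f_k}}^2 = L^2\nor{S-T}_{\mathcal S_2}^2,
\]
which gives \eqref{ineMaurer_bis}. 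The special case \eqref{ineMaurer} for a Lipschitz function with constant $L_r$ follows immediately since any pair of eigenvalues $\sigma_j,\tau_k \in [a,b]$ yields $\abs{r(\sigma_j)-r(\tau_k)} \leq L_r\abs{\sigma_j-\tau_k}$, so $L \leq L_r$.

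The main (though very mild) obstacle is handling the possibility that either operator has a nontrivial kernel or that $\hh$ is infinite dimensional: one must ensure that the chosen eigenbases are complete (by including zero eigenvalues with full multiplicity) so that the Parseval identity above is valid. Once that bookkeeping is in place, the proof is a one-line computation once the matrix elements have been written down.
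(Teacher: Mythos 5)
Your proof is correct and follows essentially the same route as the paper: both pick orthonormal eigenbases of $S$ and $T$, express the Hilbert--Schmidt norm as $\sum_{j,k}\abs{\scal{A e_j}{f_k}}^2$, compute the matrix elements $(r(\sigma_j)-r(\tau_k))\scal{e_j}{f_k}$ and $(\sigma_j-\tau_k)\scal{e_j}{f_k}$, and compare termwise using $L$. Your remark about completing the eigenbases with zero eigenvalues is a small clarification that the paper leaves implicit, but it does not change the argument.
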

\begin{proof}
 Let $(f_j)_{j\in J}$ and $(g_k)_{k\in K}$ be the orthonormal bases of eigenvectors of $S$ and
 $T$ corresponding to the eigenvalues $(\sigma_j)_{j\in J}$ and
 $(\tau_k)_{k\in K}$, respectively, which here we list repeated accordingly to their multiplicity. We have
\begin{align*}
 \nor{r(S)-r(T)}_{\mathcal S_2}^2 & = \sum_{j,k} \abs{\scal{ (r(S)-r(T)) f_j}{g_k}}^2 =
 \sum_{j,k} \left(r(\sigma_j)-r(\tau_k)\right)^2 \abs{\scal{f_j}{g_k}}^2 \\
 & \leq L^2 \sum_{j,k} \left(\sigma_j-\tau_k\right)^2 \abs{\scal{f_j}{g_k}}^2= L^2  \sum_{j,k} \abs{\scal{
   (S-T) f_j}{g_k}}^2 \\
& = L^2 \nor{S-T}_{\mathcal S_2}^2 ,
\end{align*}
which is \eqref{ineMaurer_bis}.
\end{proof}

\subsection{Concentration of Measure  Results}\label{sec:conc}

We will use the following  standard concentration inequality for Hilbert space random variables  (see  Theorem 8.6 in \cite{Pin94}, and \cite{pin99}). Let $\vv$ be a separable Hilbert space and $(\Omega,\A{\Omega},{\mathbb P})$ a probability space. Suppose that $Y_1, Y_2, \ldots$ is a sequence of independent $\vv$-valued random variables $Y_i: \Omega \to \vv$. If
$\EE{\nor{Y_i}_\vv^m}\leq (1/2)m!B^2L^{m-2}$  $\forall m\geq 2$,
then, for all $n\geq 1$ and $\eps >0$,
\begin{equation}\label{ineq_prob}
\PP{\nor{\frac{1}{n}\sum_{i=1}^n Y_i}_\vv > \eps}\leq
2e^{-\frac{n\eps^2}{B^2+L\eps+B\sqrt{B^2+2L\eps}}}.
\end{equation}
We will need in particular the next two straightforward consequences of this inequality.
\begin{lem}\label{conc_lemma}
If $Z_1, Z_2, \ldots$ is a sequence of {i.i.d.} $\vv$-valued random variables, such that $\nor{Z_i}_\vv\leq M$ almost surely, 
$\EE{Z_i} =\mu$ and $\EE{\nor{Z_i}_\vv^2} \leq \sigma^2$ for all $i$, then, for all $n\geq 1$ and $\delta > 0$,
\begin{equation}\label{bernie}
\nor{\frac1 n \sum_{i=1}^n Z_i - \mu}_\vv \leq \frac{M\delta}{n}+
  \sqrt{ \frac{2\sigma^2\delta}{n}}
 \end{equation}
with probability at least $1-2 e^{-\delta}$.
\end{lem}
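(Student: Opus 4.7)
The plan is to reduce the claim to a direct application of the Pinelis-type inequality~\eqref{ineq_prob} to the centered, i.i.d.~sequence $Y_i = Z_i - \mu$ in the separable Hilbert space $\vv$. These $Y_i$ are independent with $\EE{Y_i} = 0$, so the only thing to check is the Bernstein-type moment bound required by~\eqref{ineq_prob}.

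I would take $B = \sigma$ and $L = M$. Centering costs only a factor of two: by Jensen, $\nor{\mu}_\vv \leq \EE{\nor{Z_i}_\vv} \leq M$, hence $\nor{Y_i}_\vv \leq 2M$ almost surely, while $\EE{\nor{Y_i}_\vv^2} \leq \EE{\nor{Z_i}_\vv^2} \leq \sigma^2$. For every integer $m\geq 2$ this gives
\[
\EE{\nor{Y_i}_\vv^m} \;\leq\; (2M)^{m-2}\,\EE{\nor{Y_i}_\vv^2} \;\leq\; 2^{m-2}\sigma^2 M^{m-2} \;\leq\; \frac{m!}{2}\,\sigma^2 M^{m-2},
\]
where the last step uses $2^{m-1}\leq m!$ for $m\geq 2$, a trivial induction. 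Hence~\eqref{ineq_prob} applies with $B=\sigma$, $L=M$, producing
\[
\PP{\nor{\frac{1}{n}\sum_{i=1}^n Y_i}_\vv > \eps} \;\leq\; 2\exp\left(-\frac{n\eps^2}{D(\eps)}\right), \qquad D(\eps) = \sigma^2 + M\eps + \sigma\sqrt{\sigma^2 + 2M\eps}.
\]

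The rest is purely algebraic. I would substitute $\eps = M\delta/n + \sqrt{2\sigma^2\delta/n}$ and verify that $n\eps^2 \geq \delta\,D(\eps)$, which forces the probability above to be at most $2e^{-\delta}$. Expanding $n\eps^2 = M^2\delta^2/n + 2M\delta\sqrt{2\sigma^2\delta/n} + 2\sigma^2\delta$ and using $M\eps = M^2\delta/n + M\sqrt{2\sigma^2\delta/n}$ to cancel the common pieces against $\delta\,D(\eps)$, the inequality collapses to
\[
\sigma^2 + M\sqrt{\tfrac{2\sigma^2\delta}{n}} \;\geq\; \sigma\sqrt{\sigma^2 + 2M\eps},
\]
and squaring reduces this in turn to the identity $M\cdot(2\sigma^2\delta/n) = 2\sigma^2\cdot(M\delta/n)$.

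No step presents a genuine obstacle, since the heavy lifting is done by~\eqref{ineq_prob}. The only subtlety worth flagging is that the natural bound $\nor{Y_i}_\vv \leq 2M$ would suggest the choice $L = 2M$, but the combinatorial slack $2^{m-1}\leq m!$ absorbs this factor and allows the sharper choice $L = M$ that is implicit in the stated inequality.
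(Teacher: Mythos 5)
Your proof is correct and follows essentially the same route as the paper's: center to $Y_i = Z_i - \mu$, bound $\nor{Y_i}_\vv \leq 2M$ and $\EE{\nor{Y_i}_\vv^2}\leq\sigma^2$, use $2^{m-2}\leq m!/2$ to obtain the Bernstein moment condition with $B=\sigma$, $L=M$, and apply~\eqref{ineq_prob}. The only cosmetic difference is at the very end: the paper inverts the function $g(t)=t^2/(1+t+\sqrt{1+2t})$ via $g^{-1}(t)=t+\sqrt{2t}$ to solve for $\eps$ in terms of $\delta$, whereas you plug the claimed $\eps$ in and verify the resulting identity directly — both computations are the same calculation read in opposite directions.
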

\begin{proof}
Let $Y_i=Z_i-\mu$. Then $\nor{Y_i}_\vv\leq 2M$ and   $\EE{\nor{Y_i}_\vv^2} \leq  \EE{\nor{Z_i}_\vv^2}=\sigma^2$.
Moreover,  for all $i$ and $m\geq 2$ $\EE{\nor{Y_i}_\vv^m}\leq\sigma^2 (2M)^{m-2}\leq (1/2)m!
 \sigma^2 M^{m-2}$, where the last inequality
 follows since  $2^{m-2}\leq m!/2$.  Then,  
\[ \PP{\nor{\frac{1}{n}\sum_{i=1}^nZ_i-\mu}_\vv > \eps} = \PP{\nor{\frac{1}{n}\sum_{i=1}^n Y_i}_\vv > \eps} \leq
2e^{-\frac{n\eps^2}{\sigma^2+M\eps+\sigma\sqrt{\sigma^2+2M\eps}}}=2
e^{-\frac{\sigma^2n}{M^2}g(\frac{M\eps}{\sigma^2})}=2e^{-\delta},\]
where $g(t)=t^2/(1+t+\sqrt{1+2t})$. \\
Since $g^{-1}(t)=t+\sqrt{2t}$, by solving the equation $(\sigma^2 n /M^2) g(M\eps/\sigma^2)=\delta$ we have
\[
\eps= \frac{\sigma^2}{M} \left(\frac{M^2\delta}{n\sigma^2}+
  \sqrt{ \frac{2M^2\delta}{n\sigma^2}}\right)= \frac{M\delta}{n}+
  \sqrt{ \frac{2\sigma^2\delta}{n}}.
\]
\end{proof}
The above result and  Borel-Cantelli  lemma imply that 
$$
\lim_{n\to \infty} \nor{\frac1 n \sum_{i=1}^n Z_i - \mu}_\vv =0
$$
almost surely. In the paper we actually need a slightly stronger result which is given in the following lemma.
\begin{lem}\label{cor_conc}
If $Z_1, Z_2, \ldots$ is a sequence of {i.i.d.}  $\vv$-valued random variables, such that $\nor{Z_i}_\vv\leq M$ almost surely, then we have
\begin{equation*}\label{AS}
\lim_{n\to \infty} \frac{\sqrt{n}}{\log n} \nor{\frac1 n \sum_{i=1}^n Z_i - \mu}_\vv = 0
\end{equation*}
almost surely.
\end{lem}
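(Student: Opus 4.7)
The plan is to apply the non-asymptotic inequality \eqref{bernie} of Lemma~\ref{conc_lemma} with a confidence level $\delta$ that grows like $\log n$, and then conclude by a standard Borel--Cantelli argument.

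First, since $\nor{Z_i}_\vv \leq M$ almost surely, we have $\EE{\nor{Z_i}^2_\vv}\leq M^2$, so Lemma~\ref{conc_lemma} applies with $\sigma^2=M^2$. Fix $\eps>0$ and apply \eqref{bernie} with $\delta=\delta_n:=2\log n$ for $n\geq 2$: with probability at least $1-2e^{-2\log n} = 1-2/n^2$, we have
\[
\nor{\frac{1}{n}\sum_{i=1}^n Z_i-\mu}_\vv \;\leq\; \frac{2M\log n}{n}+\sqrt{\frac{4M^2\log n}{n}}.
\]
Multiplying both sides by $\sqrt{n}/\log n$ gives
\[
\frac{\sqrt{n}}{\log n}\nor{\frac{1}{n}\sum_{i=1}^n Z_i-\mu}_\vv \;\leq\; \frac{2M}{\sqrt{n}}+\frac{2M}{\sqrt{\log n}} .
\]
The right-hand side tends to $0$ as $n\to\infty$, hence there exists $N_\eps$ such that for all $n\geq N_\eps$ it is smaller than $\eps$.

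Let $A_n=\set{\frac{\sqrt{n}}{\log n}\nor{\frac{1}{n}\sum_{i=1}^n Z_i-\mu}_\vv > \eps}$. By the above, for $n\geq N_\eps$ we have $\PP{A_n}\leq 2/n^2$, so $\sum_{n\geq 1}\PP{A_n}<\infty$. The Borel--Cantelli lemma yields $\PP{\limsup_n A_n}=0$, i.e.
\[
\limsup_{n\to\infty}\frac{\sqrt{n}}{\log n}\nor{\frac{1}{n}\sum_{i=1}^n Z_i-\mu}_\vv \;\leq\; \eps \qquad\text{almost surely}.
\]
Applying this with $\eps=1/k$ for every $k\geq 1$ and intersecting the corresponding almost sure sets (a countable intersection of probability-one events has probability one) gives the claimed conclusion.

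No step is really an obstacle here: the choice $\delta_n=2\log n$ is forced by the requirement that $\sum_n e^{-\delta_n}$ be summable, and the only thing to check is that the resulting bound, once divided by $\log n/\sqrt{n}$, still vanishes---which it does because the dominant term $\sqrt{2\sigma^2\delta_n/n}$ becomes $O(1/\sqrt{\log n})$.
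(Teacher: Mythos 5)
Your proof is correct and follows essentially the same route as the paper's: apply the concentration inequality with a deviation/confidence level scaled so that the failure probabilities are summable, then conclude via Borel--Cantelli and a countable intersection over $\eps=1/k$. The only cosmetic difference is that you invoke the derived bound \eqref{bernie} with $\delta_n=2\log n$, whereas the paper plugs the deviation $\eps\log n/\sqrt n$ directly into the Pinelis inequality \eqref{ineq_prob} and checks that the resulting exponent $A(n,\eps)/\log n\to\infty$; the two are dual formulations of the same estimate.
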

\begin{proof}
We continue with the notations in the proof of Lemma \ref{conc_lemma}. By \eqref{ineq_prob}, for all $\eps >0$ we have
\[
\PP{\frac{\sqrt{n}}{\log n} \nor{\frac1 n \sum_{i=1}^n Z_i - \mu}_\vv > \eps} = \PP{\nor{\frac{1}{n}\sum_{i=1}^n Y_i}_\vv > \eps\,\frac{\log
    n}{\sqrt{n}}}\leq
2e^{-A(n,\eps)}=2\left(\frac{1}{n}\right)^{\frac{A(n,\eps)}{\log
    n}},
\]
with 
\[
A(n,\eps) {=} \frac{\eps^2\log^2
  n}{\sigma^2+M\eps\frac{\log
    n}{\sqrt{n}}+\sigma\sqrt{\sigma^2+2M\eps\frac{\log n}{\sqrt{n}}}}.
\]
It follows that
\[
\sum_{n\geq 1} \PP{\frac{\sqrt{n}}{\log n} \nor{\frac1 n \sum_{i=1}^n Z_i - \mu}_\vv > \eps} \leq 2\,\sum_{n\geq
  1}\left(\frac{1}{n}\right)^{\frac{A(n,\eps)}{\log n}}.
\]
For all $\eps>0$,  $\lim_{n\to\infty} A(n,\eps)/\log n =
  +\infty$, so that
the series $\sum_{n\geq
  1} n^{-A(n,\eps)/\log n}$ is
convergent, and Borel-Cantelli lemma gives the result.
\end{proof}
The following inequality is given in \cite{cade07} and we report its proof for completeness.
\begin{lem}\label{conce2}
If Assumption~\ref{A} holds true, then for all $\delta > 0$ we have
\begin{equation*}\label{degree_conc}
\nor{(T+\la)^{-1}(T-T_n)}_{\mathcal S_2}\leq \left( \frac{\delta}{n\la}+\sqrt{\frac{2\delta{\cal N}(\la)}{n\la}} \right)
\end{equation*}
with probability at least $1-2e^{-\delta}$. 
\end{lem}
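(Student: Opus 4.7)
The plan is to apply the Hilbert-space concentration inequality of Lemma~\ref{conc_lemma}, with the ambient Hilbert space being $\mathcal{S}_2$ and with random variables $Z_i : \Omega \to \mathcal{S}_2$ defined by
\[
Z_i(\omega) = (T+\la)^{-1}\,K_{x_i}\otimes K_{x_i}, \qquad \omega=(x_j)_{j\geq 1}.
\]
These are i.i.d.\ (by the product structure of $\mathbb{P}$) and measurable by Lemma~\ref{lem:app1} composed with the bounded linear map $A\mapsto (T+\la)^{-1}A$. Moreover, using $\mathbb{E}[K_{x_i}\otimes K_{x_i}] = T$, linearity of expectation gives $\mathbb{E}[Z_i]=(T+\la)^{-1}T$, so that
\[
\frac{1}{n}\sum_{i=1}^n Z_i - \mathbb{E}[Z_1] = (T+\la)^{-1}(T_n - T).
\]
Thus a deviation inequality for the empirical mean of the $Z_i$'s directly yields the statement.

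Next I would compute the almost sure bound $M$ and the second-moment bound $\sigma^2$ required by Lemma~\ref{conc_lemma}. The operator $(T+\la)^{-1}(K_{x_i}\otimes K_{x_i})$ is rank one and equals $\bigl((T+\la)^{-1}K_{x_i}\bigr)\otimes K_{x_i}$, hence
\[
\nor{Z_i}_{\mathcal{S}_2} = \nor{(T+\la)^{-1}K_{x_i}}\,\nor{K_{x_i}} = \nor{(T+\la)^{-1}K_{x_i}},
\]
using $\nor{K_{x_i}}=1$ from Assumption~\ref{A}.\ref{A4}). From $\scal{(T+\la)^{-2}K_{x_i}}{K_{x_i}} \leq \la^{-1}\scal{(T+\la)^{-1}K_{x_i}}{K_{x_i}} \leq \la^{-1}$ I get $\nor{Z_i}_{\mathcal{S}_2}\leq \la^{-1/2}\cdot 1$, so actually $\nor{Z_i}_{\mathcal{S}_2}\leq 1/\la$ in the coarser form, which is what I need for the variable $M$. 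For the second moment, I use the trace trick $\mathbb{E}\scal{AK_{x_i}}{K_{x_i}} = \tr{AT}$ with $A=(T+\la)^{-2}$ to obtain
\[
\mathbb{E}\nor{Z_i}_{\mathcal{S}_2}^2 = \tr{(T+\la)^{-2}T} \leq \la^{-1}\tr{(T+\la)^{-1}T} = \frac{\mathcal{N}(\la)}{\la},
\]
so $\sigma^2 \leq \mathcal{N}(\la)/\la$.

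Finally, inserting $M=1/\la$ and $\sigma^2=\mathcal{N}(\la)/\la$ into the Bernstein-type bound~\eqref{bernie} of Lemma~\ref{conc_lemma} gives, with probability at least $1-2e^{-\delta}$,
\[
\nor{(T+\la)^{-1}(T-T_n)}_{\mathcal{S}_2} \leq \frac{\delta}{n\la} + \sqrt{\frac{2\delta\,\mathcal{N}(\la)}{n\la}},
\]
which is the claim. The only mildly delicate step is the variance computation, where one must be careful to identify $\mathbb{E}\nor{Z_i}_{\mathcal{S}_2}^2$ with $\tr{(T+\la)^{-2}T}$ via the rank-one structure of $K_{x_i}\otimes K_{x_i}$ and the identity $\mathbb{E}\scal{AK_{x_i}}{K_{x_i}} = \tr{AT}$; everything else is routine and follows from spectral calculus and the definition of $\mathcal{N}(\la)$.
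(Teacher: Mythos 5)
Your proof is essentially the paper's own: define $Z_i = (T+\la)^{-1}K_{x_i}\otimes K_{x_i}$, identify $\mathbb{E}[Z_i]=(T+\la)^{-1}T$ and $(1/n)\sum_i Z_i = (T+\la)^{-1}T_n$, bound the $\mathcal S_2$-norm and second moment, and invoke the Bernstein-type inequality of Lemma~\ref{conc_lemma}. The variance computation $\mathbb{E}\nor{Z_i}^2_{\mathcal S_2}=\tr{(T+\la)^{-2}T}\leq\mathcal N(\la)/\la$ is exactly right. One small slip: the chain $\scal{(T+\la)^{-2}K_{x_i}}{K_{x_i}}\leq \la^{-1}\scal{(T+\la)^{-1}K_{x_i}}{K_{x_i}}\leq \la^{-1}$ is wrong in its last step, since $\scal{(T+\la)^{-1}K_{x_i}}{K_{x_i}}$ is only bounded by $\la^{-1}$, not by $1$; thus the claimed $\nor{Z_i}_{\mathcal S_2}\leq \la^{-1/2}$ does not hold. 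The correct almost-sure bound is $\nor{Z_i}_{\mathcal S_2}\leq\nor{(T+\la)^{-1}}_\infty\nor{K_{x_i}\otimes K_{x_i}}_{\mathcal S_2}\leq 1/\la$, which is precisely what you then feed in as $M$, so the conclusion is unaffected.
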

\begin{proof}
Let $(\Omega,\A{\Omega},{\mathbb P})$ be the probability space defined at the beginning of Section \ref{sec:misurismi}. For all $i\geq 1$ we define
  the random variable $Y_i:\Omega\to \mathcal S_2 $ as 
\[Y_i(\omega)=(T+\la)^{-1} (K_{x_i}\otimes K_{x_i})\qquad \omega=(x_j)_{j\geq 1},\]
which is measurable by Lemma \ref{lem:app1}. Then, we have $\nor{Y_i}_{\mathcal S_2}\leq 1 /\la$ almost surely, 
$\EE{Y_i}=(T+\la)^{-1}T$, $(1/n) \sum_{i=1}^n Y_i=(T+\la)^{-1}T_n$ and 
\begin{align*} 
\EE{\nor{Y_i}^2_{\mathcal S_2}} & = \int_\Omega \tr{Y_i(\omega)^\ast Y_i(\omega)} d\PP{\omega} = \int_X
    \tr{(T+\la)^{-2}(K_x\otimes K_x)} d\rho(x) \\
&=\tr{(T+\la)^{-2}T}\leq \nor{(T+\la)^{-1}}_{\infty}\tr{(T+\la)^{-1}T} \leq \frac{{\cal N}(\la)}{\la},
\end{align*}
where we have bounded the operator norm $\nor{(T+\la)^{-1}}_{\infty}$ by $1/\la$.
The result follows applying Lemma \ref{conc_lemma}.
\end{proof}

\section*{Acknowledgement} 
A.~T.~acknowledges the financial support of the Italian
Ministry of Education, University and Research (FIRB project
RBFR10COAQ). L.~R.~acknowledges the financial support of the Italian Ministry of Education, University and Research (FIRB project RBFR12M3AC).



\end{document}